\newtheorem{theorem}{Theorem}
\newtheorem{definition}[theorem]{Definition}
\newtheorem{remark}{Remark}
\title{Tab-PET: Graph-Based Positional Encoding for Tabular Transformers}
\author{
    %Authors
    % All authors must be in the same font size and format.
    Written by AAAI Press Staff\textsuperscript{\rm 1}\thanks{With help from the AAAI Publications Committee.}\\
    AAAI Style Contributions by Pater Patel Schneider,
    Sunil Issar,\\
    J. Scott Penberthy,
    George Ferguson,
    Hans Guesgen,
    Francisco Cruz\equalcontrib,
    Marc Pujol-Gonzalez\equalcontrib
}
\title{Tab-PET: Graph-Based Positional Encodings for Tabular Transformers}
\author {
    Author Name
}
\title{Tab-PET: Graph-Based Positional Encodings for Tabular Transformers}
\author {
    % Authors
    Yunze Leng\textsuperscript{\rm 1},
    Rohan Ghosh\textsuperscript{\rm 1},
    Mehul Motani\textsuperscript{\rm 1, \rm 2}
}
\begin{document}

\maketitle

\begin{abstract}

Supervised learning with tabular data presents unique challenges, including low data sizes, the absence of structural cues, and heterogeneous features spanning both categorical and continuous domains. Unlike vision and language tasks, where models can exploit inductive biases in the data, tabular data lacks inherent positional structure, hindering the effectiveness of self-attention mechanisms. While recent transformer-based models like TabTransformer, SAINT, and FT-Transformer (which we refer to as 3T) have shown promise on tabular data, they typically operate without leveraging structural cues such as positional encodings (PEs), as no prior structural information is usually available. In this work, we find both theoretically and empirically that structural cues, specifically PEs can be a useful tool to improve generalization performance for tabular transformers. We find that PEs impart the ability to reduce the effective rank (a form of intrinsic dimensionality) of the features, effectively simplifying the task by reducing the dimensionality of the problem, yielding improved generalization. To that end, we propose Tab-PET (PEs for Tabular Transformers), a graph-based framework for estimating and inculcating PEs into embeddings. Inspired by approaches that derive PEs from graph topology, we explore two paradigms for graph estimation: association-based and causality-based. We empirically demonstrate that graph-derived PEs significantly improve performance across 50 classification and regression datasets for 3T. Notably, association-based graphs consistently yield more stable and pronounced gains compared to causality-driven ones. Our work highlights an unexpected role of PEs in tabular transformers, revealing how they can be harnessed to improve generalization.
% The code and technical appendix are available at https://github.com/kentridgeai/Tab-PET.
% \cite{LengTabPET}
\end{abstract}

% Uncomment the following to link to your code, datasets, an extended version or similar.
% You must keep this block between (not within) the abstract and the main body of the paper.
\begin{links}
    \link{Code}{https://github.com/kentridgeai/Tab-PET}
    % \link{Datasets}{https://aaai.org/example/datasets}
    \link{Extended version}{https://arxiv.org/abs/2511.13338}
\end{links}

\begin{figure*}[t]
    \centering
    \includegraphics[width=1\linewidth]{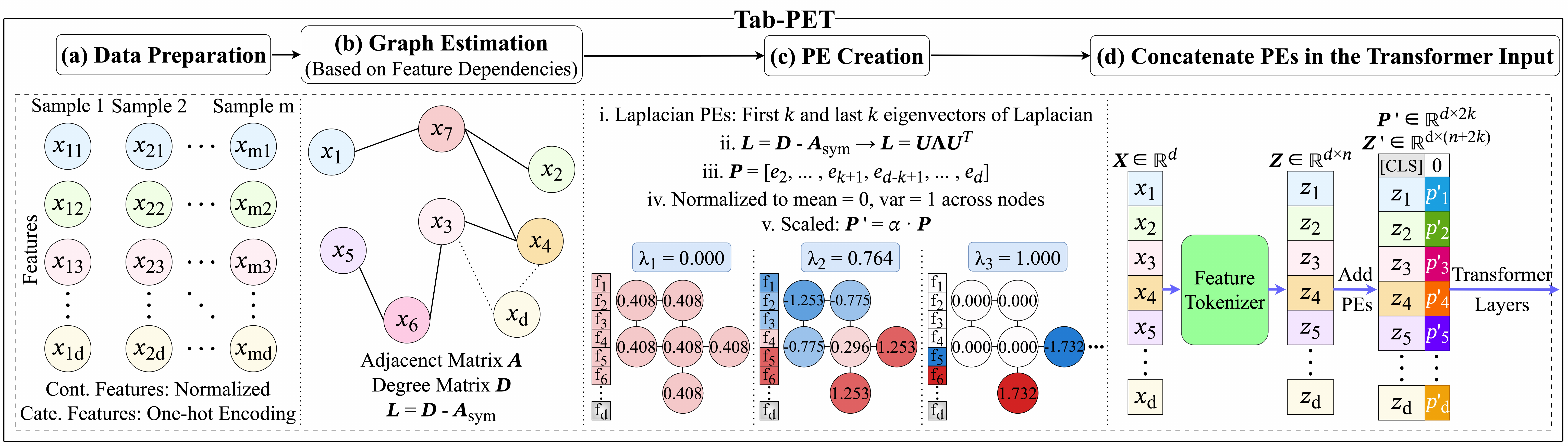}
    \caption{Tab-PET framework for integrating PEs in tabular transformers. 
(a) Categorical features are one-hot encoded and continuous features are normalized. 
(b) A feature-wise graph is estimated based on intra-feature dependencies, capturing relational structure among dimensions. 
(c) Graph Laplacian eigenvectors (examples shown) are extracted to form fixed PEs and scaled using the hyperparameter $\alpha$ to emphasize the degree of importance. 
(d) These encodings are concatenated with standard embeddings and fed into transformer layers.}
    \label{fig:flowchart}
\end{figure*}

\section{Introduction}

Tabular data remains one of the most prevalent formats in applied machine learning, spanning domains from healthcare to finance and recommender systems. Yet, learning from tabular data presents unique challenges that distinguish it from vision, language, and audio modalities. First, tabular datasets often suffer from scarcity of data: many real-world problems provide only hundreds to thousands of training examples, limiting model capacity and generalization \cite{shavitt2018regularization}. Second, high dimensionality exacerbates the problem: each sample may contain dozens or hundreds of heterogeneous features (especially after one-hot encoding the categorical variables), making interactions sparse and difficult to model. Compounding this is the heterogeneous nature of the features: categorical and continuous variables coexist, yet require distinct treatment in both preprocessing and architectural design \cite{huang2020tabtransformer}. Crucially, tabular data lacks the inductive biases that underpin recent deep learning breakthroughs: unlike images it has no spatial locality, unlike language it has no sequential ordering, and unlike audio it has no spectral coherence. This absence of structure directly impacts sample efficiency, as models must learn dependency structure from scratch, making the problems of data scarcity and high dimensionality even more pronounced. As a result, tabular learning remains an active area of research \cite{gorishniy2021revisiting, kadra2021well}.

Modalities like vision and audition benefit from inherent structural priors (spatial locality/temporal continuity). In vision, CNNs exploit translational symmetry by applying shared filters across spatial patches, embedding a strong inductive bias toward local structure. Even transformer-based models in vision and audition, such as ViT and AST, preserve this bias by tokenizing inputs into fixed-size patches or frames, thereby retaining partial spatial or temporal invariance \cite{dosovitskiy2020image, gong2021ast}. In contrast, language models introduce structure into the self-attention mechanism via positional encoding (PE), which enables the model to distinguish token order and learn position-sensitive dependencies \cite{vaswani2017attention}. This technique has been adapted to ViTs, where positional embeddings help recover spatial relationships lost during patch flattening \cite{chu2021we, xu2021vitae}. These architectural strategies demonstrate that injecting structure, whether through convolution, patching, or PEs, can be integral for sample-efficient learning in domains where raw data lacks explicit structural organization.

A foundational limitation of self-attention in transformers is that it operates over unordered inputs. Without extra information, the mechanism treats all tokens or features as equally exchangeable, encoding no preference for proximity or sequence. This lack of structural bias does not align with natural language, which is inherently organized: grammatical relations and syntactic dependencies rely on the relative positions of words and phrases \cite{vaswani2017attention}.

To introduce this bias directly into the model's computation, PEs modify each token by appending a position-specific vector that conveys its location in the input. These vectors, whether fixed or learned, alter the dot product between queries and keys, reshaping the attention pattern. Tokens close in sequence typically carry similar position vectors, which causes the self-attention mechanism to favor interactions among them. This preference allows the model to construct higher-order representations while discouraging irrelevant pairings unless they are statistically warranted. The outcome is a soft inductive bias that recovers local structure otherwise missing from the architecture.

Despite growing interest in PEs across sequence and graph-based domains, their application to tabular data remains largely unexplored. This stems from the lack of inherent structural priors in tabular inputs, i.e., feature order is arbitrary and structural relationships among features are rarely specified upfront. Yet, given the challenges facing tabular classification and regression tasks---limited data, high dimensionality, and feature heterogeneity---the question of whether we can impose meaningful inductive bias via PEs arises. The current consensus in literature is that PEs cannot benefit tabular transformers because tabular data is structure-less and has inputs of different types, in contrast to vision and language \cite{somepalli2021saint}. In this work, however, we show that PEs can serve to control learning complexity, reducing the dimensionality of features extracted by self-attention layers. We estimate graphical structures that capture inter-feature associations from the data, and then derive PEs from the eigenvectors of the graph Laplacians. These graph-derived encodings are used to augment the original embeddings, enabling a structured inductive bias where none existed natively. We refer to this as Positional Encodings for Tabular Transformers (Tab-PET).

\section{Contributions}

Our work has the following contributions:

\begin{enumerate}[nosep,leftmargin=*,wide = 0pt]

    \item \textbf{Tab-PET:} We propose a principled method for constructing PEs in tabular domains. The first step involves learning a feature graph that captures inter-feature associations. We explore two approaches: causality-based and association-based. The second step involves generating feature-wise encodings using the Laplacian eigenvectors of the learned feature graph. These embeddings are then used to augment input embeddings in the transformer.

    \item \textbf{Theoretical Motivation via Rank Analysis:} We find that the use of PEs imparts the ability to reduce the effective rank of the embeddings within transformer architectures, and more so when they are aligned with the structure of the data. Empirical tests confirm these findings. 

    \item \textbf{Empirical Evaluation Across Benchmarks:} We apply our proposed approach to leading tabular transformer models, including TabTransformer, SAINT, and FT-Transformer. Tab-PET demonstrates consistent improvements across 50 classification and regression datasets.

    \item \textbf{Ablation Studies and Performance Analysis:} We conduct ablations, statistical significance testing and comparisons with learnable PEs to demonstrate the effectiveness of our approach.

\end{enumerate}

\section{Background}

\subsection{Tabular Transformers}
The application of neural networks to tabular data has made strides in recent years. Architectures such as ResNet-like \cite{he2016identity}, NODE \cite{popov2019neural}, and SNN \cite{klambauer2017self} have demonstrated strong performance despite a fundamental challenge: tabular datasets tend to be small in size and lack the rich structural priors present in language or vision domains. 

Given the widespread success of transformer architectures in language and vision domains \cite{vaswani2017attention,dosovitskiy2020image}, there has been growing interest in adapting self-attention mechanisms to tabular data. Models such as TabTransformer \cite{huang2020tabtransformer} and FT-Transformer \cite{gorishniy2021revisiting} attempt this adaptation by embedding features into a higher-dimensional space via projection layers. FT-Transformer, for instance, utilizes a feature-tokenizer to create learnable embeddings for both categorical and continuous features. TabTransformer applies tokenization only to categorical features, treating continuous values collectively through concatenated representations. These embedding transformations enable application of multi-head self-attention followed by classification heads.

Architectures like SAINT \cite{somepalli2021saint} extend attention across intra-sample and inter-sample domains, yielding strong performance. Meanwhile, language-guided models designed for spreadsheet-style inputs (e.g., TAPAS \cite{herzig2020tapas}) incorporate semantic cues from column headers and contextual metadata. In our work, we focus exclusively on scenarios where only feature names and raw values are provided, discarding external linguistic context.

\subsection{Graph Estimation Approaches}

A variety of methods have been proposed to estimate graphical structures over tabular data, where each feature dimension is treated as a node in a graph. Broadly, these approaches fall into two categories: \textit{causality-based} and \textit{association-based}.

\noindent\textbf{Causality-Based Graphs:} Causality-based methods aim to infer directed edges between features that reflect underlying generative mechanisms: i.e., an edge from feature $i$ to feature $j$ implies that $i$ is a cause of $j$. These models often assume a linear structural equation model (SEM) of the form:
% \begin{equation}
%     \mathbf{x} = \mathbf{W} \mathbf{x} + \boldsymbol{\epsilon},
% \end{equation}
$\mathbf{x} = \mathbf{W} \mathbf{x} + \boldsymbol{\epsilon},$
where $\mathbf{W}$ is a weighted adjacency matrix encoding causal relationships, and $\boldsymbol{\epsilon}$ is a noise vector. Classical approaches such as LiNGAM \cite{shimizu2006linear} rely on non-Gaussianity assumptions to identify causal directions. More recent methods like NOTEARS and its variants \cite{zheng2018dags,lee2019scaling} reformulate structure learning as a continuous optimization problem, minimizing reconstruction error while enforcing acyclicity constraints via smooth penalties. These approaches have enabled scalable and differentiable learning of directed acyclic graphs (DAGs) from observational data.

\noindent\textbf{Association-Based Graphs:} Association-based methods construct graphs by quantifying statistical dependence between feature pairs. A common formulation sets the edge weight $w_{ij}$ between features $x_i$ and $x_j$ as:
\begin{equation} \label{eq:assoc_weights}
    w_{ij} = \rho(x_i, x_j),
\end{equation}
where $\rho$ is a measure of association, such as Pearson correlation, Spearman correlation, mutual information (MI), or distance. The Chow-Liu algorithm \cite{chow1968approximating} is a canonical example, which uses pairwise MI to construct a maximum-weight spanning tree that approximates the joint distribution. Notably, while Chow-Liu ensures the resulting graph is a tree-structured DAG, it does not model generative mechanisms explicitly. In general, association-based graphs prioritize statistical dependence over causal interpretability.

\subsection{Positional Encoding Integration with Graphs}

Our objective in this work is to infer PEs for each feature based on the underlying structure of the data, which we estimate via graph estimation (see part (b) of Figure~\ref{fig:flowchart}). Graph-based PEs have been widely studied in the literature, particularly in the context of graph neural networks and graph transformers. These methods can be broadly categorized into two families: \textit{fixed} and \textit{learnable} PEs.

\noindent\textbf{Fixed Positional Encodings:} Fixed PEs typically leverage the spectral properties of the graph Laplacian. The most common strategy involves computing its eigenvectors and using them to encode node positions:
(1) \textit{First $k$ eigenvectors}: Dwivedi and Bresson \cite{dwivedi2020generalization} propose using the lowest-frequency components of the Laplacian spectrum, which capture global graph structure; (2) \textit{All eigenvectors}: Ito et al. \cite{ito2025learning} argue that using the full spectrum preserves all structural information, avoiding frequency truncation; (3) \textit{First and last $k$ eigenvectors}: The same work \cite{ito2025learning} shows that only combining low- and high-frequency components yields robust encodings for both homophilous and heterophilous graphs.
% Several variants exist:
% \begin{itemize}[nosep,leftmargin=*,wide = 0pt]
%     \item \textbf{First $k$ eigenvectors}: Dwivedi and Bresson \cite{dwivedi2020generalization} propose using the lowest-frequency components of the Laplacian spectrum, which capture global graph structure.
%     \item \textbf{All eigenvectors}: Ito et al. \cite{ito2025learning} argue that using the full spectrum preserves all structural information, avoiding frequency truncation.
%     \item \textbf{First and last $k$ eigenvectors}: The same work \cite{ito2025learning} shows that combining low- and high-frequency components yields robust encodings for both homophilous and heterophilous graphs.
% \end{itemize}

\noindent\textbf{Learnable Positional Encodings:} Learnable PEs aim to optimize the encoding process by adapting to task-specific signals. Two notable approaches include: (1) \textit{Elastic PEs}: Liu et al. \cite{canturk2023graph} propose learning linear projections of Laplacian eigenvectors, enabling flexible adaptation to graph structure; (2) \textit{Eigenvector weighting}: Ito et al. \cite{ito2025learning} introduce a method that learns the importance of each eigenvector via backpropagation, allowing the model to emphasize relevant spectral components.

% \noindent\textbf{Learnable Positional Encodings:} Learnable PEs aim to optimize the encoding process by adapting to task-specific signals. Two notable approaches include:

% \begin{itemize}[nosep,leftmargin=*,wide = 0pt]
%     \item \textbf{Elastic PEs}: Liu et al. \cite{canturk2023graph} propose learning linear projections of Laplacian eigenvectors, enabling flexible adaptation to graph structure.
%     \item \textbf{Eigenvector weighting}: Ito et al. \cite{ito2025learning} introduce a method that learns the importance of each eigenvector via backpropagation, allowing the model to emphasize relevant spectral components.
% \end{itemize}

\noindent\textbf{Our Approach:} In this work, we adopt the fixed PE strategy using the first and last $k$ eigenvectors of the graph Laplacian. This choice avoids increasing parametric complexity and also ensures fair comparisons across architectures. Furthermore, empirically, we find that fixed PEs outperform learnable PEs (from scratch) on tabular datasets. 
Moreover, the use of both low- and high-frequency components has been shown to improve expressiveness across graph types, particularly in heterophilous settings \cite{ito2025learning}. 
% By leveraging this spectral diversity without introducing additional parameters, our method maintains architectural simplicity while enhancing inductive bias.

\section{Motivation}

\subsection{Theoretical Results: PEs and Effective Rank}

In this section, we theoretically study the ability of PEs to control the intrinsic dimensionality of the learning problem. There have been many significant works that find that lower rank (i.e. low intrinsic dimensionality) of features often leads to better generalization performance \cite{ghosh2023local,huh2021low,arora2019implicit}.   A significant study that explores the link between intrinsic dimension and generalization finds that the intrinsic dimensionality of the data controls both the approximation error (training fit) and the generalization error \cite{nakada_ID_generalization}. Thus, the ability of an architecture to reduce the dimensionality of the learning task via low rank features is a positive sign from a generalization perspective. 

In what follows, we provide results that show the ability PEs to directly reduce the effective rank \cite{roy2007effective} of the CLS output embeddings within FT-Transformers. Note that the CLS output eventually is used for the final prediction via fully connected layers. Proofs are provided in technical appendix \ref{app:9} \cite{LengTabPET}. We reiterate the definition of effective rank below. 

\begin{definition}[Effective Rank]
For a matrix $\mathbf{X} \in \mathbb{R}^{n \times d}$ representing CLS embeddings from $n$ samples, the effective rank is defined as:
\begin{equation}
r_{\text{eff}}(\mathbf{X}) = \exp\left(-\sum_{i=1}^{r} \tilde{\sigma}_i \log \tilde{\sigma}_i\right),
\end{equation}
where $\tilde{\sigma}_i = \sigma_i / \sum_{j=1}^{r} \sigma_j$ are the normalized singular values obtained from SVD decomposition $\mathbf{X} = \mathbf{U}\boldsymbol{\Sigma}\mathbf{V}^T$, and $r$ is the rank of $\mathbf{X}$. This formulation captures the intrinsic dimensionality of the learned representations through the Shannon entropy of the singular value distribution.
\end{definition}
\setcounter{theorem}{0}

First, we provide a result in the general case where the input dimensions are i.i.d. 
\begin{theorem}\label{thm:1}[Effective Rank under Random Inputs]
    Let \( x \in \mathbb{R}^d \) be an input vector to a single-layer, single-head FT-Transformer with components \( x_i \sim \text{i.i.d.} \) and \( x_i \in (0,1) \). Let \( d_T \) denote the token dimension (inclusive of concatenated position encodings). Let \( q\in \mathbb{R}^{d_T} \) denote the learnable CLS token embedding, and \( p_i \in \mathbb{R}^{d_p} \) be the positional encodings for each input dimension. Assume the scaled positional encodings \( p'_i = \alpha p_i \) are used, where \( \alpha > 0 \).  Given the query, key and value matrices $Q,K,V$, where $K$ can be decomposed as $[K_x;K_p]$, where $K_x \in \mathbb{R}^{d \times d_T}, K_p \in \mathbb{R}^{d_p \times d_T}$, and similarly for $V$. Suppose the following conditions hold: $\max_i \langle Q^Tq,K_p^Tp_i \rangle - \max_{j \neq i} \langle  Q^Tq,K_p^Tp_j \rangle = \tau,$ and the norm of the query-key matrices $Q,K$ and CLS embedding $q$ are all bounded by $c_Q,c_K$ and $c_q$ respectively. Lastly, assume that the tokenizer weights $w_i$ have the same norm and the value matrix $V$ is norm preserving and satisfies $V_p = 0$.
Define
\begin{equation}
C_\alpha = \exp\left( \frac{\alpha \tau - 2 c_Kc_Q c_q }{\sqrt{d_T}} \right).
\end{equation}
Then the effective rank \( r_{\mathrm{eff}} \) of the CLS token output after self-attention satisfies
\begin{equation}
r_{\mathrm{eff}} \leq \left( C_\alpha + d \right) \cdot \exp\left( -\frac{C_\alpha}{C_\alpha + d} \cdot \log C_\alpha \right).
\end{equation}
In the regime where \( C_\alpha \gg d \), this simplifies to $r_{\mathrm{eff}} \approx 1 + \frac{d}{C_\alpha}.$
\end{theorem}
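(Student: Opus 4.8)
The plan is to make the $\alpha$-dependence explicit by tracking how the softmax attention of the CLS token concentrates onto a single feature token, and then to convert that concentration into an upper bound on the singular-value entropy of the stacked CLS outputs. First I would write the CLS output in closed form. Using $K=[K_x;K_p]$ and $V=[V_x;V_p]$ with $V_p=0$, the token for feature $i$ is $t_i=[w_i x_i;\alpha p_i]$, so its key is $K_x^T w_i x_i+\alpha K_p^T p_i$ while its value reduces to $u_i:=V_x^T w_i x_i$ (the positional part vanishes because $V_p=0$). Writing $s_i$ for the softmax weight the CLS query $Q^Tq$ places on token $i$, the output is $z=\sum_i s_i\,(V_x^T w_i)\,x_i$. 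Stacking $n$ i.i.d. draws row-wise gives $\mathbf{X}=AU$, where $A_{mi}=s_i^{(m)}x_i^{(m)}$ and the rows of $U$ are the fixed directions $V_x^T w_i$; crucially, only the scalar coefficients depend on the sample, not the directions.

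Second, I would establish attention concentration. I split each scaled logit $\ell_i=\langle Q^Tq,K^T t_i\rangle/\sqrt{d_T}$ into a feature part $f_i=\langle Q^Tq,K_x^T w_i x_i\rangle$ and a positional part $\alpha g_i=\alpha\langle Q^Tq,K_p^T p_i\rangle$. Cauchy--Schwarz with the bounds $\|Q\|\le c_Q$, $\|K\|\le c_K$, $\|q\|\le c_q$, together with $x_i\in(0,1)$ and the common (normalized) value of $\|w_i\|$, gives $|f_i|\le c_Qc_Kc_q$, hence $|f_{i^*}-f_j|\le 2c_Qc_Kc_q$ for the maximizer $i^*$ of $g$. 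Combined with the gap hypothesis $g_{i^*}-\max_{j\ne i^*}g_j=\tau$ this yields $\ell_{i^*}-\ell_j\ge(\alpha\tau-2c_Qc_Kc_q)/\sqrt{d_T}=\log C_\alpha$ for all $j\ne i^*$. Therefore $s_j/s_{i^*}\le 1/C_\alpha$, and bounding the number of competing tokens by $d$ gives $s_{i^*}\ge C_\alpha/(C_\alpha+d)$ and $\sum_{j\ne i^*}s_j\le d/(C_\alpha+d)$.

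Third, and this is where I expect the real work to be, I would translate coefficient concentration into singular-value concentration of $\mathbf{X}=AU$. The assumptions that $V$ is norm-preserving and that the $w_i$ share a common norm are what keep the value frame $u_i=V_x^Tw_i$ of equal length with a controlled Gram structure, so that the spectrum of $\mathbf{X}$ is governed by the column-mass profile of the coefficient matrix $A$ rather than distorted by $V_x$. Using the sample-independent inequalities $\sigma_1(\mathbf{X})\ge\|a_{i^*}\|$ and $\sum_k\sigma_k(\mathbf{X})=\|\mathbf{X}\|_*\le\sum_i\|a_i\|$ (nuclear-norm subadditivity), together with the pointwise bounds $s_{i^*}^{(m)}\ge C_\alpha/(C_\alpha+d)$, $s_j^{(m)}\le 1/C_\alpha$, and the i.i.d. structure that equalizes the column energies $\|x_i\|$, I would show the top normalized singular value obeys $\tilde\sigma_1=\sigma_1/\sum_k\sigma_k\ge C_\alpha/(C_\alpha+d)$. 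The delicate points are controlling the off-diagonal correlations among the columns of $A$ and justifying that the $u_i$ do not inflate the nuclear norm; this is the main obstacle, since effective rank is a global spectral quantity while the concentration estimate is only pointwise in the samples.

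Finally, I would close with an entropy-maximization step. With $\tilde\sigma_1=p$ fixed and the remaining mass $1-p$ spread over at most $d$ nonzero singular values, the entropy $H=-\sum_k\tilde\sigma_k\log\tilde\sigma_k$ is largest when those values are equal, so $H\le -p\log p-(1-p)\log(1-p)+(1-p)\log d$. Differentiating shows this bound is decreasing in $p$ whenever $p>1/(d+1)$, which holds because $C_\alpha>1$; substituting the worst case $p=C_\alpha/(C_\alpha+d)$ collapses it to $H\le\log(C_\alpha+d)-\tfrac{C_\alpha}{C_\alpha+d}\log C_\alpha$, i.e. $r_{\mathrm{eff}}=e^{H}\le(C_\alpha+d)\exp(-\tfrac{C_\alpha}{C_\alpha+d}\log C_\alpha)$. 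The stated asymptotic then follows by approximating $\tfrac{C_\alpha}{C_\alpha+d}\approx 1$ in the exponent when $C_\alpha\gg d$, giving $\exp(-\tfrac{C_\alpha}{C_\alpha+d}\log C_\alpha)\approx 1/C_\alpha$ and hence $r_{\mathrm{eff}}\approx(C_\alpha+d)/C_\alpha=1+d/C_\alpha$.
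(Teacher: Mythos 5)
Your attention-concentration analysis is identical to the paper's: the same decomposition of the logit into a feature part bounded by $c_Qc_Kc_q$ via Cauchy--Schwarz and a positional part with gap $\alpha\tau$, the same score difference $\alpha\tau-2c_Qc_Kc_q$, and the same weights $s_{i^*}=C_\alpha/(C_\alpha+d)$, $s_j=1/(C_\alpha+d)$. Where you diverge is the step that turns this into an effective-rank bound. The paper takes the short route: it assumes the direction vectors $x_iv_i$ are orthogonal with equal norm and then simply \emph{identifies} the effective rank of the single-sample CLS output with $\exp(H(\alpha))$, the exponential entropy of the attention weights, so the final formula drops out of one entropy computation. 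You instead work with the stacked matrix $\mathbf{X}=AU\in\mathbb{R}^{n\times d}$ --- which is actually what the paper's own Definition 1 of effective rank refers to --- lower-bound the top normalized singular value by $C_\alpha/(C_\alpha+d)$ using $\sigma_1\ge\|a_{i^*}\|$ and nuclear-norm subadditivity, and then add an entropy-maximization lemma (spread the residual mass uniformly over $d$ values) to recover the identical bound. Your route buys a statement about the empirical spectrum rather than a definitional identification, and your closing optimization step correctly reproduces the paper's formula; its cost is the step you yourself flag as the main obstacle --- controlling the correlations among the columns of $A$ and the Gram structure of the $u_i$ so that $\tilde\sigma_1\ge C_\alpha/(C_\alpha+d)$ actually holds. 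That is a genuine open seam in your write-up, but it is exactly the point the paper closes by fiat (orthogonality and equal norms of $x_iv_i$), so neither argument resolves it rigorously; yours is simply more explicit about where the idealization lives.
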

\begin{remark}
    Note that when $C_\alpha \gg d$, $r_{\mathrm{eff}} \approx 1 + C e^{-\alpha \tau/\sqrt{d_T}}$. Thus, the effective rank can be significantly reduced when the PEs are weighted higher using larger $\alpha$, but only when $\tau > 0$. When not using any PEs however, one obtains $\tau=0$ as $p_i=[0,0,..0]$. Thus without PEs, effective rank can be significantly larger.  
\end{remark}

We next discuss the case where the input data is not i.i.d dimension-wise but has some underlying structure. 

\begin{theorem}\label{thm:2}[Effective Rank under Structured Inputs]
Consider the same setting as in Theorem \ref{thm:1}, except that the input vector \( x \in \mathbb{R}^d \) is structured as follows: \( d \) is even, and
\begin{equation}
x_i = 
\begin{cases}
\theta & \text{for } i \leq d/2, \\
\theta'  & \text{for } i > d/2,
\end{cases}
\end{equation}
with shared latent variables \( \theta, \theta' \in (0,1) \), and coefficients \( \beta_i, \gamma_i \in \mathbb{R} \). Then the effective rank \( r_{\mathrm{eff}} \) of the CLS token output after self-attention satisfies:

\begin{itemize}
    \item \textbf{(a) Random positional encodings:}
\begin{equation}
    r_{\mathrm{eff}} \leq \left( 2C_\alpha + d \right) \cdot \exp\left( -\frac{2C_\alpha \log(2C_\alpha) + d \log d}{2C_\alpha + d} \right),
\end{equation}
    $\text{which simplifies to } r_{\mathrm{eff}} \approx 1 + \frac{d}{2C_\alpha} \text{ when } C_\alpha \gg d.$

    \item \textbf{(b) Shared positional encodings within groups:}
    If \( p_i \) is fixed for all \( i \leq d/2 \), and is a different fixed vector for \( i > d/2 \), then
\begin{equation}
    r_{\mathrm{eff}} \leq (C_\alpha + 1) \cdot \exp\left( -\frac{C_\alpha}{C_\alpha + 1} \cdot \log C_\alpha \right),
\end{equation}
    
 $\text{which simplifies to } r_{\mathrm{eff}} \approx 1 + \frac{1}{C_\alpha} \text{ for large } C_\alpha.$

\end{itemize}
\end{theorem}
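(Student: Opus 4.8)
The plan is to inherit the attention-concentration machinery of Theorem \ref{thm:1} and then exploit the two-group structure of the input to collapse the signal subspace before substituting into the entropy-based definition of $r_{\mathrm{eff}}$. As before, each softmax score splits into a feature part $\langle Q^Tq,K_x^T(x_iw_i+b_i)\rangle$ and a positional part $\alpha\langle Q^Tq,K_p^Tp_i\rangle$, and since $V_p=0$ only the feature part survives in the value vectors $V^TT_i=x_iV_x^Tw_i+V_x^Tb_i$. Stacking the CLS outputs $\mathrm{CLS}^{(s)}=\sum_i a_i^{(s)}V^TT_i^{(s)}$ over the $n$ samples yields the matrix $\mathbf{X}$ whose singular spectrum I must pin down; the gap hypothesis again forces the dominant-to-subdominant attention ratio to be at least $C_\alpha$.

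The new ingredient is to use $x_i=\theta$ for $i\le d/2$ and $x_i=\theta'$ for $i>d/2$ to factor each sample's signal as $\theta^{(s)}U_A+\theta'^{(s)}U_B$ with $U_A=\sum_{i\le d/2}a_iu_i$, $U_B=\sum_{i>d/2}a_iu_i$, and $u_i=V_x^Tw_i$ (the coefficients $\beta_i,\gamma_i$ only rescale individual feature terms and leave the number of latent directions unchanged). Because each group shares a single latent, the signal occupies an at-most-two-dimensional subspace, so $\mathbf{X}$ is effectively a rank-two object and $r_{\mathrm{eff}}$ reduces to the entropy of a two-point singular-value distribution; the two magnitudes are controlled by the total attention mass each group receives.

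For part (b) I would argue that PEs constant within a group make the positional score identical across that group's members, so attention acts at group granularity: every token of $A$ beats every token of $B$ in exponential score by at least $C_\alpha$, forcing the mass ratio between the groups to be at least $C_\alpha$, and each group collapses to a single rank-one direction. This gives the two-point spectrum $\{C_\alpha,1\}$; substituting $\sum_i\sigma_i=C_\alpha+1$ and $\sum_i\sigma_i\log\sigma_i=C_\alpha\log C_\alpha$ into $r_{\mathrm{eff}}=\exp\!\big(\log(\sum_i\sigma_i)-\tfrac{\sum_i\sigma_i\log\sigma_i}{\sum_i\sigma_i}\big)$ produces the stated bound and its $1+1/C_\alpha$ limit. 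Part (a) is treated by the same reduction, but with distinct PEs the mass within each group is spread over all $d/2$ members and the bias terms $V_x^Tb_i$ add off-latent variation, so that tracking these contributions is what converts the clean $\{C_\alpha,1\}$ spectrum into $\{2C_\alpha,d\}$ and yields the $1+d/(2C_\alpha)$ limit.

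The main obstacle is part (a): unlike part (b) the groups cannot simply be collapsed, so I must upper bound the subdominant singular value of $\mathbf{X}$ on the order of $d$ while certifying the dominant value scales like $2C_\alpha$, correctly accounting for both the within-group attention spread and the latent-induced correlations among the $u_i$. Converting these per-direction magnitude estimates into an upper bound on $r_{\mathrm{eff}}$ additionally requires a monotonicity (majorization / Schur-concavity) argument: since the two-point effective rank increases with the subdominant-to-dominant ratio, I must show the true ratio is at most $d/(2C_\alpha)$ so that the extremal configuration $\{2C_\alpha,d\}$ dominates the entropy — verifying this worst-case bound is the delicate technical point.
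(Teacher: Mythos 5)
Your proposal follows essentially the same route as the paper: reduce the CLS output to the two-dimensional span induced by the shared latents $\theta,\theta'$, bound the group-level attention-mass ratio by $C_\alpha$, and evaluate $r_{\mathrm{eff}}$ as the exponential of the entropy of the resulting two-point distribution, which for part (b) reproduces exactly the paper's $\{C_\alpha,1\}$ computation. The ``delicate point'' you flag in part (a) is handled in the paper simply by taking the entropy-maximizing attention configuration consistent with the $C_\alpha$-dominance constraint --- mass $\tfrac{2C_\alpha}{2C_\alpha+d}$ on the single dominant token and the remainder spread uniformly over the $d/2$ tokens of the other group --- so no separate singular-value or majorization argument is given (or needed) there.
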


\begin{remark}
The above result clearly indicates that the effective rank of the CLS token output of the FT-Transformer depends on whether the PEs have adapted to the structure of the underlying data. When some input dimensions are similar to each other (i.e. Theorem \ref{thm:2}), assigning the same PE to the similar dimensions can significantly reduce the effective rank of the CLS output. Thus, choosing appropriate PEs that follow data structure can significantly reduce the dimensionality of the learning problem, enhancing generalization performance. But this carries some limitations as well. Tasks which intrinsically require larger effective rank to appropriately address, may not benefit from the inclusion of PEs.  
\end{remark}

% Overall approach to deriving this would be a Bayesian perspective. 

% Also mention an assumption about when graph based PEs can help: which is when un-correlated tabular data features are also less likely to interact with each other in predicting another feature (for classification the output label, and for regression just another continuous feature). 

% Have to look into parallels between self-attention (Bayesian self-attention is easier to analyze theoretically) and conditional probabilistic models between the input and the output. 

% We can do some simple simulations (bonus)

\section{Methodology}

In this section, we outline the four main steps involved in estimating and integrating PEs in transformer-based architectures for tabular data. We summarize the following steps in Figure \ref{fig:flowchart} as well. 

% \begin{figure*}[t]
%     \centering
%     \includegraphics[width=1\linewidth]{CameraReady/LaTeX/fig_final/flow_chart_updated.png}
%     \caption{Tab-PET framework for integrating PEs in tabular transformers. 
% (a) Categorical features are one-hot encoded and continuous features are normalized. 
% (b) A feature-wise graph is estimated based on intra-feature dependencies, capturing relational structure among dimensions. 
% (c) Graph Laplacian eigenvectors (examples shown) are extracted to form fixed PEs and scaled using the hyperparameter $\alpha$ to emphasize the degree of importance. 
% (d) These encodings are concatenated with standard embeddings and fed into transformer layers.}

%     \label{fig:flowchart}
% \end{figure*}

\subsection{Data Preparation}
We begin by applying one-hot encoding to all categorical variables. This helps eliminate any implicit structural bias induced by their native ordering based representation. A side effect of this transformation is that the feature dimensionality increases, which impacts the size of the graph estimated in subsequent sections. For continuous variables, we normalize each to have zero mean and unit variance. 

\subsection{Graph Estimation}

After preprocessing, we denote each input sample by the $d$-dimensional feature vector:
\begin{equation}
    \mathbf{x}^{(j)} = [x^{(j)}_1, x^{(j)}_2, \dots, x^{(j)}_d]^\top, \quad j = 1,\dots,m
\end{equation}
where $m$ is the number of samples, and $x^{(j)}_i$ denotes the individual feature dimensions of the processed input. Each feature $x_i$ corresponds to a node in graph, and edges represent statistical or causal dependencies between features.

We explore two primary paradigms for graph learning: \textit{causality-based methods} and \textit{association-based methods}. In the former, we assume a linear structural causal model given by $\mathbf{x} = \mathbf{A} \mathbf{x} + \boldsymbol{\epsilon}$,
%\begin{equation}
%    \mathbf{x} = \mathbf{A} \mathbf{x} + \boldsymbol{\epsilon},
%\end{equation}
where $\mathbf{A}$ is a weighted adjacency matrix representing causal relationships, and $\boldsymbol{\epsilon}$ is an independent noise vector. We apply algorithms such as LiNGAM and NOTEARS to learn this causal graph, resulting in DAGs.

As mentioned earlier, in association-based approaches, we can define the edge weight $w_{ij}$ between nodes $x_i$ and $x_j$ as a function of their statistical dependency: $w_{ij} = \rho(x_i, x_j)$.
% \begin{equation}
%     w_{ij} = \rho(x_i, x_j)
% \end{equation}
Here, we choose $\rho$ from Pearson correlation, Spearman rank correlation, or MI. For MI-based graph estimation, we employ the Chow-Liu algorithm to ensure the resulting structure remains a DAG.

\subsection{Positional Encoding Creation}

Given the estimated graph, we first symmetrize the adjacency matrix to produce an undirected version: $\mathbf{A}_{\text{sym}} = \frac{1}{2} (\mathbf{A} + \mathbf{A}^\top)$.
%\begin{equation}
%    \mathbf{A}_{\text{sym}} = \frac{1}{2} (\mathbf{A} + \mathbf{A}^\top).
%\end{equation}
We then compute the graph Laplacian:
$\mathbf{L} = \mathbf{D} - \mathbf{A}_{\text{sym}}$,
%\begin{equation}
%    \mathbf{L} = \mathbf{D} - \mathbf{A}_{\text{sym}},
%\end{equation}
where $\mathbf{D}$ is the degree matrix. The top-$k$ and bottom-$k$ eigenvectors of $\mathbf{L}$ are selected (excluding the first eigenvector, which is constant valued), normalized to have zero mean and unit variance across nodes, and concatenated to form the PE matrix: $\mathbf{P} = [\mathbf{e}_2, \dots, \mathbf{e}_{k+1}, \mathbf{e}_{d-k+1}, \dots, \mathbf{e}_d].$
% \begin{equation}
%     \mathbf{P} = [\mathbf{e}_2, \dots, \mathbf{e}_{k+1}, \mathbf{e}_{d-k+1}, \dots, \mathbf{e}_d].
% \end{equation}
To modulate the influence of these encodings, we scale them using a hyperparameter $\alpha$:
\begin{equation} \label{eq:alpha_inclusion}
    \mathbf{P}' = \alpha \cdot \mathbf{P}.
\end{equation}
For categorical features with multiple one-hot encoded nodes, we average the individual encodings to yield a consolidated PE vector for the feature.

\subsection{Positional Encoding Integration}

In transformer-based architectures for tabular data, each feature undergoes tokenization to obtain an $n$-dimensional embedding vector. We integrate our estimated PEs $\mathbf{P}'$ by concatenating them with the corresponding feature embedding:
\begin{equation}
    \mathbf{z}_i' = [\mathbf{z}_i ; \mathbf{p}_i'] \in \mathbb{R}^{n + 2k},
\end{equation}
where $\mathbf{z}_i$ is the original embedding for $x_i$ and $\mathbf{p}_i'$ is the scaled PE for $x_i$.  Subsequently, these modified embeddings serve as inputs to the self-attention layers during training.

\begin{algorithm}[t]
\caption{Structure-Controlled Tabular Data Generation} \label{alg:synthetic_generator}
\begin{algorithmic}[1]
\State \textbf{Input:} Feature dimension $d$, number of partitions $k$, number of samples $n$
\State \textbf{Output:} Synthetic dataset $X \in \mathbb{R}^{n \times d}$, output variable $y \in \mathbb{R}^n$
\State \textbf{Group Assignment:} Partition features into $k$ disjoint groups $\{G_1, G_2, \dots, G_k\}$
\State Sample fixed weights $w_f \sim \mathcal{U}(-1, 1)$ for all features $f \in \{1, \dots, d\}$
\State \textbf{Generative Process:}
\For{each sample $t = 1$ to $n$}
    \For{each group $g = 1$ to $k$}
        \State Sample group latent variable $\theta_g^{(t)} \sim \mathcal{U}(-2, 2)$
        \For{each feature $f \in G_g$}
            \State $x_f^{(t)} = \theta_g^{(t)} \cdot w_f + \varepsilon_f^{(t)}, \quad \varepsilon_f^{(t)} \sim \mathcal{N}(0, 0.01)$
        \EndFor
    \EndFor
    \State \textbf{Target Outputs Generation:}
    \State Select fixed group index $g^* \in \{1, \dots, k\}$ (shared across all samples)
    \State Sample $w_t, b \sim \mathcal{U}(-1, 1)$
    \State Compute $y^{(t)} = w_t \cdot \theta_{g^*}^{(t)} + b$
\EndFor
\end{algorithmic}
\end{algorithm}

\section{Synthetic Experiments: Evaluating Structure-Sensitive Positional Encodings}

In this section, we design synthetic experiments to evaluate whether the benefits of 
PEs are inherently tied to the presence of structural relationships within tabular data. Transformers rely on PEs to guide self-attention, yet in tabular domains, structural cues are typically absent. We hypothesize that when feature correlations exist, PEs derived from such associations can improve learning. Conversely, when features are independent, PEs may have limited impact. To explore this, we introduce a synthetic framework where the structure of the dataset can be controlled parametrically.

\subsection{Definition of Structure}
We define structure as the degree of association between features. If all features are independent, no meaningful pairwise relationships exist, and all features are equally unrelated. To simulate a controllable structure, we partition the feature space into $k$ groups where intra-group features share latent correlations while inter-group features remain independent.

\subsection{Data Generation Algorithm}
Given input dimensionality $d$ and partition count $k$, the synthetic data generation is detailed in Algorithm \ref{alg:synthetic_generator}. We note that as $k$ increases, most features end up in their own group, leading to independence and minimal structure. When $k$ is lower, many features share common generative variables, increasing structure. We consider the scenario where the generated dataset embodies a regression problem, where the underlying function is a linear function of one of the partitions. 

% \begin{algorithm}[t]
% \caption{Structure-Controlled Tabular Data Generation} \label{alg:synthetic_generator}
% \begin{algorithmic}[1]
% \State \textbf{Input:} Feature dimension $d$, number of partitions $k$, number of samples $n$
% \State \textbf{Output:} Synthetic dataset $X \in \mathbb{R}^{n \times d}$, output variable $y \in \mathbb{R}^n$
% \State \textbf{Group Assignment:} Partition features into $k$ disjoint groups $\{G_1, G_2, \dots, G_k\}$
% \State Sample fixed weights $w_f \sim \mathcal{U}(-1, 1)$ for all features $f \in \{1, \dots, d\}$
% \State \textbf{Generative Process:}
% \For{each sample $t = 1$ to $n$}
%     \For{each group $g = 1$ to $k$}
%         \State Sample group latent variable $\theta_g^{(t)} \sim \mathcal{U}(-1, 1)$
%         \For{each feature $f \in G_g$}
%             \State Sample noise $\varepsilon_f^{(t)} \sim \mathcal{N}(0, 0.04)$
%             \State Compute $x_f^{(t)} = \theta_g^{(t)} \cdot w_f + \varepsilon_f^{(t)}$
%         \EndFor
%     \EndFor
%     \State \textbf{Target Outputs Generation:}
%     \State Select fixed group index $g^* \in \{1, \dots, k\}$ (shared across all samples)
%     \State Sample target noise $\delta^{(t)} \sim \mathcal{N}(0, 0.04)$
%     \State Compute $y^{(t)} = \theta_{g^*}^{(t)} + \delta^{(t)}$
% \EndFor
% \end{algorithmic}
% \end{algorithm}

\subsection{Experimental Setup and Results}

We evaluate FT-Transformer on synthetic datasets with input dimensionality fixed at $d = 30$, using Spearman-based graph-derived PEs. Each feature embedding is concatenated with PE and scaled by a hyperparameter $\alpha$ to modulate its influence. To analyze the impact of structural variation, we partition features into $k$ groups and categorize results into three regimes: (a) high structure ($k \leq 8$), (b) moderate structure ($10 \leq k \leq 22$), and (c) low structure ($k > 22$). Accuracy is assessed across varying $\alpha$ values and partition counts.

\begin{figure}[t]
    \centering
    \includegraphics[width=\linewidth]{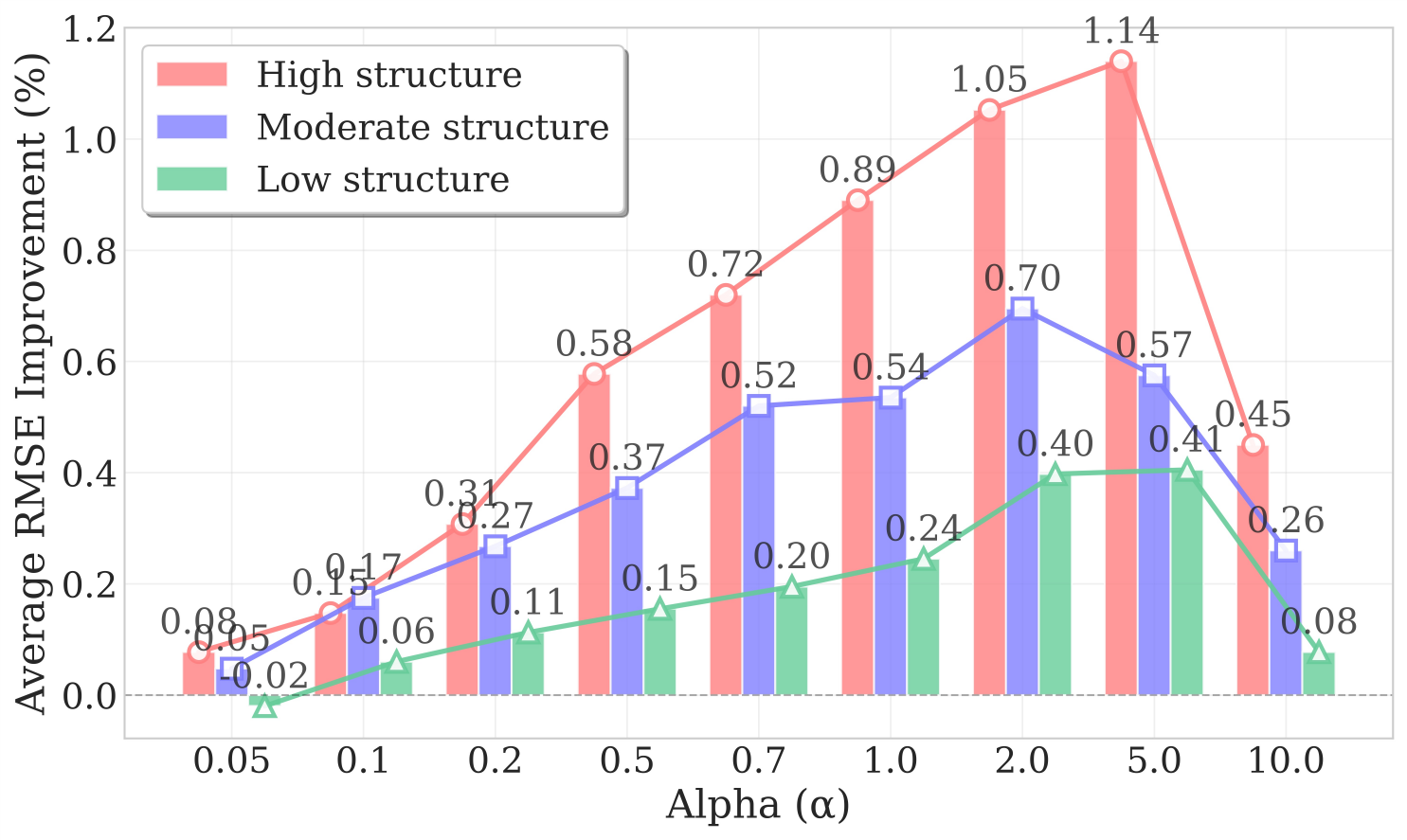}
    \caption{RMSE performance comparison with low, moderate, and high structure synthetic datasets across varying $\alpha$.}
    \label{fig:synthetic}
\end{figure}

\noindent\textbf{Results}
Figure~\ref{fig:synthetic} shows performance across structural regimes. As anticipated, datasets with stronger internal associations benefit more from graph-derived PEs, yielding larger improvements as the positional signal is amplified via higher $\alpha$ values. This empirically confirms that PEs are most useful when the data exhibits meaningful structure.

Interestingly, even in highly unstructured settings, we observe minor but consistent gains from PEs. This is because PEs can reduce the effective rank of the learning problem, even when the inputs are unstructured (Theorem \ref{thm:1}), and the generative structure is a simple linear function that doesn't require high-rank features.

%  We attribute this to the sparse nature of the underlying target function. In this case, the resulting PEs effectively discourage indiscriminate mixing of features, particularly those that are likely independent. 
Lastly, Figure \ref{fig:synthetic} shows that excessively amplifying the contribution of PEs, by increasing $\alpha$ to 10, can degrade model performance. This is intuitive, as a large $\alpha$ disproportionately weights the positional signal, potentially overshadowing the original input content encoded within the value vectors of the query-key-value decomposition.

\section{Experiment on Real Datasets}
% In this section we discuss the results on real datasets. First, we begin with the details regarding our experimental setup, followed by the main results. Lastly, we discuss some ablation and analytical studies on Tab-PET. 
In this section, we discuss the results on real datasets. We present the details regarding our experimental setup, the main results, and ablation as well as analytical studies on Tab-PET.
All experiments were conducted on 3 NVIDIA A100 SXM4 80GB GPUs and 3 NVIDIA RTX 6000 Ada Generation GPUs.
% Specifically, SAINT baseline and SAINT with TabPET experiments were conducted on RTX 6000 GPUs, while all other experiments were conducted on A100 GPUs.

\subsection{Experimental Setup}

\textbf{Datasets:} We run experiments on a comprehensive collection of 50 tabular datasets sourced from OpenML (https://www.openml.org), comprising 25 classification and 25 regression tasks. These datasets span diverse domains with varying characteristics in terms of sample sizes, feature dimensionalities, and categorical ratios. The complete list of datasets with detailed properties is provided in technical appendix \ref{app:2.1}. To preserve the statistical properties of each dataset, we employ stratified sampling for classification tasks that maintains the exact class distribution of the original dataset. During training, we split each dataset into a 60:20:20 train-validation-test split. We outline the detailed data preprocessing steps in technical appendix \ref{app:2.2}. 
% For datasets which don't contain missing data elements, we did a simpler form of preprocessing, and a more robust version was used for datasets that had missing entries. 

\noindent\textbf{Graph Estimation:} We found that some graph estimation approaches are computationally expensive. NO-TEARS was particularly expensive for larger datasets. Association-based graphs use weights from pairwise measures in Eq. \eqref{eq:assoc_weights}. Chow-Liu requires an additional step to ensure the graph is a DAG. The NO-TEARS and LinGAM approaches for graph estimation have tunable hyperparameters, which are outlined in technical appendix \ref{app:3}. 
 
\noindent\textbf{PE Creation:} When generating PEs, we design an automatic $k$ selection algorithm that adaptively determines the optimal number of low-frequency and high-frequency eigenvectors based on spectral gap analysis. The $k$ selection algorithm is based on thresholding the normalized eigenvalues, which are a proxy of the effective frequency of the eigenvector, from both sides. We outline this in technical appendix \ref{app:4}. The hyperparameter $\alpha$  (Fig. \ref{fig:flowchart} (c) and Eq. \eqref{eq:alpha_inclusion}), which is chosen from a set of 9 elements from $0.05$ to $10$, is optimized via the validation set using a greedy approach. 

\noindent\textbf{Training and Evaluation:}
Following numerous previous studies \cite{ye2024ptarl,gorishniy2021revisiting} that use RMSE and accuracy, we report RMSE for regression tasks and balanced accuracy for classification tasks. Balanced accuracy presents an unbiased estimate of performance that is independent of class imbalance, and following best practices, we train our models using balanced cross-entropy loss.
Each result is reported after averaging across five random seeds for all approaches tested in our work. We use early-stopping for all approaches during the training process. 
Details are provided in technical appendix \ref{app:5}.
% For both regression and classification, for each approach, we report the percentage improvement in RMSE and balanced accuracy respectively. 

\subsection{Baselines for Comparison}
We compare Tab-PET with two categories of baselines:
\begin{itemize}[nosep,leftmargin=*,wide = 0pt]

    \item \textbf{Tree-based:} We compare Tab-PET against two state-of-the-art (SOTA) gradient boosting methods that show superior performance on tabular data: XGBoost~\cite{chen2016xgboost} and CatBoost~\cite{prokhorenkova2018catboost}. For these methods, we use Optuna-driven hyperparameter optimization~\cite{akiba2019optuna} with 100 trials per dataset. Detailed hyperparameter ranges can be found in technical appendix \ref{app:6.1}.

    \item \textbf{Transformer-based:} We compare Tab-PET against three SOTA transformer-based methods designed for tabular data: TabTransformer~\cite{huang2020tabtransformer}, SAINT~\cite{somepalli2021saint}, and FT-Transformer~\cite{gorishniy2021revisiting}. Lastly, for fair comparison, we keep batch size, epochs, learning rate, dimensionality of the feature tokenizer output, and other such hyperparameters fixed when comparing PE and non-PE approaches for each approach. Complete hyperparameter configurations and fair comparison details are provided in technical appendix \ref{app:6.2} and \ref{app:6.3}.
\end{itemize}

\begin{table}[t]
\centering
\scriptsize
\setlength{\tabcolsep}{3pt}
\renewcommand{\arraystretch}{1.3}
\begin{tabular}{l!{\vrule}ccccc!{\vrule}cc!{\vrule}c}
\toprule
Method & CA & AS & DA & NL & TR & C Improv. $\uparrow$ & R Improv. $\uparrow$ & Time (min) \\
\midrule
NOTEARS & $\checkmark$ &  & $\checkmark$ &  &  & 1.36\%  & 3.64\%  & 76.83 \\
\arrayrulecolor{lightgray}\hline\arrayrulecolor{black}
LiNGAM & $\checkmark$ &  & $\checkmark$ &  &  & 1.41\%  & 3.97\%  & 10.96 \\
\arrayrulecolor{lightgray}\hline\arrayrulecolor{black}
Pearson &  & $\checkmark$ &  &  &  & \underline{1.61\% } & 4.16\%  & \underline{0.78} \\
\arrayrulecolor{lightgray}\hline\arrayrulecolor{black}
Spearman &  & $\checkmark$ &  & $\checkmark$ &  & \textbf{1.72\% } & \textbf{4.34\% } & 0.79 \\
\arrayrulecolor{lightgray}\hline\arrayrulecolor{black}
Chow-Liu &  & $\checkmark$ & $\checkmark$ & $\checkmark$ & $\checkmark$ & 1.17\%  & \underline{4.29\% } & \textbf{0.38} \\
\bottomrule
\end{tabular}
\caption{Average performance improvement of graph estimation approaches with Tab-PET. Results show improvement percentage over baseline. Time (min) represents the average extra computational time introduced by Tab-PET for graph estimation and PE creation. CA = Causal, AS = Association, DA = Directed Acyclic, NL = Nonlinear, TR = Tree.}
\label{tab:graph_estimation_results}
\end{table}

\begin{table*}[t]
\centering
% \tiny
\setlength{\tabcolsep}{0.5pt}
\resizebox{\textwidth}{!}{
\begin{tabular}{l!{\vrule}c!{\vrule}@{\hspace{0.3pt}}c@{\hspace{0.3pt}}@{\hspace{0.3pt}}c@{\hspace{0.3pt}}@{\hspace{0.3pt}}c@{\hspace{0.3pt}}@{\hspace{0.3pt}}c@{\hspace{0.3pt}}@{\hspace{0.3pt}}c@{\hspace{0.3pt}}@{\hspace{0.3pt}}c@{\hspace{0.3pt}}@{\hspace{0.3pt}}c@{\hspace{0.3pt}}@{\hspace{0.3pt}}c@{\hspace{0.3pt}}@{\hspace{0.3pt}}c@{\hspace{0.3pt}}@{\hspace{0.3pt}}c@{\hspace{0.3pt}}@{\hspace{0.3pt}}c@{\hspace{0.3pt}}@{\hspace{0.3pt}}c@{\hspace{0.3pt}}@{\hspace{0.3pt}}c@{\hspace{0.3pt}}@{\hspace{0.3pt}}c@{\hspace{0.3pt}}@{\hspace{0.3pt}}c@{\hspace{0.3pt}}@{\hspace{0.3pt}}c@{\hspace{0.3pt}}@{\hspace{0.3pt}}c@{\hspace{0.3pt}}@{\hspace{0.3pt}}c@{\hspace{0.3pt}}@{\hspace{0.3pt}}c@{\hspace{0.3pt}}@{\hspace{0.3pt}}c@{\hspace{0.3pt}}@{\hspace{0.3pt}}c@{\hspace{0.3pt}}@{\hspace{0.3pt}}c@{\hspace{0.3pt}}@{\hspace{0.3pt}}c@{\hspace{0.3pt}}@{\hspace{0.3pt}}c@{\hspace{0.3pt}}@{\hspace{0.3pt}}c@{\hspace{0.3pt}}}
\toprule
Model & \textbf{Rank} & AU$\uparrow$ & GE$\uparrow$ & SA$\uparrow$ & BL$\uparrow$ & CHU$\uparrow$ & CM$\uparrow$ & CR$\uparrow$ & DI$\uparrow$ & DN$\uparrow$ & EY$\uparrow$ & FI$\uparrow$ & HE$\uparrow$ & JA$\uparrow$ & KC$\uparrow$ & KR$\uparrow$ & MA$\uparrow$ & PH$\uparrow$ & QSA$\uparrow$ & ST$\uparrow$ & SY$\uparrow$ & TI$\uparrow$ & VE$\uparrow$ & WI$\uparrow$ & WIN$\uparrow$ & YE$\uparrow$ \\
\midrule
XGB & 3.40 & 0.837$^\ddag$ & 0.614 & 0.746 & 0.725 & 0.893 & 0.571$^\dag$ & 0.722$^\dag$ & 0.721 & 0.966$^\dag$ & 0.696 & 0.524$^\dag$ & 0.711 & 0.811$^\ddag$ & 0.704 & 0.997$^\dag$ & 0.861$^\dag$ & 0.845 & 0.858$^\dag$ & 0.806 & 0.934 & 0.978 & 0.759 & 0.914 & 0.396$^\dag$ & 0.582$^\dag$ \\
CB & 3.76 & 0.830 & 0.635 & 0.776 & 0.705 & 0.894 & 0.562 & 0.681 & 0.710 & 0.964$^\ddag$ & 0.726 & 0.502$^\ddag$ & 0.722 & 0.819$^\dag$ & 0.719$^\ddag$ & 0.992 & 0.856$^\ddag$ & 0.860 & 0.842 & 0.810$^\ddag$ & 0.953$^\dag$ & 0.981$^\dag$ & 0.747 & 0.930 & 0.273 & 0.567$^\ddag$ \\
\arrayrulecolor{lightgray}\hline\arrayrulecolor{black}
TT & 7.33 & 0.822 & — & — & — & 0.831 & 0.485 & 0.661 & — & 0.956 & 0.595 & — & — & 0.785 & — & 0.993 & — & — & — & — & — & 0.973 & — & — & — & — \\
\textbf{+PET} & \textbf{5.33} & \textbf{0.836} & — & — & — & \textbf{0.840} & \textbf{0.488} & \textbf{0.683} & — & \textbf{0.960} & \textbf{0.598} & — & — & \textbf{0.791} & — & \textbf{0.994} & — & — & — & — & — & \textbf{0.980$^\ddag$} & — & — & — & — \\
\arrayrulecolor{lightgray}\hline\arrayrulecolor{black}
ST & 4.52 & \textbf{0.826} & \textbf{0.663$^\dag$} & 0.814$^\ddag$ & 0.733 & 0.855 & 0.557 & \textbf{0.709$^\ddag$} & 0.730 & 0.961 & 0.687 & 0.471 & 0.725 & 0.797 & 0.715 & 0.992 & \textbf{0.809} & 0.852 & 0.841 & 0.798 & 0.940 & 0.969 & 0.762 & 0.977$^\ddag$ & 0.350 & 0.545 \\
\textbf{+PET} & \textbf{3.28$^\ddag$} & 0.826 & 0.660$^\ddag$ & \textbf{0.862$^\dag$} & \textbf{0.756$^\ddag$} & \textbf{0.878} & \textbf{0.563} & 0.709 & \textbf{0.740} & \textbf{0.964} & \textbf{0.730} & \textbf{0.474} & \textbf{0.728$^\ddag$} & \textbf{0.807} & \textbf{0.718} & \textbf{0.995} & 0.805 & \textbf{0.854} & \textbf{0.854$^\ddag$} & \textbf{0.805} & \textbf{0.943} & \textbf{0.973} & \textbf{0.785} & \textbf{0.982$^\dag$} & \textbf{0.351} & \textbf{0.551} \\
\arrayrulecolor{lightgray}\hline\arrayrulecolor{black}
FT & 4.44 & 0.828 & 0.641 & 0.779 & 0.748 & 0.908$^\ddag$ & 0.545 & 0.644 & 0.769$^\ddag$ & 0.953 & 0.748$^\ddag$ & 0.460 & 0.724 & 0.805 & 0.714 & \textbf{0.996$^\ddag$} & 0.718 & 0.861$^\ddag$ & 0.831 & 0.795 & 0.947 & 0.972 & 0.786$^\ddag$ & 0.957 & 0.363 & 0.545 \\
\textbf{+PET} & \textbf{2.44$^\dag$} & \textbf{0.838$^\dag$} & \textbf{0.650} & \textbf{0.792} & \textbf{0.758$^\dag$} & \textbf{0.917$^\dag$} & \textbf{0.571$^\ddag$} & \textbf{0.682} & \textbf{0.771$^\dag$} & \textbf{0.962} & \textbf{0.750$^\dag$} & \textbf{0.470} & \textbf{0.730$^\dag$} & \textbf{0.806} & \textbf{0.731$^\dag$} & 0.995 & \textbf{0.740} & \textbf{0.866$^\dag$} & \textbf{0.844} & \textbf{0.811$^\dag$} & \textbf{0.948$^\ddag$} & \textbf{0.977} & \textbf{0.808$^\dag$} & \textbf{0.970} & \textbf{0.376$^\ddag$} & \textbf{0.562} \\
\bottomrule
\end{tabular}
}
\vspace{1em}

\resizebox{\textwidth}{!}{
\begin{tabular}{l!{\vrule}c!{\vrule}@{\hspace{0.3pt}}c@{\hspace{0.3pt}}@{\hspace{0.3pt}}c@{\hspace{0.3pt}}@{\hspace{0.3pt}}c@{\hspace{0.3pt}}@{\hspace{0.3pt}}c@{\hspace{0.3pt}}@{\hspace{0.3pt}}c@{\hspace{0.3pt}}@{\hspace{0.3pt}}c@{\hspace{0.3pt}}@{\hspace{0.3pt}}c@{\hspace{0.3pt}}@{\hspace{0.3pt}}c@{\hspace{0.3pt}}@{\hspace{0.3pt}}c@{\hspace{0.3pt}}@{\hspace{0.3pt}}c@{\hspace{0.3pt}}@{\hspace{0.3pt}}c@{\hspace{0.3pt}}@{\hspace{0.3pt}}c@{\hspace{0.3pt}}@{\hspace{0.3pt}}c@{\hspace{0.3pt}}@{\hspace{0.3pt}}c@{\hspace{0.3pt}}@{\hspace{0.3pt}}c@{\hspace{0.3pt}}@{\hspace{0.3pt}}c@{\hspace{0.3pt}}@{\hspace{0.3pt}}c@{\hspace{0.3pt}}@{\hspace{0.3pt}}c@{\hspace{0.3pt}}@{\hspace{0.3pt}}c@{\hspace{0.3pt}}@{\hspace{0.3pt}}c@{\hspace{0.3pt}}@{\hspace{0.3pt}}c@{\hspace{0.3pt}}@{\hspace{0.3pt}}c@{\hspace{0.3pt}}@{\hspace{0.3pt}}c@{\hspace{0.3pt}}@{\hspace{0.3pt}}c@{\hspace{0.3pt}}@{\hspace{0.3pt}}c@{\hspace{0.3pt}}}
\toprule
Model & \textbf{Rank} & QS$\downarrow$ & AB$\downarrow$ & AI$\downarrow$ & BO$\downarrow$ & BOS$\downarrow$ & CA$\downarrow$ & CH$\downarrow$ & CL$\downarrow$ & CO$\downarrow$ & CP$\downarrow$ & CPU$\downarrow$ & DIA$\downarrow$ & EN$\downarrow$ & FR$\downarrow$ & GR$\downarrow$ & KI$\downarrow$ & LI$\downarrow$ & MU$\downarrow$ & PL$\downarrow$ & SE$\downarrow$ & SO$\downarrow$ & SP$\downarrow$ & STO$\downarrow$ & TE$\downarrow$ & WIS$\downarrow$ \\
\midrule
XGB & 5.20 & 1.047 & 2.333 & 2.247 & 3.424 & 3.545 & 0.169 & 0.593 & 0.300$^\ddag$ & 5.050 & 6.367 & 3.112 & 1157 & 0.476 & 0.980 & 0.015 & 0.153 & 2.858$^\dag$ & 30.15 & 238.0 & 0.646$^\dag$ & 24.62 & 0.169 & 1.811 & 6.530 & 39.75 \\
CB & 2.96 & 1.000 & 2.243 & 1.445$^\ddag$ & 0.587 & 3.159 & 0.133$^\dag$ & 0.540$^\ddag$ & 0.272$^\dag$ & 4.305$^\dag$ & 2.288$^\dag$ & 2.713$^\dag$ & 529.4$^\dag$ & 0.433$^\dag$ & 0.987 & 0.007 & 0.093 & 3.011 & 10.87$^\ddag$ & 227.5 & 0.676 & 24.22 & 0.108 & 0.689$^\ddag$ & 1.544$^\dag$ & 36.69 \\
\arrayrulecolor{lightgray}\hline\arrayrulecolor{black}
TT & 7.14 & — & — & — & — & 5.203 & — & — & 1.369 & — & — & — & 1039 & — & — & — & — & — & 153.5 & 227.9 & 0.801 & 28.83 & — & — & — & — \\
\textbf{+PET} & \textbf{5.71} & — & — & — & — & \textbf{5.173} & — & — & \textbf{1.327} & — & — & — & \textbf{1025} & — & — & — & — & — & \textbf{153.0} & \textbf{226.9} & \textbf{0.713} & \textbf{28.08} & — & — & — & — \\
\arrayrulecolor{lightgray}\hline\arrayrulecolor{black}
ST & 3.64 & 0.939$^\ddag$ & 2.187 & \textbf{1.957} & 1.060 & 3.194 & \textbf{0.145} & 0.575 & 0.491 & 5.169 & 2.339 & \textbf{2.791$^\ddag$} & 537.5 & \textbf{0.547} & \textbf{0.947} & 0.006 & 0.067 & \textbf{2.940} & 12.40 & 224.6$^\ddag$ & 0.717 & 17.53$^\ddag$ & \textbf{0.100$^\dag$} & 0.994 & 2.485 & 35.38$^\ddag$ \\
\textbf{+PET} & \textbf{2.84$^\dag$} & \textbf{0.932$^\dag$} & \textbf{2.160$^\dag$} & 2.005 & \textbf{1.039} & \textbf{3.090$^\ddag$} & 0.146 & \textbf{0.547} & \textbf{0.376} & \textbf{4.984$^\ddag$} & \textbf{2.332$^\ddag$} & 2.807 & \textbf{535.5$^\ddag$} & 0.550 & 0.950 & \textbf{0.006$^\ddag$} & \textbf{0.064$^\dag$} & 2.940 & \textbf{9.717$^\dag$} & \textbf{224.1$^\dag$} & \textbf{0.715} & \textbf{17.51$^\dag$} & 0.100$^\dag$ & \textbf{0.989} & \textbf{1.818$^\ddag$} & \textbf{35.20$^\dag$} \\
\arrayrulecolor{lightgray}\hline\arrayrulecolor{black}
FT & 4.08 & 0.971 & 2.200 & 1.466 & 0.327$^\ddag$ & 3.232 & \textbf{0.141$^\ddag$} & \textbf{0.540$^\dag$} & 0.484 & 5.079 & 2.358 & 2.856 & 2468 & 0.484 & 0.739$^\ddag$ & 0.006 & 0.070 & 2.936 & 25.85 & 592.0 & 0.691 & 17.85 & 0.106 & 0.695 & \textbf{4.113} & 38.01 \\
\textbf{+PET} & \textbf{2.88$^\ddag$} & \textbf{0.961} & \textbf{2.168$^\ddag$} & \textbf{1.310$^\dag$} & \textbf{0.277$^\dag$} & \textbf{2.993$^\dag$} & 0.141 & 0.540 & \textbf{0.315} & \textbf{5.045} & \textbf{2.339} & \textbf{2.827} & \textbf{2447} & \textbf{0.472$^\ddag$} & \textbf{0.716$^\dag$} & \textbf{0.005$^\dag$} & \textbf{0.067$^\ddag$} & \textbf{2.880$^\ddag$} & \textbf{22.98} & \textbf{591.4} & \textbf{0.666$^\ddag$} & \textbf{17.83} & \textbf{0.104} & \textbf{0.676$^\dag$} & 4.123 & \textbf{37.92} \\
\bottomrule
\end{tabular}
}
\caption{Performance comparison on classification and regression datasets. Each result is averaged over 5 random seeds. $\uparrow$ indicates balanced accuracy, $\downarrow$ indicates RMSE. Rank = the mean rank across all datasets (lower is better). \textbf{Bold} = better between baseline transformer and its Tab-PET variant. $^\dag$ = best across all methods, $^\ddag$ = second-best across all methods. Models: XGB = XGBoost, CB = CatBoost, TT = TabTransformer, ST = SAINT, FT = FT-Transformer, \textbf{+PET} = corresponding Tab-PET variant. The dataset names corresponding to the abbreviations are provided in technical appendix \ref{app:2.1}.}
\label{tab:simplified_results}
\end{table*}

\subsection{Graph Estimation Approaches Analysis}

\textbf{Performance:} 
To evaluate how different graph estimation approaches influence downstream performance, we compare five representative approaches on 50 datasets using FT-Transformer as the backbone.
As shown in Table~\ref{tab:graph_estimation_results}, association-based approaches consistently outperform causality-based ones across both tasks. Spearman correlation achieves the highest average improvement, followed closely by Pearson. Additionally, Spearman exhibits the most consistent positive gains, rarely showing performance degradation. In contrast, causal discovery approaches NOTEARS and LiNGAM exhibit relatively weaker performance improvement. Chow-Liu, which constructs tree-structured dependency graphs, doesn't perform well on classification. Detailed results for all approaches are provided in technical appendix \ref{app:7.1}.

\noindent\textbf{Computational Cost:} 
Addressing computational efficiency concerns, we report the average computational time (in minutes) for graph estimation and PE creation across all 50 datasets in Table~\ref{tab:graph_estimation_results}. The introduced computational overhead is minimal; for instance, Spearman graphs add only 0.79 minutes on average. Detailed timing results are provided in technical appendix \ref{app:7.2}.

% \begin{figure*}[t]
%     \centering
%     \includegraphics[width=\linewidth]{CameraReady/LaTeX/fig_final/comprehensive_graph_metrics_histogram_crc.png}
%     \caption{Graph entropy distributions across five graph estimation approaches. Varying bar widths reflect the different value ranges spanned by each approach.}
%     \label{fig:graph_entropy}
% \end{figure*}

\begin{figure}[t]
    \centering
    \includegraphics[width=\linewidth]{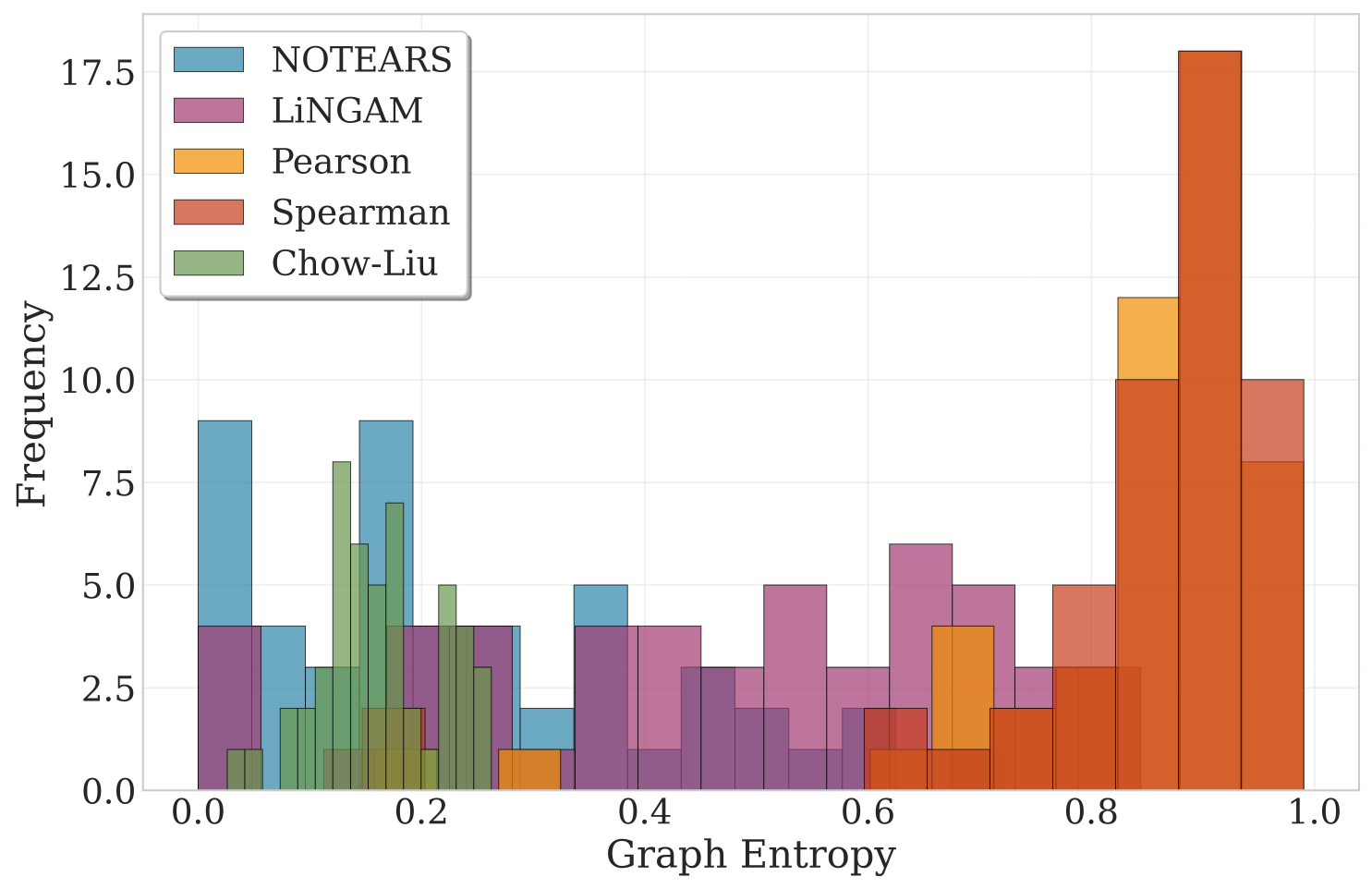}
    \caption{Graph entropy distributions across five graph estimation approaches. Varying bar widths reflect the different value ranges spanned by each approach.}
    \label{fig:graph_entropy}
\end{figure}

\noindent\textbf{Graph Entropy:}
To better understand why association-based approaches outperform causality-based ones, we analyze structural properties via graph entropy, a measure quantifying the uniformity of edge weight distributions.
For a graph with $n$ feature nodes, we compute the normalized entropy for each node $i$. Given the edge weights $w_{ij}$ to other nodes, the normalized entropy $H_i$ is defined as:
$H_i = -\frac{\sum_{j \neq i} p_{ij} \ln p_{ij}}{\ln(n-1)}$, where $p_{ij} = \frac{w_{ij}}{\sum_{j \neq i} w_{ij}}$. The overall graph entropy is then obtained by averaging $H_i$ across all nodes. Higher entropy indicates more uniform, densely connected graphs, while lower entropy suggests sparse, highly structured graphs with concentrated edge weights.

% \begin{equation}
% H_i = -\frac{\sum_{j \neq i} p_{ij} \ln p_{ij}}{\ln(n-1)}, \quad \text{where} \quad p_{ij} = \frac{w_{ij}}{\sum_{j \neq i} w_{ij}}
% \end{equation}

Figure~\ref{fig:graph_entropy} reveals a clear pattern between graph entropy and downstream performance. For all 50 datasets tested here, causal methods (NOTEARS and LiNGAM) concentrate in the low graph entropy region, producing sparse, highly constrained graphs. Spearman and Pearson correlations generate graphs with high entropy. These denser structures align with the strongest performance gains, suggesting that PEs yield more useful information when derived from dense feature dependencies rather than sparse causal structures. Additional causal–association comparison analyses are provided in technical appendix \ref{app:8.1}.

\subsection{Classification and Regression: Main Results}

Based on Table \ref{tab:graph_estimation_results}, we select the best-performing approach (Spearman) for integrating PEs into transformer architectures. 
Table~\ref{tab:simplified_results} compares performance across tree-based and transformer-based models on 50 datasets, with detailed results with standard deviations in technical appendix \ref{app:7.3}.
Metrics include balanced accuracy for classification and RMSE for regression, based on five-fold cross-validation. 
For TabTransformer, we use only datasets with multiple categorical variables, since TabTransformer's architecture only applies embeddings (and thus PEs) to categorical features, while continuous features bypass the embedding layer entirely.
Overall, Tab-PET consistently improves performance across multiple transformer architectures and tasks.

To understand performance better, following common practice \cite{gorishniy2021revisiting, gorishniy2022embeddings, gorishniy2023tabr, mcelfresh2023neural, ye2025disentangling}, we report mean ranks in Table~\ref{tab:simplified_results} and find that Tab-PET methods achieve the best overall performance, surpassing GBDTs and baseline transformers in both classification and regression tasks, with Tab-PET variants of FT-Transformer and SAINT ranking as the top two methods overall.
Notably, Tab-PET shows statistically significant improvements against no-PE baselines across all transformer architectures ($p < 0.05$, Wilcoxon signed-rank tests), with detailed p-values provided in technical appendix \ref{app:8.6}.

\subsection{Comparing with Learnable Positional Encodings}

A key consideration in our study is whether fixed PEs, derived from the intrinsic structure of tabular data, outperform learnable PEs that adapt based on data input. This question echoes ongoing debates in NLP where it's widely accepted that learnable PEs can perform well when ample data is available, but may falter under low-data regimes \cite{radford2018improving}.

To explore this in the tabular domain, we compare performance improvements between learnable PEs and those generated via Tab-PET across all classification and regression datasets. These results, detailed in Table~\ref{tab:main_learnable_comparison}, reveal that Tab-PET consistently achieves higher average gains than learnable alternatives. This suggests that for tabular datasets, which are typically smaller in size, incorporating graph-structured positional information as Tab-PET does provide a substantial advantage. Detailed results are provided in technical appendix \ref{app:7.4}, while further analysis and ablation studies on Tab-PET are shown in technical appendix \ref{app:8}.

\begin{table}[t]
\centering
\scriptsize
\setlength{\tabcolsep}{4pt}
\renewcommand{\arraystretch}{1}
\begin{tabular}{l!{\vrule}cc!{\vrule}cc!{\vrule}cc!{\vrule}cc}
\toprule
\multirow{2}{*}{Method} & \multicolumn{2}{c}{Avg Improv.\%} & \multicolumn{2}{c}{Median Improv.\%} & \multicolumn{2}{c}{Min Improv.\%} & \multicolumn{2}{c}{Win Rate\%} \\
\cmidrule{2-9}
& C & R & C & R & C & R & C & R \\
\midrule
Learnable & 0.04 & 0.62 & -0.08 & 0.05 & -3.1 & -8.6 & 12 & 8 \\
Tab-PET & \textbf{1.72} & \textbf{4.34} & \textbf{1.39} & \textbf{1.92} & \textbf{-0.1} & \textbf{-0.2} & \textbf{88} & \textbf{92} \\
\bottomrule
\end{tabular}
\caption{Comparison between Tab-PET and Learnable PE. C = Classification, R = Regression.}
% \caption{Comparison between Tab-PET and Learnable PE methods. Win Rate = \% of datasets where method performs better. C = Classification, R = Regression. \textbf{Bold} indicates superior performance.}
\label{tab:main_learnable_comparison}
\end{table}

\begin{figure}[t]
    \centering
    \includegraphics[width=\linewidth]{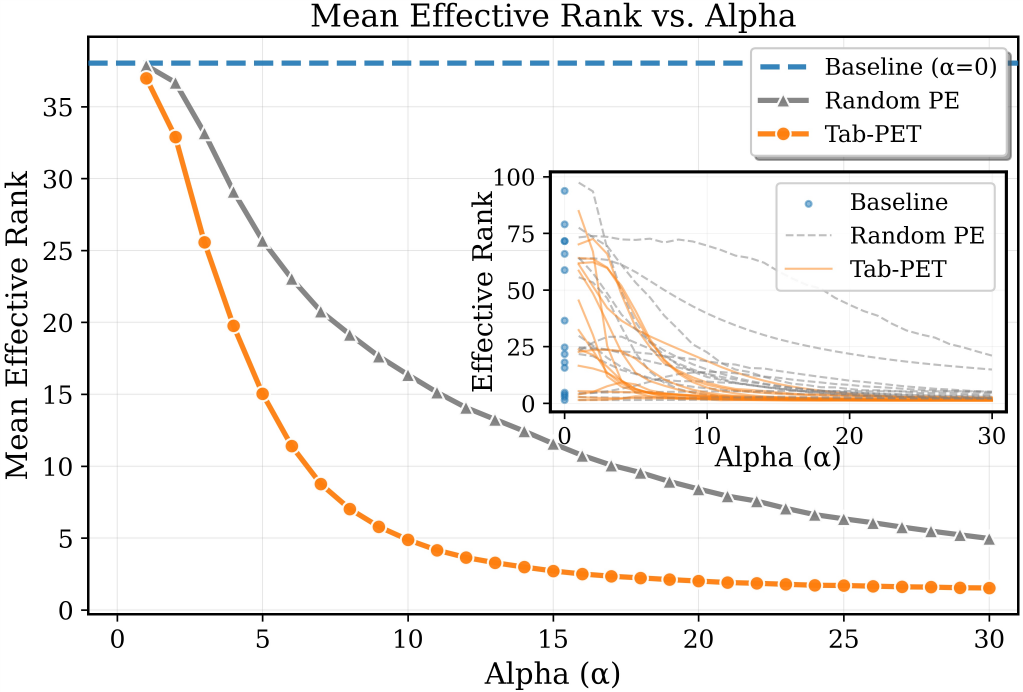}
    \caption{Empirical validation of effective rank reduction. Main plot shows the mean effective rank vs. $\alpha$ for Tab-PET, Random PE, and baseline. Inset shows per-dataset trends.}
    \label{fig:effective_rank}
\end{figure}

\subsection{PEs and Effective Rank on Real Datasets}

To empirically validate our theoretical results, we conduct experiments across 15 real-world tabular datasets where we measure the effective rank of features, comparing three conditions: (1) \textit{Baseline} without PEs ($\alpha = 0$); (2) \textit{Tab-PET} with graph-derived PEs; (3) \textit{Random PE} with the same dimensionality and statistical properties.

\noindent\textbf{Experimental Setup:}
We use an FT-Transformer architecture with a single layer and single attention head to isolate the effect of PEs on effective rank. For each PE type, we vary the scaling parameter $\alpha \in \{1, 2, 3, \ldots, 30\}$ and compute the effective rank of CLS token embeddings before the final prediction via fully connected layers. Complete experimental details are provided in technical appendix \ref{app:1.1}.

\noindent\textbf{Observations:}
Figure \ref{fig:effective_rank} presents our main findings, which strongly align with Theorems \ref{thm:1} and \ref{thm:2}. Specifically, as $\alpha$ increases we see: (1) Both Tab-PET and random PE reduce effective rank compared to baseline across all $\alpha$ values, confirming that PEs enable the architecture to lower representation complexity when needed; (2) Tab-PET's effective rank is significantly lower than random PE, with the gap widening as $\alpha$ increases initially, before converging again; (3) The exponential-like decay in effective rank aligns with our theoretical takeaways. Further analysis is provided in technical appendix \ref{app:1.2}.

% \noindent\textbf{Practical Implications:}
% Tab-PET's effective rank reduction suggests significant potential for improved generalization. According to established connections between low-rank representations and generalization \cite{arora2019implicit, huh2021low}, these results indicate that graph-derived PEs can enhance model performance by constraining the learning problem to lower-dimensional manifolds.

\section{Conclusion}

PEs, traditionally overlooked in tabular learning, can greatly enhance transformer performance by incorporating graph-derived structural priors. Evaluation on 50 datasets across multiple baselines establishes that Tab-PET (particularly Spearman graph estimation) extensions show best performance overall when compared with gradient boosting approaches and the non-PE baselines. Our theoretical analysis confirms that PEs can reduce effective rank of embeddings, and even more so when meaningful structure exists. Our synthetic studies highlight the significant performance improvements when the data has underlying structure. Overall, our work highlights the importance of actively incorporating structural inductive biases in tabular data learning.

\section*{Acknowledgments}

This research is supported by A*STAR, CISCO Systems (USA) Pte. Ltd and the National University of Singapore under its Cisco-NUS Accelerated Digital Economy Corporate Laboratory (Award I21001E0002).

\bibliography{main}

\begin{thebibliography}{39}
\providecommand{\natexlab}[1]{#1}

\bibitem[{Akiba et~al.(2019)Akiba, Sano, Yanase, Ohta, and Koyama}]{akiba2019optuna}
Akiba, T.; Sano, S.; Yanase, T.; Ohta, T.; and Koyama, M. 2019.
\newblock Optuna: A next-generation hyperparameter optimization framework.
\newblock In \emph{Proceedings of the 25th ACM SIGKDD international conference on knowledge discovery \& data mining}, 2623--2631.

\bibitem[{Arora et~al.(2019)Arora, Cohen, Hu, and Luo}]{arora2019implicit}
Arora, S.; Cohen, N.; Hu, W.; and Luo, Y. 2019.
\newblock Implicit regularization in deep matrix factorization.
\newblock \emph{Advances in neural information processing systems}, 32.

\bibitem[{Bahri et~al.(2021)Bahri, Jiang, Tay, and Metzler}]{bahri2021scarf}
Bahri, D.; Jiang, H.; Tay, Y.; and Metzler, D. 2021.
\newblock Scarf: Self-supervised contrastive learning using random feature corruption.
\newblock \emph{arXiv preprint arXiv:2106.15147}.

\bibitem[{Cant{\"u}rk et~al.(2023)Cant{\"u}rk, Liu, Lapointe-Gagn{\'e}, L{\'e}tourneau, Wolf, Beaini, and Ramp{\'a}{\v{s}}ek}]{canturk2023graph}
Cant{\"u}rk, S.; Liu, R.; Lapointe-Gagn{\'e}, O.; L{\'e}tourneau, V.; Wolf, G.; Beaini, D.; and Ramp{\'a}{\v{s}}ek, L. 2023.
\newblock Graph positional and structural encoder.
\newblock \emph{arXiv preprint arXiv:2307.07107}.

\bibitem[{Chen and Guestrin(2016)}]{chen2016xgboost}
Chen, T.; and Guestrin, C. 2016.
\newblock Xgboost: A scalable tree boosting system.
\newblock In \emph{Proceedings of the 22nd acm sigkdd international conference on knowledge discovery and data mining}, 785--794.

\bibitem[{Chow and Liu(1968)}]{chow1968approximating}
Chow, C.; and Liu, C. 1968.
\newblock Approximating discrete probability distributions with dependence trees.
\newblock \emph{IEEE transactions on Information Theory}, 14(3): 462--467.

\bibitem[{Chu et~al.(2021)Chu, Zhang, Tian, Wei, and Xia}]{chu2021we}
Chu, X.; Zhang, B.; Tian, Z.; Wei, X.; and Xia, H. 2021.
\newblock Do we really need explicit position encodings for vision transformers.
\newblock \emph{arXiv preprint arXiv:2102.10882}, 3(8).

\bibitem[{Dosovitskiy et~al.(2020)Dosovitskiy, Beyer, Kolesnikov, Weissenborn, Zhai, Unterthiner, Dehghani, Minderer, Heigold, Gelly et~al.}]{dosovitskiy2020image}
Dosovitskiy, A.; Beyer, L.; Kolesnikov, A.; Weissenborn, D.; Zhai, X.; Unterthiner, T.; Dehghani, M.; Minderer, M.; Heigold, G.; Gelly, S.; et~al. 2020.
\newblock An image is worth 16x16 words: Transformers for image recognition at scale.
\newblock \emph{arXiv preprint arXiv:2010.11929}.

\bibitem[{Dwivedi and Bresson(2020)}]{dwivedi2020generalization}
Dwivedi, V.~P.; and Bresson, X. 2020.
\newblock A generalization of transformer networks to graphs.
\newblock \emph{arXiv preprint arXiv:2012.09699}.

\bibitem[{Ghosh and Motani(2023)}]{ghosh2023local}
Ghosh, R.; and Motani, M. 2023.
\newblock Local intrinsic dimensional entropy.
\newblock In \emph{Proceedings of the AAAI Conference on Artificial Intelligence}, volume~37, 7714--7721.

\bibitem[{Gong, Chung, and Glass(2021)}]{gong2021ast}
Gong, Y.; Chung, Y.-A.; and Glass, J. 2021.
\newblock Ast: Audio spectrogram transformer.
\newblock \emph{arXiv preprint arXiv:2104.01778}.

\bibitem[{Gorishniy, Rubachev, and Babenko(2022)}]{gorishniy2022embeddings}
Gorishniy, Y.; Rubachev, I.; and Babenko, A. 2022.
\newblock On embeddings for numerical features in tabular deep learning.
\newblock \emph{Advances in Neural Information Processing Systems}, 35: 24991--25004.

\bibitem[{Gorishniy et~al.(2023)Gorishniy, Rubachev, Kartashev, Shlenskii, Kotelnikov, and Babenko}]{gorishniy2023tabr}
Gorishniy, Y.; Rubachev, I.; Kartashev, N.; Shlenskii, D.; Kotelnikov, A.; and Babenko, A. 2023.
\newblock Tabr: Tabular deep learning meets nearest neighbors in 2023.
\newblock \emph{arXiv preprint arXiv:2307.14338}.

\bibitem[{Gorishniy et~al.(2021)Gorishniy, Rubachev, Khrulkov, and Babenko}]{gorishniy2021revisiting}
Gorishniy, Y.; Rubachev, I.; Khrulkov, V.; and Babenko, A. 2021.
\newblock Revisiting deep learning models for tabular data.
\newblock \emph{Advances in neural information processing systems}, 34: 18932--18943.

\bibitem[{He et~al.(2016)He, Zhang, Ren, and Sun}]{he2016identity}
He, K.; Zhang, X.; Ren, S.; and Sun, J. 2016.
\newblock Identity mappings in deep residual networks.
\newblock In \emph{European conference on computer vision}, 630--645. Springer.

\bibitem[{Herzig et~al.(2020)Herzig, Nowak, M{\"u}ller, Piccinno, and Eisenschlos}]{herzig2020tapas}
Herzig, J.; Nowak, P.~K.; M{\"u}ller, T.; Piccinno, F.; and Eisenschlos, J.~M. 2020.
\newblock TaPas: Weakly supervised table parsing via pre-training.
\newblock \emph{arXiv preprint arXiv:2004.02349}.

\bibitem[{Huang et~al.(2020)Huang, Khetan, Cvitkovic, and Karnin}]{huang2020tabtransformer}
Huang, X.; Khetan, A.; Cvitkovic, M.; and Karnin, Z. 2020.
\newblock Tabtransformer: Tabular data modeling using contextual embeddings.
\newblock \emph{arXiv preprint arXiv:2012.06678}.

\bibitem[{Huh et~al.(2021)Huh, Mobahi, Zhang, Cheung, Agrawal, and Isola}]{huh2021low}
Huh, M.; Mobahi, H.; Zhang, R.; Cheung, B.; Agrawal, P.; and Isola, P. 2021.
\newblock The low-rank simplicity bias in deep networks.
\newblock \emph{arXiv preprint arXiv:2103.10427}.

\bibitem[{Ito et~al.(2025)Ito, Zhu, Chen, Koutra, and Wiens}]{ito2025learning}
Ito, M.; Zhu, J.; Chen, D.; Koutra, D.; and Wiens, J. 2025.
\newblock Learning laplacian positional encodings for heterophilous graphs.
\newblock \emph{arXiv preprint arXiv:2504.20430}.

\bibitem[{Kadra et~al.(2021)Kadra, Lindauer, Hutter, and Grabocka}]{kadra2021well}
Kadra, A.; Lindauer, M.; Hutter, F.; and Grabocka, J. 2021.
\newblock Well-tuned simple nets excel on tabular datasets.
\newblock \emph{Advances in neural information processing systems}, 34: 23928--23941.

\bibitem[{Klambauer et~al.(2017)Klambauer, Unterthiner, Mayr, and Hochreiter}]{klambauer2017self}
Klambauer, G.; Unterthiner, T.; Mayr, A.; and Hochreiter, S. 2017.
\newblock Self-normalizing neural networks.
\newblock \emph{Advances in neural information processing systems}, 30.

\bibitem[{Lee et~al.(2019)Lee, Danieletto, Miotto, Cherng, and Dudley}]{lee2019scaling}
Lee, H.-C.; Danieletto, M.; Miotto, R.; Cherng, S.~T.; and Dudley, J.~T. 2019.
\newblock Scaling structural learning with NO-BEARS to infer causal transcriptome networks.
\newblock In \emph{Pacific Symposium on Biocomputing 2020}, 391--402. World Scientific.

\bibitem[{Leng, Ghosh, and Motani(2025)}]{LengTabPET}
Leng, Y.; Ghosh, R.; and Motani, M. 2025.
\newblock Tab-PET: Graph-Based Positional Encoding for Tabular Transformers.
\newblock \emph{https://github.com/kentridgeai/Tab-PET/blob/main/Tab-PET-Arxiv.pdf?raw=true}.

\bibitem[{McElfresh et~al.(2023)McElfresh, Khandagale, Valverde, Prasad~C, Ramakrishnan, Goldblum, and White}]{mcelfresh2023neural}
McElfresh, D.; Khandagale, S.; Valverde, J.; Prasad~C, V.; Ramakrishnan, G.; Goldblum, M.; and White, C. 2023.
\newblock When do neural nets outperform boosted trees on tabular data?
\newblock \emph{Advances in Neural Information Processing Systems}, 36: 76336--76369.

\bibitem[{Nakada and Imaizumi(2020)}]{nakada_ID_generalization}
Nakada, R.; and Imaizumi, M. 2020.
\newblock Adaptive approximation and generalization of deep neural network with intrinsic dimensionality.
\newblock \emph{J. Mach. Learn. Res.}, 21(1).

\bibitem[{Popov, Morozov, and Babenko(2019)}]{popov2019neural}
Popov, S.; Morozov, S.; and Babenko, A. 2019.
\newblock Neural oblivious decision ensembles for deep learning on tabular data.
\newblock \emph{arXiv preprint arXiv:1909.06312}.

\bibitem[{Prokhorenkova et~al.(2018)Prokhorenkova, Gusev, Vorobev, Dorogush, and Gulin}]{prokhorenkova2018catboost}
Prokhorenkova, L.; Gusev, G.; Vorobev, A.; Dorogush, A.~V.; and Gulin, A. 2018.
\newblock CatBoost: unbiased boosting with categorical features.
\newblock \emph{Advances in neural information processing systems}, 31.

\bibitem[{Radford et~al.(2018)Radford, Narasimhan, Salimans, and Sutskever}]{radford2018improving}
Radford, A.; Narasimhan, K.; Salimans, T.; and Sutskever, I. 2018.
\newblock Improving Language Understanding by Generative Pre-Training.
\newblock \emph{OpenAI Technical Report}.

\bibitem[{Roy and Vetterli(2007)}]{roy2007effective}
Roy, O.; and Vetterli, M. 2007.
\newblock The effective rank of a matrix: A measure of effective dimensionality.
\newblock \emph{European Signal Processing Conference}, 606--610.

\bibitem[{Shavitt and Segal(2018)}]{shavitt2018regularization}
Shavitt, I.; and Segal, E. 2018.
\newblock Regularization learning networks: deep learning for tabular datasets.
\newblock \emph{Advances in neural information processing systems}, 31.

\bibitem[{Shimizu et~al.(2006)Shimizu, Hoyer, Hyv{\"a}rinen, Kerminen, and Jordan}]{shimizu2006linear}
Shimizu, S.; Hoyer, P.~O.; Hyv{\"a}rinen, A.; Kerminen, A.; and Jordan, M. 2006.
\newblock A linear non-Gaussian acyclic model for causal discovery.
\newblock \emph{Journal of Machine Learning Research}, 7(10).

\bibitem[{Somepalli et~al.(2021)Somepalli, Goldblum, Schwarzschild, Bruss, and Goldstein}]{somepalli2021saint}
Somepalli, G.; Goldblum, M.; Schwarzschild, A.; Bruss, C.~B.; and Goldstein, T. 2021.
\newblock Saint: Improved neural networks for tabular data via row attention and contrastive pre-training.
\newblock \emph{arXiv preprint arXiv:2106.01342}.

\bibitem[{Vaswani et~al.(2017)Vaswani, Shazeer, Parmar, Uszkoreit, Jones, Gomez, Kaiser, and Polosukhin}]{vaswani2017attention}
Vaswani, A.; Shazeer, N.; Parmar, N.; Uszkoreit, J.; Jones, L.; Gomez, A.~N.; Kaiser, {\L}.; and Polosukhin, I. 2017.
\newblock Attention is all you need.
\newblock \emph{Advances in neural information processing systems}, 30.

\bibitem[{Wang and Sun(2022)}]{wang2022transtab}
Wang, Z.; and Sun, J. 2022.
\newblock Transtab: Learning transferable tabular transformers across tables.
\newblock \emph{Advances in Neural Information Processing Systems}, 35: 2902--2915.

\bibitem[{Wikipedia(2025)}]{wiki_fiedler}
Wikipedia. 2025.
\newblock Algebraic connectivity.
\newblock \url{https://en.wikipedia.org/wiki/Algebraic_connectivity}.
\newblock Accessed: 2025-11-17.

\bibitem[{Xu et~al.(2021)Xu, Zhang, Zhang, and Tao}]{xu2021vitae}
Xu, Y.; Zhang, Q.; Zhang, J.; and Tao, D. 2021.
\newblock Vitae: Vision transformer advanced by exploring intrinsic inductive bias.
\newblock \emph{Advances in neural information processing systems}, 34: 28522--28535.

\bibitem[{Ye et~al.(2024)Ye, Fan, Song, Zheng, Zhao, Guo, and Chang}]{ye2024ptarl}
Ye, H.; Fan, W.; Song, X.; Zheng, S.; Zhao, H.; Guo, D.; and Chang, Y. 2024.
\newblock Ptarl: Prototype-based tabular representation learning via space calibration.
\newblock \emph{arXiv preprint arXiv:2407.05364}.

\bibitem[{Ye et~al.(2025)Ye, Tan, Hu, Yang, Cheng, and Huang}]{ye2025disentangling}
Ye, J.; Tan, Z.; Hu, Y.; Yang, X.; Cheng, G.; and Huang, K. 2025.
\newblock Disentangling Tabular Data Towards Better One-Class Anomaly Detection.
\newblock In \emph{Proceedings of the AAAI Conference on Artificial Intelligence}, volume~39, 13061--13068.

\bibitem[{Zheng et~al.(2018)Zheng, Aragam, Ravikumar, and Xing}]{zheng2018dags}
Zheng, X.; Aragam, B.; Ravikumar, P.~K.; and Xing, E.~P. 2018.
\newblock Dags with no tears: Continuous optimization for structure learning.
\newblock \emph{Advances in neural information processing systems}, 31.

\end{thebibliography}

% \clearpage
% \input{CameraReady/LaTeX/ReproducibilityChecklist}
\appendix
\onecolumn

\begin{center}
\bfseries\LARGE
{\LARGE Technical Appendix}
\end{center}

\tableofcontents

% \section*{Table of Contents}
% \begin{enumerate}[label=\Alph*.]
%     \item \textbf{Empirical Validation of Effective Rank Reduction on Real Datasets} \dotfill \hyperref[app:1]{\pageref{app:1}}

%     \item \textbf{Dataset Information and Preprocessing} \dotfill \hyperref[app:2]{\pageref{app:2}}

%     \item \textbf{NO-TEARS and LiNGAM Hyperparameters} \dotfill \hyperref[app:3]{\pageref{app:3}}

%     \item \textbf{Automatic $k$ Selection Algorithm} \dotfill \hyperref[app:4]{\pageref{app:4}}

%     \item \textbf{Training and Evaluation} \dotfill \hyperref[app:5]{\pageref{app:5}}

%     \item \textbf{Baseline Method Hyperparameters} \dotfill \hyperref[app:6]{\pageref{app:6}}

%     \item \textbf{Main Results} \dotfill \hyperref[app:7]{\pageref{app:7}}

%     \item \textbf{Analysis and Ablation of Tab-PET} \dotfill \hyperref[app:8]{\pageref{app:8}}

%     \item \textbf{Proofs of Theoretical Results} \dotfill \hyperref[app:9]{\pageref{app:9}}
% \end{enumerate}

\newpage

\section{Empirical Validation of Effective Rank Reduction on Real Datasets} \label{app:1}

To empirically validate our theoretical results on the effective rank reduction capabilities of PEs, we conduct experiments across multiple real-world tabular datasets. Our experiments demonstrate that graph-derived PEs consistently reduce the effective rank of learned representations compared to both baseline models and random PEs, which provides strong empirical support for our theorems. Furthermore, we also find that even random PEs can yield significant reductions in effective rank, which shows the importance of incorporating PEs in general in transformer-based architectures for tabular data. 

\subsection{Experimental Setup} \label{app:1.1}

\subsubsection{Dataset Selection} We evaluate our approach on 15 datasets selected from Table \ref{tab:dataset_properties}, specifically choosing datasets with sample sizes between 4,500 and 50,000 to ensure computational feasibility while maintaining statistical significance. The selected datasets span both classification and regression tasks, including GE, SA, CHU, EY, FI, HE, PH, SY, WI, WIN, CA, CP, CPU, GR, KI. Please refer to Table \ref{tab:dataset_properties} for the full dataset name and version.

\subsubsection{Model Architecture} To isolate the effect of PEs on effective rank, we employ a simplified FT-Transformer architecture with a single layer and single attention head ($n\_layers=1$, $n\_heads=1$). This minimal configuration allows us to directly observe the impact of PEs without the confounding effects of deep architectural components. The model maintains an effective total token dimension of 192. The graph-derived PE method we use is Spearman.

\subsubsection{Positional Encoding Generation} We compare three types of positional encodings:
\begin{itemize}
    \item \textbf{Baseline}: Without PEs by setting $\alpha = 0$, which can effectively nullifying PE influence
    \item \textbf{Tab-PET}: Real graph-derived PEs with varying $\alpha \in \{1, 2, 3, \ldots, 30\}$
    \item \textbf{Random PE}: Randomly generated PEs with the same dimensionality, same statistical properties, same $\alpha \in \{1, 2, 3, \ldots, 30\}$, and same normalization as the real graph-derived PEs, providing a fair comparison.
\end{itemize}

\subsubsection{Effective Rank Computation} We compute the effective rank of the CLS token embeddings immediately before the final fully connected layer (classification/regression head). For a matrix $\mathbf{X} \in \mathbb{R}^{n \times d}$ representing CLS embeddings from $n$ samples, we calculate the effective rank as:
\begin{equation}
r_{\text{eff}}(\mathbf{X}) = \exp\left(-\sum_{i=1}^{r} \tilde{\sigma}_i \log \tilde{\sigma}_i\right),
\end{equation}
where $\tilde{\sigma}_i = \sigma_i / \sum_{j=1}^{r} \sigma_j$ are the normalized singular values obtained from SVD decomposition $\mathbf{X} = \mathbf{U}\boldsymbol{\Sigma}\mathbf{V}^T$, and $r$ is the rank of $\mathbf{X}$. This formulation captures the intrinsic dimensionality of the learned representations through the Shannon entropy of the singular value distribution.

\subsubsection{Training Protocol} Each model is trained for up to 50 epochs with early stopping (patience=20, minimum epochs=10) using AdamW optimizer with learning rate $1 \times 10^{-4}$. We use a 60-20-20 train-validation-test split and ensure reproducibility through fixed random seeds. For each PE type, we compute effective ranks across different $\alpha$ values.

\subsection{Results} \label{app:1.2}

\subsubsection{Comparison between Tab-PET and Random PE}

\begin{figure}[H]
    \centering
    \includegraphics[width=1\linewidth]{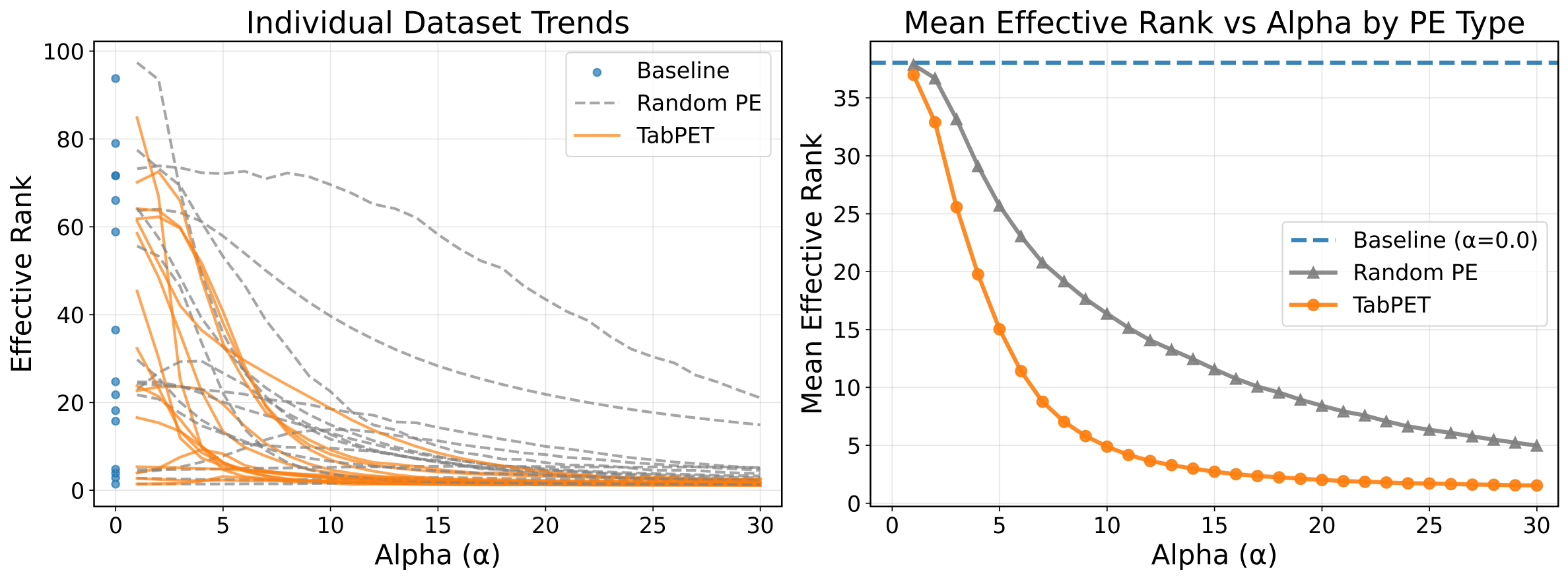}
    \caption{Empirical Validation of Effective Rank Reduction Theory. Left panel displays individual dataset trends across 15 tabular datasets for both Tab-PET (orange solid lines) and Random PE (gray dashed lines) compared to baseline models without PEs (blue scattered points at $\alpha=0$). Right panel shows the mean effective rank across all datasets.}
    \label{fig:effective_rank_results}
\end{figure}

Figure \ref{fig:effective_rank_results} presents our main empirical findings, showing both individual dataset trends (left panel) and mean effective ranks across all datasets (right panel). The results provide strong empirical support for our theorems.

The left panel of Figure \ref{fig:effective_rank_results} demonstrates that the rank reduction pattern holds consistently across diverse datasets, despite their varying characteristics. This universality suggests that our theoretical framework captures fundamental properties of PE integration in transformer architectures.

The right panel of Figure \ref{fig:effective_rank_results} shows that the baseline models (with $\alpha = 0$) achieve a mean effective rank of $38.04$. In contrast, Tab-PET with $\alpha = 1$ reduces this to $36.98$, while random PE achieves $37.86$. This immediate reduction upon introducing PEs validates our theoretical framework.

As $\alpha$ increases, we observe different behaviors between Tab-PET and random PE. Tab-PET exhibits rapid and consistent effective rank reduction, reaching $15.03$ at $\alpha = 5$ and further decreasing to $1.53$ at $\alpha = 30$. This steep decline demonstrates the power of structured PEs in reducing representation complexity and the generalization gap. On the other hand, random PE shows more gradual reduction, decreasing from $37.86$ at $\alpha = 1$ to $25.69$ at $\alpha = 5$ and $4.97$ at $\alpha = 30$. While still reducing effective rank, the improvement is substantially less pronounced than Tab-PET.

\subsubsection{Theoretical Alignment} Our empirical results strongly align with Theorems 1 and 2. Specifically:

\begin{enumerate}
    \item \textbf{Consistent Rank Reduction}: Both Tab-PET and random PE reduce effective rank compared to baseline, confirming that PEs enable the architecture in lowering representation complexity if needed.
    
    \item \textbf{Structured vs. Random PEs}: Tab-PET consistently outperforms random PE across all $\alpha$ values, with the gap widening as $\alpha$ increases.
    
    \item \textbf{Alpha Scaling}: The exponential-like decay in effective rank as $\alpha$ increases aligns with our theoretical prediction that larger $\alpha$ values amplify the rank reduction effect, with Tab-PET showing steeper decay due to its structured nature (Theorem 2 part (b)).
\end{enumerate}

\subsubsection{Practical Implications} The effective rank reduction achieved by Tab-PET (up to 25× reduction from baseline to $\alpha = 30$) suggests significant potential for improved generalization. According to established connections between low-rank representations and generalization \cite{arora2019implicit,huh2021low, nakada_ID_generalization}, these results indicate that graph-derived PEs may enhance model performance by constraining the learning problem to lower-dimensional manifolds.

In summary, this empirical evaluation across 15 diverse tabular datasets provides compelling evidence for our theoretical claims, showing that graph-derived PEs consistently and substantially reduce the effective rank of learned representations compared to random PEs and no PEs. 

\newpage
\section{Dataset Information and Preprocessing}\label{app:2}

\subsection{Dataset Properties and Characteristics} \label{app:2.1}

\begin{table}[H]
\centering
\resizebox{0.68\textwidth}{!}{
\small
\setlength{\tabcolsep}{4pt}
\renewcommand{\arraystretch}{1.2}
\definecolor{lightgray}{gray}{0.7}
\begin{tabular}{l!{\vrule}c!{\vrule}c!{\vrule}c!{\vrule}c!{\vrule}c}
\toprule
Dataset & Version & Abbr. & (Features, Samples) & Cat. Features & Type \\
\midrule
Australian & 4 & AU & (14, 690) & 8 & C \\
GesturePhaseSegmentationProcessed & 1 & GE & (32, 9873) & 0 & C \\
Satellite & 1 & SA & (36, 5100) & 0 & C \\
blood-transfusion-service-center & 1 & BL & (4, 748) & 0 & C \\
churn & 1 & CHU & (20, 5000) & 4 & C \\
cmc & 1 & CM & (9, 1473) & 7 & C \\
credit-g & 1 & CR & (20, 1000) & 13 & C \\
diabetes & 1 & DI & (8, 768) & 0 & C \\
dna & 1 & DN & (180, 3186) & 180 & C \\
eye\_movements & 1 & EY & (27, 10936) & 3 & C \\
first-order-theorem-proving & 1 & FI & (51, 6118) & 0 & C \\
heloc & 3 & HE & (23, 10459) & 0 & C \\
jasmine & 1 & JA & (144, 2984) & 136 & C \\
kc1 & 1 & KC & (21, 2109) & 0 & C \\
kr-vs-kp & 1 & KR & (36, 3196) & 36 & C \\
madeline & 1 & MA & (259, 3140) & 0 & C \\
phoneme & 1 & PH & (5, 5404) & 0 & C \\
qsar-biodeg & 1 & QSA & (41, 1055) & 0 & C \\
steel-plates-fault & 3 & ST & (27, 1941) & 0 & C \\
sylvine & 1 & SY & (20, 5124) & 0 & C \\
tic-tac-toe & 1 & TI & (9, 958) & 9 & C \\
vehicle & 1 & VE & (18, 846) & 0 & C \\
wilt & 2 & WI & (5, 4839) & 0 & C \\
wine-quality-white & 1 & WIN & (11, 4898) & 0 & C \\
yeast & 1 & YE & (8, 1484) & 0 & C \\
\arrayrulecolor{lightgray}\hline\arrayrulecolor{black}
QSAR\_fish\_toxicity & 7 & QS & (6, 908) & 0 & R \\
abalone & 5 & AB & (8, 4177) & 1 & R \\
airfoil\_self\_noise & 8 & AI & (5, 1503) & 0 & R \\
bodyfat & 1 & BO & (14, 252) & 0 & R \\
boston & 1 & BOS & (13, 506) & 2 & R \\
california & 4 & CA & (8, 20640) & 0 & R \\
chscase\_census2 & 1 & CH & (7, 400) & 0 & R \\
cloud & 1 & CL & (5, 108) & 2 & R \\
concrete\_compressive\_strength & 7 & CO & (8, 1030) & 0 & R \\
cpu\_activity & 6 & CP & (21, 8192) & 0 & R \\
cpu\_small & 2 & CPU & (12, 8192) & 0 & R \\
diamonds & 8 & DIA & (9, 53940) & 3 & R \\
energy\_efficiency & 9 & EN & (8, 768) & 0 & R \\
fri\_c3\_100\_25 & 1 & FR & (25, 100) & 0 & R \\
grid\_stability & 7 & GR & (12, 10000) & 0 & R \\
kin8nm & 1 & KI & (8, 8192) & 0 & R \\
liver-disorders & 1 & LI & (5, 345) & 0 & R \\
munich-rent-index-1999 & 1 & MU & (8, 3082) & 4 & R \\
plasma\_retinol & 1 & PL & (13, 315) & 3 & R \\
sensory & 1 & SE & (11, 576) & 11 & R \\
socmob & 1 & SO & (5, 1156) & 4 & R \\
space\_ga & 1 & SP & (6, 3107) & 0 & R \\
stock & 1 & STO & (9, 950) & 0 & R \\
tecator & 1 & TE & (124, 240) & 0 & R \\
wisconsin & 1 & WIS & (32, 194) & 0 & R \\
\bottomrule
\end{tabular}
}
\caption{Properties of OpenML datasets used in experiments. The first section shows classification datasets, the second shows regression datasets. Version indicates the OpenML dataset version used. Type: C = Classification, R = Regression. Features and samples indicate the dimensionality of each dataset. Cat. Features show the number of categorical features in the dataset.}
\label{tab:dataset_properties}
\end{table}

The detailed properties of the datasets used in this paper are shown in Table \ref{tab:dataset_properties}. OpenML (https://www.openml.org) provides standardized benchmarks commonly used in tabular learning literature, ensuring fair comparison with existing methods. 

\subsection{Data Preprocessing} \label{app:2.2}

Simple pre-processing used in our work applies the following operations: (1) separation of continuous and categorical features based on data types, (2) listwise deletion of samples with missing numerical values, (3) one-hot encoding of categorical features with missing value imputation using a designated `Missing' category, and (4) cardinality reduction for high-dimensional categorical features, retaining the top 9 most frequent categories plus an `Other' category when unique values exceed 10. 

To deal with datasets with missing values, robust pre-processing extends the above approach by first removing features with $>70\%$ missing values, then choosing between two imputation strategies based on data completeness: if complete cases comprise $\geq 30\%$ of the dataset, we employ listwise deletion; otherwise, we apply iterative imputation using scikit-learn's \texttt{IterativeImputer}. All numerical features are subsequently standardized using \texttt{StandardScaler} with zero mean and unit variance normalization.

\newpage
\section{NO-TEARS and LiNGAM Hyperparameters}\label{app:3}

This section provides detailed hyperparameter configurations for the causality-based graph estimation methods used in our experiments: NO-TEARS and LiNGAM. Both methods require hyperparameter tuning to achieve optimal performance across different datasets.

For NO-TEARS, we employ a grid search approach over the L1 regularization parameter $\lambda_1$, which controls the sparsity of the learned graph structure. The data fitting loss function uses L2 norm (squared loss), while L1 regularization is applied to encourage sparse adjacency matrices. We use standard convergence criteria and set appropriate computational limits for efficiency.

For LiNGAM, we focus on the threshold parameter that determines the significance level for causal relationships. We use the direct LiNGAM method, which assumes linear relationships and non-Gaussian noise distributions.

\subsection{NO-TEARS Hyperparameters}

\begin{table}[H]
\centering
\setlength{\tabcolsep}{8pt}
\renewcommand{\arraystretch}{1.2}
\begin{tabular}{l!{\vrule}l}
\toprule
\textbf{Parameter} & \textbf{Value/Search Space} \\
\midrule
$\lambda_1$ & $\{0.01, 0.05, 0.1, 0.2\}$ \\
loss\_type & l2 \\
max\_iter & 100 \\
h\_tol & $1 \times 10^{-8}$ \\
rho\_max & $1 \times 10^{16}$ \\
w\_threshold & 0.3 \\
\bottomrule
\end{tabular}
\caption{NO-TEARS hyperparameter configuration.}
\label{tab:notears_hyperparams}
\end{table}

For certain large-scale datasets with high computational complexity, we use a reduced search space for the regularization parameter. Specifically, for datasets DN, JA, and MA (refer to Table \ref{tab:dataset_properties} for full dataset name and properties), we restrict the search space to $\lambda_1 \in \{0.1, 0.2\}$.

\subsection{LiNGAM Hyperparameters}

\begin{table}[H]
\centering
\setlength{\tabcolsep}{8pt}
\renewcommand{\arraystretch}{1.2}
\begin{tabular}{l!{\vrule}l}
\toprule
\textbf{Parameter} & \textbf{Value/Search Space} \\
\midrule
threshold & $\{0.01, 0.05, 0.1, 0.2\}$ \\
method & direct \\
\bottomrule
\end{tabular}
\caption{LiNGAM hyperparameter configuration.}
\label{tab:lingam_hyperparams}
\end{table}

\newpage
\section{Automatic $k$ Selection Algorithm}\label{app:4}

Our automatic $k$ selection method adaptively determines the optimal number of eigenvectors to use for PE generation based on the spectral properties of each dataset's feature graph. The algorithm avoids mid-frequency eigenvectors around eigenvalue 1.0, which typically correspond to less informative spectral components, and focuses on low-frequency (structural) and high-frequency (local) patterns. See Algorithm \ref{alg:auto_k_selection} for details.

\begin{algorithm}[H]
\caption{Automatic $k$ Selection for PEs} \label{alg:auto_k_selection}
\begin{algorithmic}[1]
\State \textbf{Input:} Sorted eigenvalues $\boldsymbol{\lambda} = [\lambda_1, \lambda_2, ..., \lambda_n]$ of normalized Laplacian
\State \textbf{Output:} $k_{\text{first}}$, $k_{\text{last}}$

\State \textbf{Define frequency windows:}
\State Set $\tau_{\text{low}} = 0.75$, $\tau_{\text{high}} = 1.25$

\State \textbf{Count available eigenvectors:}
\State $\text{low\_count} = |\{\lambda_i : \lambda_i \leq \tau_{\text{low}}, i \geq 2\}|$
\State $\text{high\_count} = |\{\lambda_i : \lambda_i \geq \tau_{\text{high}}\}|$

\State \textbf{Apply spectral gap analysis:}
\If{sufficient eigenvalues available}
    \State Identify significant gaps in eigenvalue sequences
    \State Refine $\text{low\_count}$ and $\text{high\_count}$ by cutting before gaps
\EndIf

\State \textbf{Apply constraints:}
\State $k_{\text{first}} = \max(2, \min(\text{low\_count}, 10))$
\State $k_{\text{last}} = k_{\text{first}}$

\State \textbf{Return:} $k_{\text{first}}$, $k_{\text{last}}$
\end{algorithmic}
\end{algorithm}

\subsection{Algorithm Details}

\noindent\textbf{Frequency Window:} We establish a spectral exclusion window around eigenvalue 1.0, specifically avoiding eigenvalues in the range $[0.75, 1.25]$. This strategy is motivated by the observation that eigenvalues near 1.0 often correspond to less informative mid-frequency components that neither capture global structural patterns (low frequencies) nor local connectivity patterns (high frequencies). By focusing on eigenvalues outside this window, we ensure that our PEs capture the most structurally relevant information.

\noindent\textbf{Spectral Gap:} For both low-frequency ($\lambda_i \leq 0.75$) and high-frequency ($\lambda_i \geq 1.25$) eigenvalues, we perform gap analysis to identify natural clusters in the eigenvalue spectrum. The algorithm computes consecutive differences between sorted eigenvalues and identifies significant gaps using a threshold based on the median gap size. When a significant gap is detected, we truncate the selection before the gap to ensure we capture coherent spectral clusters while avoiding potentially noisy or less informative eigenvalues. Gap analysis is only applied when sufficient eigenvalues are available, falling back to simple count-based selection otherwise.

\noindent\textbf{Constraints:} To balance expressiveness with computational efficiency, we apply the following constraints:
\begin{itemize}
    \item \textbf{Minimum constraint ($k \geq 2$):} Ensures that basic structural information is always captured, even for graphs with limited spectral diversity.
    \item \textbf{Maximum constraint ($k \leq 10$):} Prevents excessive dimensionality that could lead to overfitting.
    \item \textbf{Symmetric selection ($k_{\text{last}} = k_{\text{first}}$):} Ensures balanced representation of both global structural patterns and local connectivity patterns.
\end{itemize}

\newpage
\section{Training and Evaluation}\label{app:5}

For each dataset, we run 5 experiments using random seeds \{1, 2, 3, 4, 5\}, respectively.
To handle class imbalance prevalent in real-world tabular datasets, we employ balanced versions of both the training loss function and evaluation metric for classification tasks.

\subsection{Balanced Cross-Entropy Loss}

Standard cross-entropy loss treats all classes equally, which can lead to biased learning toward majority classes in imbalanced datasets. Our balanced cross-entropy loss addresses this by assigning class-specific weights inversely proportional to class frequencies.

\noindent\textbf{Weight Computation:} For a classification task with $C$ classes, let $n_c$ denote the number of training samples in class $c$, and $N$ be the total number of training samples. The class weight for class $c$ is computed as:
\begin{align}
w_c = \frac{1}{C} \cdot \frac{N}{n_c} = \frac{1}{C \cdot p_c},
\end{align}
where $p_c = \frac{n_c}{N}$ is the proportion of samples in class $c$.

\noindent\textbf{Balanced Cross-Entropy Loss:} The balanced cross-entropy loss for a batch of samples is then defined as:
\begin{align}
\mathcal{L}_{\text{balanced}} = -\frac{1}{B} \sum_{i=1}^{B} w_{y_i} \log(\hat{p}_{i,y_i}),
\end{align}
where $B$ is the batch size, $y_i$ is the true class label for sample $i$, $\hat{p}_{i,y_i}$ is the predicted probability for the true class, and $w_{y_i}$ is the weight for class $y_i$.

\subsection{Balanced Accuracy}
Standard accuracy can be misleading for imbalanced datasets, as high accuracy can be achieved by simply predicting the majority class. Balanced accuracy provides a fairer evaluation by giving equal weight to the performance on each class.

\noindent\textbf{Traditional Formulation:} For a test set with $C$ classes, balanced accuracy is defined as the macro-average of per-class recalls:
\begin{align}
\text{Balanced Accuracy} = \frac{1}{C} \sum_{c=1}^{C} \frac{\text{TP}_c}{\text{TP}_c + \text{FN}_c} = \frac{1}{C} \sum_{c=1}^{C} \text{Recall}_c.
\end{align}

\noindent\textbf{Equivalent Weighted Implementation:} In our implementation, we compute this using sample-wise weighting. For each class $c$ in the test set with $n_c^{\text{test}}$ samples, we assign weight:
\begin{align}
w_c^{\text{test}} = \frac{1}{n_c^{\text{test}} \cdot C}.
\end{align}
The balanced accuracy is then computed as:
\begin{align}
\text{Balanced Accuracy} = \sum_{i=1}^{N_{\text{test}}} \mathbf{1}[\hat{y}_i = y_i] \cdot w_{y_i}^{\text{test}}.
\end{align}
This weighted formulation is mathematically equivalent to the traditional definition but more efficient to compute in practice.

\newpage
\section{Baseline Method Hyperparameters}\label{app:6}

\subsection{Tree-Based Baseline Configurations} \label{app:6.1}

\subsubsection{XGBoost Configuration}

For XGBoost, we fix several parameters based on established best practices, while optimizing other hyperparameters (in Table \ref{tab:xgboost_hyperparams}) using Optuna \cite{akiba2019optuna} (with 100 optimization trials). For each dataset, we run 5 experiments using random seeds \{1, 2, 3, 4, 5\}, respectively.

\noindent\textbf{Fixed Parameters:}
\begin{itemize}
    \item \texttt{booster}: `gbtree'
    \item \texttt{n\_estimators}: 2000
    \item \texttt{early\_stopping\_rounds}: 50
\end{itemize}

\noindent\textbf{Hyperparameter Search Space:}

\begin{table}[H]
\centering
\setlength{\tabcolsep}{8pt}
\renewcommand{\arraystretch}{1.2}
\begin{tabular}{l!{\vrule}l}
\toprule
\textbf{Parameter} & \textbf{Search Space} \\
\midrule
max\_depth & $\mathcal{U}\{3, 4, \ldots, 10\}$ \\
min\_child\_weight & $\mathcal{LU}[1\text{e-8}, 1\text{e5}]$ \\
subsample & $\mathcal{U}[0.5, 1.0]$ \\
learning\_rate & $\mathcal{LU}[1\text{e-5}, 1.0]$ \\
colsample\_bylevel & $\mathcal{U}[0.5, 1.0]$ \\
colsample\_bytree & $\mathcal{U}[0.5, 1.0]$ \\
gamma & $\{0, \mathcal{LU}[1\text{e-8}, 1\text{e2}]\}$ \\
reg\_lambda & $\{0, \mathcal{LU}[1\text{e-8}, 1\text{e2}]\}$ \\
reg\_alpha & $\{0, \mathcal{LU}[1\text{e-8}, 1\text{e2}]\}$ \\
\bottomrule
\end{tabular}
\caption{XGBoost hyperparameter search space. $\mathcal{U}\{a, b, \ldots, c\}$ denotes uniform integer sampling, $\mathcal{U}[a,b]$ denotes uniform continuous sampling, $\mathcal{LU}[a,b]$ denotes log-uniform sampling, and $\{0, \text{dist}\}$ denotes categorical choice between 0 and sampling from the specified distribution.}
\label{tab:xgboost_hyperparams}
\end{table}

\subsubsection{CatBoost Configuration}

For CatBoost, similarly, we fix several parameters based on established best practices, while optimizing other hyperparameters (in Table \ref{tab:catboost_hyperparams}) using Optuna \cite{akiba2019optuna} (with 100 optimization trials). For each dataset, we run 5 experiments using random seeds \{1, 2, 3, 4, 5\}, respectively.

\noindent\textbf{Fixed Parameters:}
\begin{itemize}
    \item \texttt{iterations}: 2000
    \item \texttt{early\_stopping\_rounds}: 50
    \item \texttt{od\_pval}: 0.001 (overfitting detection p-value)
\end{itemize}

\noindent\textbf{Hyperparameter Search Space:}

\begin{table}[H]
\centering
\setlength{\tabcolsep}{8pt}
\renewcommand{\arraystretch}{1.2}
\begin{tabular}{l!{\vrule}l}
\toprule
\textbf{Parameter} & \textbf{Search Space} \\
\midrule
max\_depth & $\mathcal{U}\{3, 4, \ldots, 10\}$ \\
learning\_rate & $\mathcal{LU}[1\text{e-5}, 1.0]$ \\
bagging\_temperature & $\mathcal{U}[0.0, 1.0]$ \\
l2\_leaf\_reg & $\mathcal{LU}[1.0, 10.0]$ \\
leaf\_estimation\_iterations & $\mathcal{U}\{1, 2, \ldots, 10\}$ \\
\bottomrule
\end{tabular}
\caption{CatBoost hyperparameter search space. $\mathcal{U}\{a, b, \ldots, c\}$ denotes uniform integer sampling, $\mathcal{U}[a,b]$ denotes uniform continuous sampling, and $\mathcal{LU}[a,b]$ denotes log-uniform sampling.}
\label{tab:catboost_hyperparams}
\end{table}

\newpage
\subsection{Transformer-Based Baseline Configurations} \label{app:6.2}

To ensure fair comparison with Tab-PET, we implement three transformer-based baselines using their original architectures and recommended/default hyperparameter settings. All methods employ identical training protocols with 5 experiments using random seeds \{1, 2, 3, 4, 5\}, respectively.

\subsubsection{FT-Transformer Configuration}

The configuration of the FT-Transformer is shown in Table \ref{tab:ft_transformer_config}.

\begin{table}[H]
\centering
\setlength{\tabcolsep}{8pt}
\renewcommand{\arraystretch}{1.2}
\begin{tabular}{l!{\vrule}l}
\toprule
\textbf{Parameter} & \textbf{Value} \\
\midrule
\multicolumn{2}{c}{\textit{Architecture Parameters}} \\
\midrule
n\_layers & 3 \\
n\_heads & 8 \\
d\_ffn\_factor & 4/3 \\
target\_total\_dim & 192 \\
attention\_dropout & 0.2 \\
ffn\_dropout & 0.1 \\
residual\_dropout & 0.0 \\
activation & reglu \\
prenormalization & True \\
initialization & kaiming \\
\midrule
\multicolumn{2}{c}{\textit{Training Parameters}} \\
\midrule
optimizer & AdamW \\
learning\_rate & 1e-4 \\
weight\_decay & 1e-5 \\
batch\_size & 32$^*$ \\
num\_epochs & 500 \\
early\_stopping & True \\
patience & 50 \\
min\_delta & 1e-6 \\
min\_epochs & 50 \\
\bottomrule
\end{tabular}
\caption{FT-Transformer hyperparameter configuration. $^*$Special batch sizes are used for specific datasets, detailed in Table \ref{tab:special_batch_sizes}.}
\label{tab:ft_transformer_config}
\end{table}

\newpage
\subsubsection{SAINT Configuration}

The configuration of the SAINT is shown in Table \ref{tab:saint_config}.

\begin{table}[H]
\centering
\setlength{\tabcolsep}{8pt}
\renewcommand{\arraystretch}{1.2}
\begin{tabular}{l!{\vrule}l}
\toprule
\textbf{Parameter} & \textbf{Value} \\
\midrule
\multicolumn{2}{c}{\textit{Architecture Parameters}} \\
\midrule
target\_embedding\_dim & 32 \\
transformer\_depth & 1 \\
attention\_heads & 4 \\
attention\_dropout & 0.8 \\
ffn\_dropout & 0.8 \\
cont\_embeddings & MLP \\
attentiontype & colrow \\
final\_mlp\_style & sep \\
\midrule
\multicolumn{2}{c}{\textit{Training Parameters}} \\
\midrule
optimizer & AdamW \\
learning\_rate & 1e-4 \\
weight\_decay & 1e-5 \\
batch\_size & 256$^{**}$ \\
num\_epochs & 500 \\
early\_stopping & True \\
patience & 50 \\
min\_delta & 1e-6 \\
min\_epochs & 50 \\
\bottomrule
\end{tabular}
\caption{SAINT hyperparameter configuration. $^{**}$Batch size is adjusted to min(64, 256) when feature count $> 100$, following official SAINT settings.}
\label{tab:saint_config}
\end{table}

\newpage
\subsubsection{TabTransformer Configuration}

The configuration of the TabTransformer is shown in Table \ref{tab:tabtransformer_config}.

\begin{table}[H]
\centering
\setlength{\tabcolsep}{8pt}
\renewcommand{\arraystretch}{1.2}
\begin{tabular}{l!{\vrule}l}
\toprule
\textbf{Parameter} & \textbf{Value} \\
\midrule
\multicolumn{2}{c}{\textit{Architecture Parameters}} \\
\midrule
dim & 32 \\
depth & 6 \\
heads & 8 \\
dim\_head & 16 \\
attn\_dropout & 0.1 \\
ff\_dropout & 0.1 \\
mlp\_hidden\_mults & (4, 2) \\
mlp\_act & ReLU \\
num\_special\_tokens & 2 \\
use\_shared\_categ\_embed & True \\
shared\_categ\_dim\_divisor & 8.0 \\
num\_residual\_streams & 4 \\
\midrule
\multicolumn{2}{c}{\textit{Training Parameters}} \\
\midrule
optimizer & AdamW \\
learning\_rate & 1e-4 \\
weight\_decay & 1e-5 \\
batch\_size & 32$^*$ \\
num\_epochs & 500 \\
early\_stopping & True \\
patience & 50 \\
min\_delta & 1e-6 \\
min\_epochs & 50 \\
\bottomrule
\end{tabular}
\caption{TabTransformer hyperparameter configuration. $^*$Special batch sizes are used for specific datasets as detailed in Table \ref{tab:special_batch_sizes}.}
\label{tab:tabtransformer_config}
\end{table}

\subsubsection{Special Batch Size Settings}

For FT-Transformer and TabTransformer, we apply dataset-specific batch size adjustments for large datasets, shown in Table \ref{tab:special_batch_sizes}.

\begin{table}[H]
\centering
\setlength{\tabcolsep}{12pt}
\renewcommand{\arraystretch}{1.2}
\begin{tabular}{l!{\vrule}c}
\toprule
\textbf{Dataset} & \textbf{Batch Size} \\
\midrule
GesturePhaseSegmentationProcessed & 64 \\
eye\_movements & 64 \\
heloc & 64 \\
california & 128 \\
cpu\_activity & 64 \\
cpu\_small & 64 \\
diamonds & 256 \\
grid\_stability & 64 \\
kin8nm & 64 \\
\hline
All other datasets & 32 (default) \\
\bottomrule
\end{tabular}
\caption{Special batch size settings for FT-Transformer and TabTransformer.}
\label{tab:special_batch_sizes}
\end{table}

For SAINT, the batch size is dynamically adjusted based on feature count: when the number of features exceeds 100, the batch size is reduced to $min(64, default\_batch\_size)$, following the official SAINT implementation guidelines (https://github.com/somepago/saint/blob/main/train.py).

\subsection{Fairness of Comparison} \label{app:6.3}

To ensure rigorous and unbiased evaluation, we establish a fair comparison framework between baseline transformer models and their corresponding Tab-PET variants. This section details our experimental design choices that guarantee the validity of our performance improvements.

\subsubsection{Identical Model Configurations}

For all three transformer architectures (TabTransformer, SAINT, and FT-Transformer), we maintain \textbf{identical configurations} between baseline models and their Tab-PET variants, as detailed in technical appendix \ref{app:6.2}, which includes architecture parameters, training parameters, and data preprocessing (e.g., normalization, categorical encoding, train/validation/test splits, multiple seeds, etc.)

The \textbf{only difference} between baseline and Tab-PET variants lies in the PE component, which we carefully control to ensure fair comparison.

\subsubsection{Token Dimension Consistency}

To maintain architectural equivalence, we enforce consistent total token dimensions across all comparisons. We use FT-Transformer as an example, and SAINT and TabTransformer follow exact same setups:

\textbf{Baseline Models:}
\begin{align}
d_{\text{token}} &= 192 - d_{\text{PE}}, \\
d_{\text{total}} &= d_{\text{token}} + d(\mathbf{0} \cdot \mathbf{PE}_{\text{fixed}}) = 192,
\end{align}
where $\mathbf{PE}_{\text{fixed}} \in \mathbb{R}^{d_{\text{PE}}}$ is our fixed graph-derived PE matrix, and $\mathbf{0} \cdot \mathbf{PE}_{\text{fixed}}$ represents zero-padding of dimension $d_{\mathbf{PE}_{\text{fixed}}}$.

\textbf{Tab-PET Models:}
\begin{align}
d_{\text{token}} &= 192 - d_{\text{PE}}, \\
d_{\text{total}} &= d_{\text{token}} + d(\alpha \cdot \mathbf{PE}_{\text{fixed}}) = 192,
\end{align}
where $\alpha$ is the scaling hyperparameter.

\subsubsection{Parameter Count Equivalence}

This design ensures that both baseline and Tab-PET models have exactly the same number of trainable parameters (it should be noted that our PEs introduced to the transformer are not learnable), and the forward and backward pass times are equivalent. Therefore, any performance differences can be attributed solely to the structural inductive bias introduced by PEs. To further confirm the performance improvement, we also use Wilcoxon signed-rank tests (p $<$ 0.05) for validation.

\subsubsection{(additional) Fair Comparison with Learnable PEs}

To establish a rigorous comparison between Tab-PET and learnable PEs, we maintain identical experimental conditions with the \textbf{only difference} being the source of positional information.

\textbf{Learnable PE Models:}
\begin{align}
d_{\text{token}} &= 192 - d_{\text{PE}}, \\
d_{\text{total}} &= d_{\text{token}} + d(\text{learnable PE}) = 192,
\end{align}
where learnable PE parameters are randomly initialized and optimized during training through backpropagation.

This setup ensures that both methods maintain identical total token dimensions, but differs significantly in parameter efficiency. Learnable PE models contain additional trainable parameters across all PE dimensions, whereas Tab-PET introduces no additional learnable parameters since the PE components are fixed and derived from the graph structure. Therefore, learnable PE approaches have substantially more trainable parameters than Tab-PET models, making Tab-PET's superior performance even more remarkable from a parameter efficiency perspective. The key distinction is that learnable PEs must discover structural relationships from scratch during training using additional parameters, while Tab-PET leverages pre-computed graph-derived structural priors without increasing model complexity.

\newpage
\section{Main Results} \label{app:7}

\subsection{Comparing 5 Graph Estimation Approaches on FT-Transformer} \label{app:7.1}

\begin{table}[H]
\centering
\tiny
\setlength{\tabcolsep}{5pt}
\renewcommand{\arraystretch}{1.2}
\definecolor{lightgray}{gray}{0.7}

\resizebox{0.95\textwidth}{!}{
\begin{tabular}{l!{\vrule}ccccc!{\vrule}ccccc}
\toprule
Method & CA & AS & DA & NL & TR & AU & GE & SA & BL & CHU \\
\midrule
NT & \multirow{2}{*}{$\checkmark$} & \multirow{2}{*}{} & \multirow{2}{*}{$\checkmark$} & \multirow{2}{*}{} & \multirow{2}{*}{} & 0.8242 ± 0.0203 & 0.6412 ± 0.0081 & \textbf{0.8060 ± 0.0383} & 0.7481 ± 0.0078 & 0.9131 ± 0.0016 \\
+PET &  &  &  &  &  & 0.8333 ± 0.0111 (+1.11\%) & \textbf{0.6574 ± 0.0081} (+2.53\%) & 0.7930 ± 0.0490 (-1.60\%) & \textbf{0.7633 ± 0.0066} (+2.03\%) & 0.9136 ± 0.0034 (+0.05\%) \\
\arrayrulecolor{lightgray}\hline\arrayrulecolor{black}
LG & \multirow{2}{*}{$\checkmark$} & \multirow{2}{*}{} & \multirow{2}{*}{$\checkmark$} & \multirow{2}{*}{} & \multirow{2}{*}{} & 0.8257 ± 0.0168 & \underline{0.6534 ± 0.0134} & 0.7724 ± 0.0247 & 0.7481 ± 0.0078 & 0.9131 ± 0.0016 \\
+PET &  &  &  &  &  & 0.8316 ± 0.0106 (+0.72\%) & 0.6492 ± 0.0110 (-0.64\%) & 0.7984 ± 0.0203 (+3.36\%) & 0.7522 ± 0.0099 (+0.55\%) & 0.9131 ± 0.0042 (-0.00\%) \\
\arrayrulecolor{lightgray}\hline\arrayrulecolor{black}
PS & \multirow{2}{*}{} & \multirow{2}{*}{$\checkmark$} & \multirow{2}{*}{} & \multirow{2}{*}{} & \multirow{2}{*}{} & 0.8283 ± 0.0093 & 0.6412 ± 0.0081 & 0.7791 ± 0.0266 & 0.7481 ± 0.0078 & 0.9076 ± 0.0043 \\
+PET &  &  &  &  &  & \textbf{0.8389 ± 0.0166} (+1.29\%) & 0.6471 ± 0.0080 (+0.93\%) & 0.7600 ± 0.0249 (-2.45\%) & 0.7577 ± 0.0141 (+1.29\%) & \underline{0.9152 ± 0.0051} (+0.83\%) \\
\arrayrulecolor{lightgray}\hline\arrayrulecolor{black}
SM & \multirow{2}{*}{} & \multirow{2}{*}{$\checkmark$} & \multirow{2}{*}{} & \multirow{2}{*}{$\checkmark$} & \multirow{2}{*}{} & 0.8283 ± 0.0093 & 0.6412 ± 0.0081 & 0.7791 ± 0.0266 & 0.7481 ± 0.0078 & 0.9076 ± 0.0043 \\
+PET &  &  &  &  &  & \underline{0.8379 ± 0.0162} (+1.16\%) & 0.6502 ± 0.0080 (+1.41\%) & 0.7924 ± 0.0252 (+1.71\%) & 0.7582 ± 0.0055 (+1.35\%) & \textbf{0.9172 ± 0.0046} (+1.05\%) \\
\arrayrulecolor{lightgray}\hline\arrayrulecolor{black}
CL & \multirow{2}{*}{} & \multirow{2}{*}{$\checkmark$} & \multirow{2}{*}{$\checkmark$} & \multirow{2}{*}{$\checkmark$} & \multirow{2}{*}{$\checkmark$} & 0.8248 ± 0.0124 & \underline{0.6534 ± 0.0134} & \textbf{0.8060 ± 0.0383} & 0.7481 ± 0.0078 & 0.9078 ± 0.0040 \\
+PET &  &  &  &  &  & 0.8354 ± 0.0074 (+1.29\%) & 0.6520 ± 0.0076 (-0.21\%) & 0.7914 ± 0.0551 (-1.80\%) & \underline{0.7598 ± 0.0075} (+1.56\%) & \underline{0.9152 ± 0.0071} (+0.82\%) \\
\midrule
\end{tabular}
}
\\
\vspace{0.5em}

\resizebox{0.95\textwidth}{!}{
\begin{tabular}{l!{\vrule}ccccc!{\vrule}ccccc}
\midrule
Method & CA & AS & DA & NL & TR & CM & CR & DI & DN & EY \\
\midrule
NT & \multirow{2}{*}{$\checkmark$} & \multirow{2}{*}{} & \multirow{2}{*}{$\checkmark$} & \multirow{2}{*}{} & \multirow{2}{*}{} & 0.5661 ± 0.0158 & 0.6395 ± 0.0371 & 0.7691 ± 0.0148 & 0.9550 ± 0.0088 & \textbf{0.7517 ± 0.0071} \\
+PET &  &  &  &  &  & 0.5656 ± 0.0077 (-0.08\%) & \underline{0.6919 ± 0.0047} (+8.19\%) & 0.7613 ± 0.0048 (-1.01\%) & 0.9594 ± 0.0039 (+0.46\%) & 0.7474 ± 0.0030 (-0.57\%) \\
\arrayrulecolor{lightgray}\hline\arrayrulecolor{black}
LG & \multirow{2}{*}{$\checkmark$} & \multirow{2}{*}{} & \multirow{2}{*}{$\checkmark$} & \multirow{2}{*}{} & \multirow{2}{*}{} & 0.5661 ± 0.0158 & 0.6395 ± 0.0371 & 0.7691 ± 0.0148 & 0.9550 ± 0.0088 & \textbf{0.7517 ± 0.0071} \\
+PET &  &  &  &  &  & \underline{0.5715 ± 0.0150} (+0.95\%) & \textbf{0.6971 ± 0.0125} (+9.01\%) & 0.7590 ± 0.0092 (-1.31\%) & \textbf{0.9653 ± 0.0025} (+1.07\%) & 0.7497 ± 0.0087 (-0.26\%) \\
\arrayrulecolor{lightgray}\hline\arrayrulecolor{black}
PS & \multirow{2}{*}{} & \multirow{2}{*}{$\checkmark$} & \multirow{2}{*}{} & \multirow{2}{*}{} & \multirow{2}{*}{} & 0.5449 ± 0.0222 & 0.6443 ± 0.0244 & 0.7691 ± 0.0148 & 0.9528 ± 0.0071 & 0.7469 ± 0.0050 \\
+PET &  &  &  &  &  & \textbf{0.5718 ± 0.0054} (+4.93\%) & \underline{0.6919 ± 0.0295} (+7.39\%) & \underline{0.7728 ± 0.0150} (+0.49\%) & 0.9615 ± 0.0045 (+0.91\%) & 0.7470 ± 0.0071 (+0.01\%) \\
\arrayrulecolor{lightgray}\hline\arrayrulecolor{black}
SM & \multirow{2}{*}{} & \multirow{2}{*}{$\checkmark$} & \multirow{2}{*}{} & \multirow{2}{*}{$\checkmark$} & \multirow{2}{*}{} & 0.5449 ± 0.0222 & 0.6443 ± 0.0244 & 0.7691 ± 0.0148 & 0.9528 ± 0.0071 & 0.7480 ± 0.0078 \\
+PET &  &  &  &  &  & 0.5705 ± 0.0060 (+4.70\%) & 0.6821 ± 0.0201 (+5.88\%) & 0.7712 ± 0.0179 (+0.28\%) & 0.9617 ± 0.0042 (+0.93\%) & 0.7499 ± 0.0060 (+0.24\%) \\
\arrayrulecolor{lightgray}\hline\arrayrulecolor{black}
CL & \multirow{2}{*}{} & \multirow{2}{*}{$\checkmark$} & \multirow{2}{*}{$\checkmark$} & \multirow{2}{*}{$\checkmark$} & \multirow{2}{*}{$\checkmark$} & 0.5661 ± 0.0158 & 0.6395 ± 0.0371 & 0.7691 ± 0.0148 & 0.9550 ± 0.0088 & \textbf{0.7517 ± 0.0071} \\
+PET &  &  &  &  &  & 0.5699 ± 0.0085 (+0.67\%) & 0.6879 ± 0.0133 (+7.56\%) & \textbf{0.7731 ± 0.0203} (+0.52\%) & \underline{0.9624 ± 0.0033} (+0.77\%) & 0.7458 ± 0.0100 (-0.78\%) \\
\midrule
\end{tabular}
}
\\
\vspace{0.5em}

\resizebox{0.95\textwidth}{!}{
\begin{tabular}{l!{\vrule}ccccc!{\vrule}ccccc}
\midrule
Method & CA & AS & DA & NL & TR & FI & HE & JA & KC & KR \\
\midrule
NT & \multirow{2}{*}{$\checkmark$} & \multirow{2}{*}{} & \multirow{2}{*}{$\checkmark$} & \multirow{2}{*}{} & \multirow{2}{*}{} & 0.4668 ± 0.0097 & 0.7202 ± 0.0044 & 0.8049 ± 0.0059 & 0.7195 ± 0.0156 & 0.9951 ± 0.0015 \\
+PET &  &  &  &  &  & \textbf{0.4740 ± 0.0031} (+1.52\%) & \underline{0.7312 ± 0.0006} (+1.53\%) & 0.8052 ± 0.0086 (+0.04\%) & \textbf{0.7366 ± 0.0104} (+2.37\%) & \underline{0.9963 ± 0.0007} (+0.12\%) \\
\arrayrulecolor{lightgray}\hline\arrayrulecolor{black}
LG & \multirow{2}{*}{$\checkmark$} & \multirow{2}{*}{} & \multirow{2}{*}{$\checkmark$} & \multirow{2}{*}{} & \multirow{2}{*}{} & 0.4719 ± 0.0024 & 0.7201 ± 0.0057 & 0.8049 ± 0.0059 & 0.7205 ± 0.0059 & 0.9951 ± 0.0015 \\
+PET &  &  &  &  &  & \underline{0.4727 ± 0.0068} (+0.16\%) & 0.7266 ± 0.0028 (+0.91\%) & 0.8059 ± 0.0045 (+0.12\%) & 0.7220 ± 0.0051 (+0.21\%) & 0.9960 ± 0.0007 (+0.09\%) \\
\arrayrulecolor{lightgray}\hline\arrayrulecolor{black}
PS & \multirow{2}{*}{} & \multirow{2}{*}{$\checkmark$} & \multirow{2}{*}{} & \multirow{2}{*}{} & \multirow{2}{*}{} & 0.4599 ± 0.0075 & 0.7235 ± 0.0059 & 0.8049 ± 0.0059 & 0.7138 ± 0.0095 & 0.9960 ± 0.0007 \\
+PET &  &  &  &  &  & 0.4694 ± 0.0071 (+2.06\%) & \textbf{0.7315 ± 0.0024} (+1.12\%) & \textbf{0.8076 ± 0.0028} (+0.33\%) & 0.7230 ± 0.0093 (+1.29\%) & \underline{0.9963 ± 0.0012} (+0.03\%) \\
\arrayrulecolor{lightgray}\hline\arrayrulecolor{black}
SM & \multirow{2}{*}{} & \multirow{2}{*}{$\checkmark$} & \multirow{2}{*}{} & \multirow{2}{*}{$\checkmark$} & \multirow{2}{*}{} & 0.4599 ± 0.0075 & 0.7235 ± 0.0059 & 0.8049 ± 0.0059 & 0.7138 ± 0.0095 & 0.9960 ± 0.0007 \\
+PET &  &  &  &  &  & 0.4700 ± 0.0121 (+2.20\%) & 0.7299 ± 0.0018 (+0.89\%) & \underline{0.8062 ± 0.0095} (+0.16\%) & \underline{0.7308 ± 0.0062} (+2.39\%) & 0.9954 ± 0.0000 (-0.06\%) \\
\arrayrulecolor{lightgray}\hline\arrayrulecolor{black}
CL & \multirow{2}{*}{} & \multirow{2}{*}{$\checkmark$} & \multirow{2}{*}{$\checkmark$} & \multirow{2}{*}{$\checkmark$} & \multirow{2}{*}{$\checkmark$} & 0.4668 ± 0.0097 & 0.7202 ± 0.0044 & 0.8049 ± 0.0059 & 0.7195 ± 0.0156 & 0.9951 ± 0.0015 \\
+PET &  &  &  &  &  & 0.4704 ± 0.0078 (+0.76\%) & 0.7285 ± 0.0013 (+1.14\%) & 0.8046 ± 0.0056 (-0.04\%) & 0.7294 ± 0.0095 (+1.38\%) & \textbf{0.9966 ± 0.0006} (+0.15\%) \\
\midrule
\end{tabular}
}
\\
\vspace{0.5em}

\resizebox{0.95\textwidth}{!}{
\begin{tabular}{l!{\vrule}ccccc!{\vrule}ccccc}
\midrule
Method & CA & AS & DA & NL & TR & MA & PH & QSA & ST & SY \\
\midrule
NT & \multirow{2}{*}{$\checkmark$} & \multirow{2}{*}{} & \multirow{2}{*}{$\checkmark$} & \multirow{2}{*}{} & \multirow{2}{*}{} & 0.7083 ± 0.0265 & 0.8607 ± 0.0071 & 0.8329 ± 0.0199 & 0.7911 ± 0.0188 & 0.9477 ± 0.0017 \\
+PET &  &  &  &  &  & 0.7329 ± 0.0539 (+3.46\%) & \underline{0.8629 ± 0.0042} (+0.26\%) & \textbf{0.8461 ± 0.0135} (+1.58\%) & \textbf{0.8177 ± 0.0067} (+3.35\%) & 0.9493 ± 0.0035 (+0.17\%) \\
\arrayrulecolor{lightgray}\hline\arrayrulecolor{black}
LG & \multirow{2}{*}{$\checkmark$} & \multirow{2}{*}{} & \multirow{2}{*}{$\checkmark$} & \multirow{2}{*}{} & \multirow{2}{*}{} & 0.6952 ± 0.0398 & 0.8607 ± 0.0071 & 0.8203 ± 0.0159 & 0.7949 ± 0.0104 & 0.9477 ± 0.0017 \\
+PET &  &  &  &  &  & 0.7326 ± 0.0217 (+5.37\%) & \underline{0.8629 ± 0.0080} (+0.26\%) & 0.8417 ± 0.0193 (+2.61\%) & \underline{0.8148 ± 0.0073} (+2.50\%) & 0.9485 ± 0.0048 (+0.08\%) \\
\arrayrulecolor{lightgray}\hline\arrayrulecolor{black}
PS & \multirow{2}{*}{} & \multirow{2}{*}{$\checkmark$} & \multirow{2}{*}{} & \multirow{2}{*}{} & \multirow{2}{*}{} & 0.7184 ± 0.0271 & 0.8607 ± 0.0071 & 0.8313 ± 0.0128 & 0.7949 ± 0.0104 & 0.9469 ± 0.0042 \\
+PET &  &  &  &  &  & \textbf{0.7555 ± 0.0469} (+5.16\%) & 0.8624 ± 0.0057 (+0.20\%) & 0.8391 ± 0.0115 (+0.94\%) & 0.8137 ± 0.0122 (+2.37\%) & \underline{0.9516 ± 0.0063} (+0.49\%) \\
\arrayrulecolor{lightgray}\hline\arrayrulecolor{black}
SM & \multirow{2}{*}{} & \multirow{2}{*}{$\checkmark$} & \multirow{2}{*}{} & \multirow{2}{*}{$\checkmark$} & \multirow{2}{*}{} & 0.7184 ± 0.0271 & 0.8607 ± 0.0071 & 0.8313 ± 0.0128 & 0.7949 ± 0.0104 & 0.9469 ± 0.0042 \\
+PET &  &  &  &  &  & \underline{0.7398 ± 0.0172} (+2.98\%) & \textbf{0.8663 ± 0.0036} (+0.65\%) & \underline{0.8441 ± 0.0145} (+1.54\%) & 0.8112 ± 0.0104 (+2.05\%) & 0.9479 ± 0.0044 (+0.10\%) \\
\arrayrulecolor{lightgray}\hline\arrayrulecolor{black}
CL & \multirow{2}{*}{} & \multirow{2}{*}{$\checkmark$} & \multirow{2}{*}{$\checkmark$} & \multirow{2}{*}{$\checkmark$} & \multirow{2}{*}{$\checkmark$} & 0.7011 ± 0.0251 & 0.8607 ± 0.0071 & 0.8329 ± 0.0199 & 0.8009 ± 0.0123 & \textbf{0.9526 ± 0.0025} \\
+PET &  &  &  &  &  & 0.7340 ± 0.0183 (+4.69\%) & 0.8612 ± 0.0091 (+0.06\%) & 0.8432 ± 0.0114 (+1.23\%) & 0.8086 ± 0.0117 (+0.96\%) & 0.9508 ± 0.0048 (-0.18\%) \\
\midrule
\end{tabular}
}
\\
\vspace{0.5em}

\resizebox{0.95\textwidth}{!}{
\begin{tabular}{l!{\vrule}ccccc!{\vrule}ccccc}
\midrule
Method & CA & AS & DA & NL & TR & TI & VE & WI & WIN & YE \\
\midrule
NT & \multirow{2}{*}{$\checkmark$} & \multirow{2}{*}{} & \multirow{2}{*}{$\checkmark$} & \multirow{2}{*}{} & \multirow{2}{*}{} & 0.9719 ± 0.0061 & 0.7979 ± 0.0134 & 0.9570 ± 0.0079 & 0.3629 ± 0.0121 & 0.5449 ± 0.0062 \\
+PET &  &  &  &  &  & 0.9797 ± 0.0017 (+0.80\%) & 0.7967 ± 0.0083 (-0.15\%) & 0.9668 ± 0.0078 (+1.02\%) & \underline{0.3775 ± 0.0241} (+4.01\%) & 0.5599 ± 0.0190 (+2.76\%) \\
\arrayrulecolor{lightgray}\hline\arrayrulecolor{black}
LG & \multirow{2}{*}{$\checkmark$} & \multirow{2}{*}{} & \multirow{2}{*}{$\checkmark$} & \multirow{2}{*}{} & \multirow{2}{*}{} & 0.9719 ± 0.0061 & 0.7856 ± 0.0180 & 0.9570 ± 0.0079 & 0.3629 ± 0.0121 & 0.5449 ± 0.0062 \\
+PET &  &  &  &  &  & 0.9787 ± 0.0051 (+0.70\%) & 0.7969 ± 0.0122 (+1.43\%) & \textbf{0.9706 ± 0.0063} (+1.42\%) & 0.3718 ± 0.0151 (+2.43\%) & \underline{0.5634 ± 0.0223} (+3.40\%) \\
\arrayrulecolor{lightgray}\hline\arrayrulecolor{black}
PS & \multirow{2}{*}{} & \multirow{2}{*}{$\checkmark$} & \multirow{2}{*}{} & \multirow{2}{*}{} & \multirow{2}{*}{} & 0.9719 ± 0.0061 & 0.7856 ± 0.0180 & 0.9570 ± 0.0079 & 0.3629 ± 0.0121 & 0.5449 ± 0.0062 \\
+PET &  &  &  &  &  & \underline{0.9826 ± 0.0045} (+1.10\%) & \underline{0.8069 ± 0.0160} (+2.71\%) & 0.9633 ± 0.0117 (+0.65\%) & \textbf{0.3793 ± 0.0112} (+4.50\%) & 0.5544 ± 0.0138 (+1.76\%) \\
\arrayrulecolor{lightgray}\hline\arrayrulecolor{black}
SM & \multirow{2}{*}{} & \multirow{2}{*}{$\checkmark$} & \multirow{2}{*}{} & \multirow{2}{*}{$\checkmark$} & \multirow{2}{*}{} & 0.9719 ± 0.0061 & 0.7856 ± 0.0180 & 0.9570 ± 0.0079 & 0.3629 ± 0.0121 & 0.5449 ± 0.0062 \\
+PET &  &  &  &  &  & 0.9772 ± 0.0047 (+0.54\%) & \textbf{0.8080 ± 0.0134} (+2.84\%) & \underline{0.9704 ± 0.0080} (+1.39\%) & 0.3755 ± 0.0158 (+3.47\%) & 0.5618 ± 0.0214 (+3.10\%) \\
\arrayrulecolor{lightgray}\hline\arrayrulecolor{black}
CL & \multirow{2}{*}{} & \multirow{2}{*}{$\checkmark$} & \multirow{2}{*}{$\checkmark$} & \multirow{2}{*}{$\checkmark$} & \multirow{2}{*}{$\checkmark$} & 0.9719 ± 0.0061 & 0.7916 ± 0.0194 & 0.9570 ± 0.0079 & 0.3640 ± 0.0044 & 0.5449 ± 0.0062 \\
+PET &  &  &  &  &  & \textbf{0.9853 ± 0.0057} (+1.38\%) & 0.8024 ± 0.0209 (+1.36\%) & 0.9684 ± 0.0100 (+1.18\%) & 0.3693 ± 0.0064 (+1.44\%) & \textbf{0.5635 ± 0.0126} (+3.41\%) \\
\bottomrule
\end{tabular}
}
\caption{Detailed performance on different graph estimation approaches for FT-Transformer on classification datasets. Each result is averaged over 5 random seeds. Use balanced accuracy (higher is better). CA = Causal, AS = Association, DA = Directed Acyclic, NL = Nonlinear, TR = Tree. Results show baseline performance and Tab-PET performance with improvement percentages in parentheses compared to the corresponding baselines. \textbf{Bold} indicates the best result, and \underline{underline} indicates the second-best result.}
\label{tab:ablation_classification_results}
\end{table}

\begin{table}[H]
\centering
\tiny
\setlength{\tabcolsep}{5pt}
\renewcommand{\arraystretch}{1.2}
\definecolor{lightgray}{gray}{0.7}

\resizebox{0.95\textwidth}{!}{
\begin{tabular}{l!{\vrule}ccccc!{\vrule}ccccc}
\toprule
Method & CA & AS & DA & NL & TR & QS & AB & AI & BO & BOS \\
\midrule
NT & \multirow{2}{*}{$\checkmark$} & \multirow{2}{*}{} & \multirow{2}{*}{$\checkmark$} & \multirow{2}{*}{} & \multirow{2}{*}{} & 0.9706 ± 0.0163 & 2.1627 ± 0.0153 & 1.4659 ± 0.0911 & 0.3401 ± 0.0368 & 2.9440 ± 0.3526 \\
+PET &  &  &  &  &  & 0.9689 ± 0.0193 (+0.17\%) & \underline{2.1461 ± 0.0175} (+0.77\%) & 1.3139 ± 0.0351 (+10.37\%) & 0.3199 ± 0.0243 (+5.92\%) & 3.0288 ± 0.2175 (-2.88\%) \\
\arrayrulecolor{lightgray}\hline\arrayrulecolor{black}
LG & \multirow{2}{*}{$\checkmark$} & \multirow{2}{*}{} & \multirow{2}{*}{$\checkmark$} & \multirow{2}{*}{} & \multirow{2}{*}{} & 0.9706 ± 0.0163 & 2.2002 ± 0.0326 & 1.4659 ± 0.0911 & 0.2968 ± 0.0345 & 3.1747 ± 0.3973 \\
+PET &  &  &  &  &  & \underline{0.9655 ± 0.0145} (+0.52\%) & \textbf{2.1367 ± 0.0122} (+2.89\%) & \textbf{1.2981 ± 0.0422} (+11.45\%) & 0.3004 ± 0.0305 (-1.22\%) & \underline{2.8108 ± 0.1238} (+11.46\%) \\
\arrayrulecolor{lightgray}\hline\arrayrulecolor{black}
PS & \multirow{2}{*}{} & \multirow{2}{*}{$\checkmark$} & \multirow{2}{*}{} & \multirow{2}{*}{} & \multirow{2}{*}{} & 0.9706 ± 0.0163 & 2.2002 ± 0.0326 & 1.4659 ± 0.0911 & 0.3268 ± 0.0891 & 3.2319 ± 0.1545 \\
+PET &  &  &  &  &  & 0.9684 ± 0.0142 (+0.22\%) & 2.1699 ± 0.0121 (+1.38\%) & 1.3446 ± 0.0510 (+8.28\%) & \underline{0.2913 ± 0.0433} (+10.84\%) & \textbf{2.7359 ± 0.1319} (+15.35\%) \\
\arrayrulecolor{lightgray}\hline\arrayrulecolor{black}
SM & \multirow{2}{*}{} & \multirow{2}{*}{$\checkmark$} & \multirow{2}{*}{} & \multirow{2}{*}{$\checkmark$} & \multirow{2}{*}{} & 0.9706 ± 0.0163 & 2.2002 ± 0.0326 & 1.4659 ± 0.0911 & 0.3268 ± 0.0891 & 3.2319 ± 0.1545 \\
+PET &  &  &  &  &  & \textbf{0.9606 ± 0.0199} (+1.03\%) & 2.1681 ± 0.0124 (+1.46\%) & \underline{1.3096 ± 0.0510} (+10.67\%) & \textbf{0.2773 ± 0.0480} (+15.13\%) & 2.9934 ± 0.2309 (+7.38\%) \\
\arrayrulecolor{lightgray}\hline\arrayrulecolor{black}
CL & \multirow{2}{*}{} & \multirow{2}{*}{$\checkmark$} & \multirow{2}{*}{$\checkmark$} & \multirow{2}{*}{$\checkmark$} & \multirow{2}{*}{$\checkmark$} & 0.9706 ± 0.0163 & 2.2002 ± 0.0326 & 1.4659 ± 0.0911 & 0.3390 ± 0.0653 & 3.1747 ± 0.3973 \\
+PET &  &  &  &  &  & 0.9721 ± 0.0153 (-0.15\%) & 2.1632 ± 0.0095 (+1.68\%) & 1.3202 ± 0.0345 (+9.94\%) & 0.3371 ± 0.0830 (+0.56\%) & 2.9734 ± 0.5154 (+6.34\%) \\
\midrule
\end{tabular}
}
\\
\vspace{0.5em}

\resizebox{0.95\textwidth}{!}{
\begin{tabular}{l!{\vrule}ccccc!{\vrule}ccccc}
\midrule
Method & CA & AS & DA & NL & TR & CA & CH & CL & CO & CP \\
\midrule
NT & \multirow{2}{*}{$\checkmark$} & \multirow{2}{*}{} & \multirow{2}{*}{$\checkmark$} & \multirow{2}{*}{} & \multirow{2}{*}{} & 0.1406 ± 0.0015 & \textbf{0.5399 ± 0.0007} & 0.4844 ± 0.0719 & 5.0793 ± 0.1121 & 2.3816 ± 0.0269 \\
+PET &  &  &  &  &  & \textbf{0.1392 ± 0.0014} (+1.00\%) & \textbf{0.5399 ± 0.0004} (-0.01\%) & 0.3109 ± 0.0178 (+35.82\%) & 5.0347 ± 0.1203 (+0.88\%) & \underline{2.3433 ± 0.0269} (+1.61\%) \\
\arrayrulecolor{lightgray}\hline\arrayrulecolor{black}
LG & \multirow{2}{*}{$\checkmark$} & \multirow{2}{*}{} & \multirow{2}{*}{$\checkmark$} & \multirow{2}{*}{} & \multirow{2}{*}{} & 0.1406 ± 0.0015 & \textbf{0.5399 ± 0.0007} & 0.4844 ± 0.0719 & 5.0793 ± 0.1121 & 2.3626 ± 0.0744 \\
+PET &  &  &  &  &  & \textbf{0.1392 ± 0.0010} (+1.00\%) & \textbf{0.5399 ± 0.0004} (-0.01\%) & \underline{0.3094 ± 0.0163} (+36.13\%) & 5.0253 ± 0.1506 (+1.06\%) & 2.3531 ± 0.0257 (+0.40\%) \\
\arrayrulecolor{lightgray}\hline\arrayrulecolor{black}
PS & \multirow{2}{*}{} & \multirow{2}{*}{$\checkmark$} & \multirow{2}{*}{} & \multirow{2}{*}{} & \multirow{2}{*}{} & 0.1406 ± 0.0015 & \textbf{0.5399 ± 0.0007} & 0.4844 ± 0.0719 & 5.0793 ± 0.1121 & 2.3576 ± 0.0224 \\
+PET &  &  &  &  &  & 0.1409 ± 0.0016 (-0.22\%) & 0.5401 ± 0.0007 (-0.04\%) & \textbf{0.3077 ± 0.0069} (+36.47\%) & \underline{4.8889 ± 0.1001} (+3.75\%) & 2.3516 ± 0.0499 (+0.25\%) \\
\arrayrulecolor{lightgray}\hline\arrayrulecolor{black}
SM & \multirow{2}{*}{} & \multirow{2}{*}{$\checkmark$} & \multirow{2}{*}{} & \multirow{2}{*}{$\checkmark$} & \multirow{2}{*}{} & 0.1406 ± 0.0015 & \textbf{0.5399 ± 0.0007} & 0.4844 ± 0.0719 & 5.0793 ± 0.1121 & 2.3576 ± 0.0224 \\
+PET &  &  &  &  &  & 0.1408 ± 0.0013 (-0.18\%) & 0.5402 ± 0.0007 (-0.05\%) & 0.3154 ± 0.0132 (+34.88\%) & 5.0447 ± 0.2000 (+0.68\%) & \textbf{2.3392 ± 0.0273} (+0.78\%) \\
\arrayrulecolor{lightgray}\hline\arrayrulecolor{black}
CL & \multirow{2}{*}{} & \multirow{2}{*}{$\checkmark$} & \multirow{2}{*}{$\checkmark$} & \multirow{2}{*}{$\checkmark$} & \multirow{2}{*}{$\checkmark$} & 0.1406 ± 0.0015 & \textbf{0.5399 ± 0.0007} & 0.4844 ± 0.0719 & 5.0793 ± 0.1121 & 2.3672 ± 0.0476 \\
+PET &  &  &  &  &  & 0.1408 ± 0.0011 (-0.15\%) & 0.5401 ± 0.0006 (-0.04\%) & 0.3112 ± 0.0052 (+35.76\%) & \textbf{4.8684 ± 0.2928} (+4.15\%) & 2.3541 ± 0.0149 (+0.55\%) \\
\midrule
\end{tabular}
}
\\
\vspace{0.5em}

\resizebox{0.95\textwidth}{!}{
\begin{tabular}{l!{\vrule}ccccc!{\vrule}ccccc}
\midrule
Method & CA & AS & DA & NL & TR & CPU & DIA & EN & FR & GR \\
\midrule
NT & \multirow{2}{*}{$\checkmark$} & \multirow{2}{*}{} & \multirow{2}{*}{$\checkmark$} & \multirow{2}{*}{} & \multirow{2}{*}{} & 2.8538 ± 0.0295 & 2467.5773 ± 36.9375 & 0.4842 ± 0.0240 & 0.7943 ± 0.0667 & 0.0057 ± 0.0001 \\
+PET &  &  &  &  &  & \underline{2.8177 ± 0.0578} (+1.26\%) & 2460.5213 ± 37.2611 (+0.29\%) & \underline{0.4682 ± 0.0277} (+3.31\%) & 0.7812 ± 0.0754 (+1.66\%) & 0.0056 ± 0.0001 (+1.27\%) \\
\arrayrulecolor{lightgray}\hline\arrayrulecolor{black}
LG & \multirow{2}{*}{$\checkmark$} & \multirow{2}{*}{} & \multirow{2}{*}{$\checkmark$} & \multirow{2}{*}{} & \multirow{2}{*}{} & 2.8587 ± 0.0368 & 2517.6621 ± 50.5441 & 0.4842 ± 0.0240 & 0.7387 ± 0.0984 & 0.0057 ± 0.0001 \\
+PET &  &  &  &  &  & 2.8265 ± 0.0277 (+1.13\%) & 2491.0303 ± 21.3126 (+1.06\%) & 0.4825 ± 0.0193 (+0.36\%) & \textbf{0.7081 ± 0.0449} (+4.15\%) & 0.0056 ± 0.0001 (+1.15\%) \\
\arrayrulecolor{lightgray}\hline\arrayrulecolor{black}
PS & \multirow{2}{*}{} & \multirow{2}{*}{$\checkmark$} & \multirow{2}{*}{} & \multirow{2}{*}{} & \multirow{2}{*}{} & 2.8557 ± 0.0583 & 2467.5773 ± 36.9375 & 0.4842 ± 0.0240 & 0.7387 ± 0.0984 & 0.0057 ± 0.0001 \\
+PET &  &  &  &  &  & \textbf{2.8133 ± 0.0276} (+1.48\%) & \underline{2447.4401 ± 21.7826} (+0.82\%) & 0.4795 ± 0.0189 (+0.98\%) & 0.7784 ± 0.0901 (-5.37\%) & \textbf{0.0055 ± 0.0001} (+3.70\%) \\
\arrayrulecolor{lightgray}\hline\arrayrulecolor{black}
SM & \multirow{2}{*}{} & \multirow{2}{*}{$\checkmark$} & \multirow{2}{*}{} & \multirow{2}{*}{$\checkmark$} & \multirow{2}{*}{} & 2.8557 ± 0.0583 & 2467.5773 ± 36.9375 & 0.4842 ± 0.0240 & 0.7387 ± 0.0984 & 0.0057 ± 0.0001 \\
+PET &  &  &  &  &  & 2.8270 ± 0.0513 (+1.00\%) & \textbf{2447.4315 ± 41.0115} (+0.82\%) & 0.4725 ± 0.0276 (+2.43\%) & \underline{0.7164 ± 0.0808} (+3.02\%) & \textbf{0.0055 ± 0.0001} (+2.84\%) \\
\arrayrulecolor{lightgray}\hline\arrayrulecolor{black}
CL & \multirow{2}{*}{} & \multirow{2}{*}{$\checkmark$} & \multirow{2}{*}{$\checkmark$} & \multirow{2}{*}{$\checkmark$} & \multirow{2}{*}{$\checkmark$} & 2.8538 ± 0.0295 & 2510.3224 ± 67.9050 & 0.4842 ± 0.0240 & 0.7790 ± 0.0430 & 0.0057 ± 0.0002 \\
+PET &  &  &  &  &  & 2.8404 ± 0.0366 (+0.47\%) & 2469.6160 ± 94.8212 (+1.62\%) & \textbf{0.4608 ± 0.0251} (+4.83\%) & 0.7710 ± 0.1161 (+1.02\%) & 0.0056 ± 0.0001 (+2.25\%) \\
\midrule
\end{tabular}
}
\\
\vspace{0.5em}

\resizebox{0.95\textwidth}{!}{
\begin{tabular}{l!{\vrule}ccccc!{\vrule}ccccc}
\midrule
Method & CA & AS & DA & NL & TR & KI & LI & MU & PL & SE \\
\midrule
NT & \multirow{2}{*}{$\checkmark$} & \multirow{2}{*}{} & \multirow{2}{*}{$\checkmark$} & \multirow{2}{*}{} & \multirow{2}{*}{} & 0.0700 ± 0.0008 & 2.9361 ± 0.0481 & 25.5915 ± 1.4039 & 592.3889 ± 0.9151 & \textbf{0.6573 ± 0.0146} \\
+PET &  &  &  &  &  & 0.0671 ± 0.0006 (+4.20\%) & 2.8963 ± 0.0163 (+1.35\%) & \underline{22.2368 ± 0.5336} (+13.11\%) & 591.6694 ± 0.7614 (+0.12\%) & 0.6631 ± 0.0196 (-0.87\%) \\
\arrayrulecolor{lightgray}\hline\arrayrulecolor{black}
LG & \multirow{2}{*}{$\checkmark$} & \multirow{2}{*}{} & \multirow{2}{*}{$\checkmark$} & \multirow{2}{*}{} & \multirow{2}{*}{} & 0.0700 ± 0.0008 & 2.9361 ± 0.0481 & 25.5915 ± 1.4039 & 592.3889 ± 0.9151 & 0.6746 ± 0.0154 \\
+PET &  &  &  &  &  & 0.0671 ± 0.0006 (+4.20\%) & \underline{2.8869 ± 0.0226} (+1.67\%) & 22.6987 ± 1.3330 (+11.30\%) & \underline{591.5936 ± 0.6613} (+0.13\%) & 0.6644 ± 0.0160 (+1.51\%) \\
\arrayrulecolor{lightgray}\hline\arrayrulecolor{black}
PS & \multirow{2}{*}{} & \multirow{2}{*}{$\checkmark$} & \multirow{2}{*}{} & \multirow{2}{*}{} & \multirow{2}{*}{} & 0.0700 ± 0.0008 & 2.9361 ± 0.0481 & 25.8524 ± 0.4659 & 592.0474 ± 1.0442 & 0.6914 ± 0.0265 \\
+PET &  &  &  &  &  & 0.0672 ± 0.0008 (+3.95\%) & 2.8996 ± 0.0291 (+1.24\%) & 22.8125 ± 0.7793 (+11.76\%) & 591.7507 ± 0.8222 (+0.05\%) & \underline{0.6596 ± 0.0174} (+4.60\%) \\
\arrayrulecolor{lightgray}\hline\arrayrulecolor{black}
SM & \multirow{2}{*}{} & \multirow{2}{*}{$\checkmark$} & \multirow{2}{*}{} & \multirow{2}{*}{$\checkmark$} & \multirow{2}{*}{} & 0.0700 ± 0.0008 & 2.9361 ± 0.0481 & 25.8524 ± 0.4659 & 592.0474 ± 1.0442 & 0.6914 ± 0.0265 \\
+PET &  &  &  &  &  & \textbf{0.0665 ± 0.0005} (+4.95\%) & \textbf{2.8798 ± 0.0155} (+1.92\%) & 22.9815 ± 0.8778 (+11.11\%) & \textbf{591.3838 ± 0.7633} (+0.11\%) & 0.6662 ± 0.0095 (+3.65\%) \\
\arrayrulecolor{lightgray}\hline\arrayrulecolor{black}
CL & \multirow{2}{*}{} & \multirow{2}{*}{$\checkmark$} & \multirow{2}{*}{$\checkmark$} & \multirow{2}{*}{$\checkmark$} & \multirow{2}{*}{$\checkmark$} & 0.0700 ± 0.0008 & 2.9361 ± 0.0481 & 25.6402 ± 1.1414 & 592.3889 ± 0.9151 & 0.6914 ± 0.0265 \\
+PET &  &  &  &  &  & \underline{0.0667 ± 0.0008} (+4.73\%) & 2.8965 ± 0.0277 (+1.35\%) & \textbf{20.9698 ± 0.9666} (+18.22\%) & 591.7895 ± 0.6451 (+0.10\%) & 0.6655 ± 0.0153 (+3.74\%) \\
\midrule
\end{tabular}
}
\\
\vspace{0.5em}

\resizebox{0.95\textwidth}{!}{
\begin{tabular}{l!{\vrule}ccccc!{\vrule}ccccc}
\midrule
Method & CA & AS & DA & NL & TR & SO & SP & STO & TE & WIS \\
\midrule
NT & \multirow{2}{*}{$\checkmark$} & \multirow{2}{*}{} & \multirow{2}{*}{$\checkmark$} & \multirow{2}{*}{} & \multirow{2}{*}{} & 17.8475 ± 0.4801 & 0.1060 ± 0.0012 & 0.6951 ± 0.0327 & 4.3082 ± 0.2317 & 38.7461 ± 0.9942 \\
+PET &  &  &  &  &  & 17.8932 ± 0.1668 (-0.26\%) & \textbf{0.1002 ± 0.0013} (+5.51\%) & 0.6869 ± 0.0306 (+1.18\%) & 4.2205 ± 0.1868 (+2.04\%) & \textbf{37.4950 ± 0.1718} (+3.23\%) \\
\arrayrulecolor{lightgray}\hline\arrayrulecolor{black}
LG & \multirow{2}{*}{$\checkmark$} & \multirow{2}{*}{} & \multirow{2}{*}{$\checkmark$} & \multirow{2}{*}{} & \multirow{2}{*}{} & 17.8475 ± 0.4801 & 0.1060 ± 0.0012 & 0.6951 ± 0.0327 & 4.3082 ± 0.2317 & 38.1507 ± 0.1211 \\
+PET &  &  &  &  &  & \textbf{17.7000 ± 0.5455} (+0.83\%) & \underline{0.1015 ± 0.0019} (+4.24\%) & 0.6880 ± 0.0377 (+1.02\%) & 4.2510 ± 0.2027 (+1.33\%) & 37.5892 ± 0.2247 (+1.47\%) \\
\arrayrulecolor{lightgray}\hline\arrayrulecolor{black}
PS & \multirow{2}{*}{} & \multirow{2}{*}{$\checkmark$} & \multirow{2}{*}{} & \multirow{2}{*}{} & \multirow{2}{*}{} & 17.8475 ± 0.4801 & 0.1060 ± 0.0012 & 0.6951 ± 0.0327 & \underline{4.1132 ± 0.1258} & 38.0113 ± 0.2432 \\
+PET &  &  &  &  &  & \underline{17.8157 ± 0.4119} (+0.18\%) & 0.1039 ± 0.0020 (+1.99\%) & 0.6956 ± 0.0191 (-0.07\%) & \textbf{4.0375 ± 0.1762} (+1.84\%) & 37.8419 ± 0.2728 (+0.45\%) \\
\arrayrulecolor{lightgray}\hline\arrayrulecolor{black}
SM & \multirow{2}{*}{} & \multirow{2}{*}{$\checkmark$} & \multirow{2}{*}{} & \multirow{2}{*}{$\checkmark$} & \multirow{2}{*}{} & 17.8475 ± 0.4801 & 0.1060 ± 0.0012 & 0.6951 ± 0.0327 & \underline{4.1132 ± 0.1258} & 38.0113 ± 0.2432 \\
+PET &  &  &  &  &  & 17.8315 ± 0.2163 (+0.09\%) & 0.1039 ± 0.0015 (+1.97\%) & \underline{0.6763 ± 0.0155} (+2.71\%) & 4.1229 ± 0.1091 (-0.24\%) & 37.9194 ± 0.2819 (+0.24\%) \\
\arrayrulecolor{lightgray}\hline\arrayrulecolor{black}
CL & \multirow{2}{*}{} & \multirow{2}{*}{$\checkmark$} & \multirow{2}{*}{$\checkmark$} & \multirow{2}{*}{$\checkmark$} & \multirow{2}{*}{$\checkmark$} & 17.8475 ± 0.4801 & 0.1060 ± 0.0012 & 0.6951 ± 0.0327 & 4.3082 ± 0.2317 & 38.7461 ± 0.9942 \\
+PET &  &  &  &  &  & 17.9177 ± 0.2947 (-0.39\%) & 0.1029 ± 0.0015 (+2.95\%) & \textbf{0.6734 ± 0.0176} (+3.12\%) & 4.2480 ± 0.2301 (+1.40\%) & \underline{37.5466 ± 0.1034} (+3.10\%) \\
\bottomrule
\end{tabular}
}
\caption{Detailed performance on different graph estimation approaches for FT-Transformer on regression datasets. Each result is averaged over 5 random seeds. Use RMSE (lower is better). CA = Causal, AS = Association, DA = Directed Acyclic, NL = Nonlinear, TR = Tree. Results show baseline performance and Tab-PET performance with improvement percentages in parentheses compared to the corresponding baselines. \textbf{Bold} indicates the best result, and \underline{underline} indicates the second-best result.}
\label{tab:ablation_regression_results}
\end{table}

\noindent\textbf{Baseline Performance Variations:} When comparing the five graph estimation approaches, minor differences in baseline performance across certain datasets can be observed. This variation arises from the inherent differences in the Laplacian eigenvectors produced by each graph estimation method. Since different methods generate distinct graph structures with varying spectral properties, the automatic k selection algorithm (detailed in technical appendix \ref{app:4}, along with Algorithm \ref{alg:auto_k_selection}) selects different numbers of eigenvectors for each approach. Consequently, this leads to different PE dimensions ($d_{\text{PE}}$) across methods, which in turn affects the token dimension ($d_{\text{token}} = 192 - d_{\text{PE}}$) according to our fair comparison framework.

\subsection{Timing Analysis of 5 Graph Estimation Approaches} \label{app:7.2}

\begin{table}[H]
\centering
\tiny
\setlength{\tabcolsep}{3pt}
\renewcommand{\arraystretch}{1.2}
\definecolor{lightgray}{gray}{0.7}
\begin{tabular}{l!{\vrule}ccccccccccccccccccccccccc}
\toprule
Method & AU & GE & SA & BL & CHU & CM & CR & DI & DN & EY & FI & HE & JA & KC & KR & MA & PH & QSA & ST & SY & TI & VE & WI & WIN & YE \\
\midrule
NT & 2.68 & 7.59 & 119.97 & 0.26 & 3.77 & 2.32 & 22.69 & \textbf{0.16} & 1173.45 & 5.62 & 117.22 & 11.59 & 2048.70 & 11.40 & 74.16 & 131.11 & 1.00 & 20.38 & 13.24 & 0.75 & 5.74 & 7.54 & 0.83 & 0.51 & 0.27 \\
\arrayrulecolor{lightgray}\hline\arrayrulecolor{black}
LG & \underline{0.44} & 0.60 & 0.63 & 0.40 & 0.51 & 1.54 & 1.91 & 0.23 & 278.84 & 0.43 & 1.23 & 0.52 & 118.83 & 0.37 & 6.38 & 119.21 & 0.28 & 0.67 & 0.34 & 0.56 & \underline{0.54} & 0.39 & 0.23 & \textbf{0.17} & \underline{0.18} \\
\arrayrulecolor{lightgray}\hline\arrayrulecolor{black}
PS & 0.57 & \textbf{0.32} & 0.74 & \underline{0.19} & \underline{0.45} & \underline{0.32} & \underline{0.48} & 0.28 & \underline{10.08} & \underline{0.27} & \underline{0.38} & \textbf{0.20} & \underline{6.17} & 0.34 & \underline{0.64} & \underline{5.33} & 0.26 & \underline{0.51} & 0.24 & 0.48 & 0.67 & \underline{0.34} & \textbf{0.15} & \underline{0.23} & 0.21 \\
\arrayrulecolor{lightgray}\hline\arrayrulecolor{black}
SM & \textbf{0.33} & \underline{0.34} & \textbf{0.26} & \textbf{0.17} & \textbf{0.31} & 0.35 & 0.51 & \underline{0.19} & 10.12 & 0.28 & 0.76 & \underline{0.21} & 6.69 & \underline{0.21} & 0.90 & 5.49 & \underline{0.24} & 1.48 & \underline{0.20} & \underline{0.40} & 0.65 & 0.46 & 0.22 & 0.26 & 0.22 \\
\arrayrulecolor{lightgray}\hline\arrayrulecolor{black}
CL & 0.59 & 0.34 & \underline{0.41} & 0.37 & 0.50 & \textbf{0.19} & \textbf{0.37} & 0.19 & \textbf{2.68} & \textbf{0.26} & \textbf{0.26} & 0.25 & \textbf{1.14} & \textbf{0.20} & \textbf{0.42} & \textbf{1.25} & \textbf{0.20} & \textbf{0.24} & \textbf{0.16} & \textbf{0.24} & \textbf{0.28} & \textbf{0.22} & \underline{0.17} & 0.39 & \textbf{0.14} \\
\bottomrule
\end{tabular}
\vspace{1em}

\begin{tabular}{l!{\vrule}ccccccccccccccccccccccccc}
\toprule
Method & QS & AB & AI & BO & BOS & CA & CH & CL & CO & CP & CPU & DIA & EN & FR & GR & KI & LI & MU & PL & SE & SO & SP & STO & TE & WIS \\
\midrule
NT & \underline{0.17} & 2.95 & 0.20 & 0.50 & 0.48 & 0.22 & \textbf{0.14} & \underline{0.19} & 0.23 & 4.56 & 2.96 & 4.27 & \underline{0.17} & 0.49 & 0.24 & \textbf{0.16} & \textbf{0.15} & 1.19 & 0.44 & 1.08 & 2.20 & 0.20 & 0.48 & 22.88 & 12.19 \\
\arrayrulecolor{lightgray}\hline\arrayrulecolor{black}
LG & 0.18 & 0.24 & \underline{0.20} & 0.22 & \underline{0.24} & \underline{0.19} & 0.19 & \textbf{0.16} & \textbf{0.15} & \underline{0.28} & \underline{0.27} & 0.48 & 0.19 & 0.40 & 0.35 & 0.17 & 0.16 & \underline{0.21} & \underline{0.23} & 0.78 & 0.49 & 0.19 & \textbf{0.20} & 5.72 & 0.47 \\
\arrayrulecolor{lightgray}\hline\arrayrulecolor{black}
PS & \textbf{0.15} & \textbf{0.18} & 0.29 & \textbf{0.17} & \textbf{0.23} & \textbf{0.17} & 0.33 & 0.35 & 0.41 & 1.40 & 0.39 & \underline{0.25} & 0.19 & 0.48 & \underline{0.18} & 0.17 & \underline{0.15} & 0.22 & \textbf{0.19} & \underline{0.28} & 0.33 & 0.21 & 0.52 & 1.33 & 0.80 \\
\arrayrulecolor{lightgray}\hline\arrayrulecolor{black}
SM & 0.18 & \underline{0.20} & 0.24 & \underline{0.17} & 0.25 & 0.21 & 0.24 & 0.48 & 0.29 & 0.42 & 0.40 & 0.31 & \textbf{0.15} & \textbf{0.21} & 0.20 & \underline{0.16} & 0.34 & 0.37 & 0.42 & 0.60 & \textbf{0.25} & \underline{0.16} & 0.24 & \underline{1.29} & \underline{0.27} \\
\arrayrulecolor{lightgray}\hline\arrayrulecolor{black}
CL & 0.18 & 0.22 & \textbf{0.15} & 0.24 & 0.25 & 1.17 & \underline{0.15} & 0.25 & \underline{0.17} & \textbf{0.23} & \textbf{0.24} & \textbf{0.18} & 0.30 & \underline{0.22} & \textbf{0.16} & 0.17 & 0.26 & \textbf{0.17} & 1.10 & \textbf{0.21} & \underline{0.28} & \textbf{0.16} & \underline{0.21} & \textbf{0.47} & \textbf{0.26} \\
\bottomrule
\end{tabular}
\caption{Graph estimation and PE computation time (minutes) for different graph estimation methods across all datasets. Times represent the computational overhead of graph estimation and PE generation before transformer training. \textbf{Bold} indicates fastest method, \underline{underlined} indicates second-fastest method. NT = NOTEARS, LG = LiNGAM, PS = Pearson, SM = Spearman, CL = Chow-Liu.}
\label{tab:timing_combined_results}
\end{table}

\newpage
\subsection{Classification and Regression on Multiple Baselines} \label{app:7.3}

\begin{table}[H]
\centering
\tiny
\definecolor{lightgray}{gray}{0.7}
\begin{tabular}{l!{\vrule}ccccc}
\toprule
Model & AU & GE & SA & BL & CHU \\
\midrule
XGBoost & 0.8371 ± 0.0167\ddag & 0.6136 ± 0.0033 & 0.7457 ± 0.0163 & 0.7253 ± 0.0160 & 0.8928 ± 0.0065 \\
CatBoost & 0.8304 ± 0.0173 & 0.6352 ± 0.0044 & 0.7759 ± 0.0417 & 0.7053 ± 0.0089 & 0.8943 ± 0.0026 \\
\arrayrulecolor{lightgray}\hline\arrayrulecolor{black}
FT-Trans & 0.8283 ± 0.0093 & 0.6412 ± 0.0081 & 0.7791 ± 0.0266 & 0.7481 ± 0.0078 & 0.9076 ± 0.0043\ddag \\
FT-Trans+PET & \textbf{0.8379 ± 0.0162\dag*} (\textbf{+1.16\%}) & \textbf{0.6502 ± 0.0080} (\textbf{+1.41\%}) & \textbf{0.7924 ± 0.0252} (\textbf{+1.71\%}) & \textbf{0.7582 ± 0.0055\dag*} (\textbf{+1.35\%}) & \textbf{0.9172 ± 0.0046\dag*} (\textbf{+1.05\%}) \\
\arrayrulecolor{lightgray}\hline\arrayrulecolor{black}
SAINT & \textbf{0.8264 ± 0.0075} & \textbf{0.6629 ± 0.0081\dag} & 0.8140 ± 0.0675\ddag & 0.7329 ± 0.0314 & 0.8545 ± 0.0132 \\
SAINT+PET & 0.8263 ± 0.0074 (-0.01\%) & 0.6601 ± 0.0075\ddag* (-0.42\%) & \textbf{0.8619 ± 0.0617\dag*} (\textbf{+5.88\%}) & \textbf{0.7558 ± 0.0098\ddag*} (\textbf{+3.13\%}) & \textbf{0.8783 ± 0.0053} (\textbf{+2.79\%}) \\
\arrayrulecolor{lightgray}\hline\arrayrulecolor{black}
TabTrans & 0.8222 ± 0.0125 & — & — & — & 0.8305 ± 0.0083 \\
TabTrans+PET & \textbf{0.8359 ± 0.0082} (\textbf{+1.66\%}) & — & — & — & \textbf{0.8399 ± 0.0134} (\textbf{+1.13\%}) \\
\midrule
\end{tabular}
\\
\vspace{0.5em}

\begin{tabular}{l!{\vrule}ccccc}
\midrule
Model & CM & CR & DI & DN & EY \\
\midrule
XGBoost & 0.5707 ± 0.0124\dag & 0.7224 ± 0.0134\dag & 0.7205 ± 0.0154 & 0.9664 ± 0.0019\dag & 0.6964 ± 0.0041 \\
CatBoost & 0.5620 ± 0.0152 & 0.6814 ± 0.0162 & 0.7095 ± 0.0078 & 0.9643 ± 0.0019\ddag & 0.7260 ± 0.0024 \\
\arrayrulecolor{lightgray}\hline\arrayrulecolor{black}
FT-Trans & 0.5449 ± 0.0222 & 0.6443 ± 0.0244 & 0.7691 ± 0.0148\ddag & 0.9528 ± 0.0071 & 0.7480 ± 0.0078\ddag \\
FT-Trans+PET & \textbf{0.5705 ± 0.0060\ddag*} (\textbf{+4.70\%}) & \textbf{0.6821 ± 0.0201} (\textbf{+5.88\%}) & \textbf{0.7712 ± 0.0179\dag*} (\textbf{+0.28\%}) & \textbf{0.9617 ± 0.0042} (\textbf{+0.93\%}) & \textbf{0.7499 ± 0.0060\dag*} (\textbf{+0.24\%}) \\
\arrayrulecolor{lightgray}\hline\arrayrulecolor{black}
SAINT & 0.5572 ± 0.0088 & \textbf{0.7093 ± 0.0097\ddag} & 0.7302 ± 0.0232 & 0.9606 ± 0.0059 & 0.6870 ± 0.0059 \\
SAINT+PET & \textbf{0.5634 ± 0.0061} (\textbf{+1.12\%}) & 0.7086 ± 0.0137 (-0.10\%) & \textbf{0.7396 ± 0.0347} (\textbf{+1.29\%}) & \textbf{0.9636 ± 0.0036} (\textbf{+0.32\%}) & \textbf{0.7301 ± 0.0050} (\textbf{+6.27\%}) \\
\arrayrulecolor{lightgray}\hline\arrayrulecolor{black}
TabTrans & 0.4854 ± 0.0090 & 0.6612 ± 0.0304 & — & 0.9561 ± 0.0045 & 0.5948 ± 0.0037 \\
TabTrans+PET & \textbf{0.4884 ± 0.0156} (\textbf{+0.60\%}) & \textbf{0.6833 ± 0.0137} (\textbf{+3.35\%}) & — & \textbf{0.9603 ± 0.0032} (\textbf{+0.44\%}) & \textbf{0.5977 ± 0.0049} (\textbf{+0.48\%}) \\
\midrule
\end{tabular}
\\
\vspace{0.5em}

\begin{tabular}{l!{\vrule}ccccc}
\midrule
Model & FI & HE & JA & KC & KR \\
\midrule
XGBoost & 0.5243 ± 0.0037\dag & 0.7107 ± 0.0047 & 0.8113 ± 0.0016\ddag & 0.7038 ± 0.0111 & 0.9965 ± 0.0007\dag \\
CatBoost & 0.5018 ± 0.0087\ddag & 0.7223 ± 0.0014 & 0.8189 ± 0.0066\dag & 0.7192 ± 0.0066\ddag & 0.9921 ± 0.0011 \\
\arrayrulecolor{lightgray}\hline\arrayrulecolor{black}
FT-Trans & 0.4599 ± 0.0075 & 0.7235 ± 0.0059 & 0.8049 ± 0.0059 & 0.7138 ± 0.0095 & \textbf{0.9960 ± 0.0007\ddag} \\
FT-Trans+PET & \textbf{0.4700 ± 0.0121} (\textbf{+2.20\%}) & \textbf{0.7299 ± 0.0018\dag*} (\textbf{+0.89\%}) & \textbf{0.8062 ± 0.0095} (\textbf{+0.16\%}) & \textbf{0.7308 ± 0.0062\dag*} (\textbf{+2.39\%}) & 0.9954 ± 0.0000 (-0.06\%) \\
\arrayrulecolor{lightgray}\hline\arrayrulecolor{black}
SAINT & 0.4710 ± 0.0113 & 0.7250 ± 0.0009 & 0.7972 ± 0.0077 & 0.7148 ± 0.0080 & 0.9918 ± 0.0015 \\
SAINT+PET & \textbf{0.4744 ± 0.0047} (\textbf{+0.71\%}) & \textbf{0.7279 ± 0.0037\ddag*} (\textbf{+0.41\%}) & \textbf{0.8069 ± 0.0054} (\textbf{+1.21\%}) & \textbf{0.7182 ± 0.0095} (\textbf{+0.49\%}) & \textbf{0.9948 ± 0.0012} (\textbf{+0.30\%}) \\
\arrayrulecolor{lightgray}\hline\arrayrulecolor{black}
TabTrans & — & — & 0.7855 ± 0.0034 & — & 0.9933 ± 0.0012 \\
TabTrans+PET & — & — & \textbf{0.7912 ± 0.0064} (\textbf{+0.73\%}) & — & \textbf{0.9944 ± 0.0012} (\textbf{+0.12\%}) \\
\midrule
\end{tabular}
\\
\vspace{0.5em}

\begin{tabular}{l!{\vrule}ccccc}
\midrule
Model & MA & PH & QSA & ST & SY \\
\midrule
XGBoost & 0.8610 ± 0.0048\dag & 0.8451 ± 0.0014 & 0.8580 ± 0.0156\dag & 0.8063 ± 0.0012 & 0.9343 ± 0.0029 \\
CatBoost & 0.8559 ± 0.0073\ddag & 0.8603 ± 0.0028 & 0.8423 ± 0.0184 & 0.8099 ± 0.0105\ddag & 0.9532 ± 0.0015\dag \\
\arrayrulecolor{lightgray}\hline\arrayrulecolor{black}
FT-Trans & 0.7184 ± 0.0271 & 0.8607 ± 0.0071\ddag & 0.8313 ± 0.0128 & 0.7949 ± 0.0104 & 0.9469 ± 0.0042 \\
FT-Trans+PET & \textbf{0.7398 ± 0.0172} (\textbf{+2.98\%}) & \textbf{0.8663 ± 0.0036\dag*} (\textbf{+0.65\%}) & \textbf{0.8441 ± 0.0145} (\textbf{+1.54\%}) & \textbf{0.8112 ± 0.0104\dag*} (\textbf{+2.05\%}) & \textbf{0.9479 ± 0.0044\ddag*} (\textbf{+0.10\%}) \\
\arrayrulecolor{lightgray}\hline\arrayrulecolor{black}
SAINT & \textbf{0.8090 ± 0.0094} & 0.8521 ± 0.0016 & 0.8414 ± 0.0141 & 0.7978 ± 0.0027 & 0.9403 ± 0.0023 \\
SAINT+PET & 0.8048 ± 0.0157 (-0.52\%) & \textbf{0.8535 ± 0.0051} (\textbf{+0.16\%}) & \textbf{0.8540 ± 0.0205\ddag*} (\textbf{+1.49\%}) & \textbf{0.8051 ± 0.0159} (\textbf{+0.92\%}) & \textbf{0.9432 ± 0.0040} (\textbf{+0.31\%}) \\
\arrayrulecolor{lightgray}\hline\arrayrulecolor{black}
TabTrans & — & — & — & — & — \\
TabTrans+PET & — & — & — & — & — \\
\midrule
\end{tabular}
\\
\vspace{0.5em}

\begin{tabular}{l!{\vrule}ccccc}
\midrule
Model & TI & VE & WI & WIN & YE \\
\midrule
XGBoost & 0.9776 ± 0.0067 & 0.7589 ± 0.0072 & 0.9138 ± 0.0073 & 0.3955 ± 0.0017\dag & 0.5822 ± 0.0074\dag \\
CatBoost & 0.9813 ± 0.0105\dag & 0.7471 ± 0.0076 & 0.9301 ± 0.0120 & 0.2732 ± 0.0740 & 0.5672 ± 0.0052\ddag \\
\arrayrulecolor{lightgray}\hline\arrayrulecolor{black}
FT-Trans & 0.9719 ± 0.0061 & 0.7856 ± 0.0180\ddag & 0.9570 ± 0.0079 & 0.3629 ± 0.0121 & 0.5449 ± 0.0062 \\
FT-Trans+PET & \textbf{0.9772 ± 0.0047} (\textbf{+0.54\%}) & \textbf{0.8080 ± 0.0134\dag*} (\textbf{+2.84\%}) & \textbf{0.9704 ± 0.0080} (\textbf{+1.39\%}) & \textbf{0.3755 ± 0.0158\ddag*} (\textbf{+3.47\%}) & \textbf{0.5618 ± 0.0214} (\textbf{+3.10\%}) \\
\arrayrulecolor{lightgray}\hline\arrayrulecolor{black}
SAINT & 0.9689 ± 0.0017 & 0.7619 ± 0.0451 & 0.9766 ± 0.0025\ddag & 0.3503 ± 0.0245 & 0.5450 ± 0.0175 \\
SAINT+PET & \textbf{0.9728 ± 0.0030} (\textbf{+0.40\%}) & \textbf{0.7847 ± 0.0346} (\textbf{+2.99\%}) & \textbf{0.9821 ± 0.0017\dag*} (\textbf{+0.56\%}) & \textbf{0.3509 ± 0.0194} (\textbf{+0.15\%}) & \textbf{0.5509 ± 0.0197} (\textbf{+1.09\%}) \\
\arrayrulecolor{lightgray}\hline\arrayrulecolor{black}
TabTrans & 0.9727 ± 0.0082 & — & — & — & — \\
TabTrans+PET & \textbf{0.9797 ± 0.0066\ddag*} (\textbf{+0.72\%}) & — & — & — & — \\
\bottomrule
\end{tabular}
\caption{Performance comparison on classification datasets. Each result is averaged over 5 random seeds. Use balanced accuracy (higher is better). +PET = corresponding Tab-PET variant using Spearman for graph estimation. \dag and \ddag indicate best and second-best results. * indicates our PET methods are in the top 2. Results show baseline performance and Tab-PET performance with improvement percentages in parentheses.}
\label{tab:classification_results}
\end{table}

\newpage
\begin{table}[H]
\centering
\tiny
\begin{tabular}{l!{\vrule}ccccc}
\toprule
Model & QS & AB & AI & BO & BOS \\
\midrule
XGBoost & 1.0467 ± 0.0359 & 2.3327 ± 0.0058 & 2.2468 ± 0.0321 & 3.4243 ± 0.0428 & 3.5448 ± 0.0925 \\
CatBoost & 0.9999 ± 0.0089 & 2.2427 ± 0.0202 & 1.4447 ± 0.0187\ddag & 0.5867 ± 0.0416 & 3.1593 ± 0.0374 \\
\arrayrulecolor{lightgray}\hline\arrayrulecolor{black}
FT-Trans & 0.9706 ± 0.0163 & 2.2002 ± 0.0326 & 1.4659 ± 0.0911 & 0.3268 ± 0.0891\ddag & 3.2319 ± 0.1545 \\
FT-Trans+PET & \textbf{0.9606 ± 0.0199} (\textbf{+1.03\%}) & \textbf{2.1681 ± 0.0124\ddag*} (\textbf{+1.46\%}) & \textbf{1.3096 ± 0.0510\dag*} (\textbf{+10.67\%}) & \textbf{0.2773 ± 0.0480\dag*} (\textbf{+15.13\%}) & \textbf{2.9934 ± 0.2309\dag*} (\textbf{+7.38\%}) \\
\arrayrulecolor{lightgray}\hline\arrayrulecolor{black}
SAINT & 0.9389 ± 0.0082\ddag & 2.1867 ± 0.0049 & \textbf{1.9570 ± 0.0142} & 1.0605 ± 0.1709 & 3.1944 ± 0.2299 \\
SAINT+PET & \textbf{0.9319 ± 0.0119\dag*} (\textbf{+0.74\%}) & \textbf{2.1604 ± 0.0104\dag*} (\textbf{+1.21\%}) & 2.0052 ± 0.0671 (-2.46\%) & \textbf{1.0393 ± 0.1498} (\textbf{+2.00\%}) & \textbf{3.0902 ± 0.2620\ddag*} (\textbf{+3.26\%}) \\
\arrayrulecolor{lightgray}\hline\arrayrulecolor{black}
TabTrans & — & — & — & — & 5.2031 ± 0.0725 \\
TabTrans+PET & — & — & — & — & \textbf{5.1731 ± 0.0907} (\textbf{+0.58\%}) \\
\midrule
\end{tabular}
\\
\vspace{0.5em}

\begin{tabular}{l!{\vrule}ccccc}
\midrule
Model & CA & CH & CL & CO & CP \\
\midrule
XGBoost & 0.1692 ± 0.0010 & 0.5932 ± 0.0095 & 0.2998 ± 0.0254\ddag & 5.0495 ± 0.1027 & 6.3669 ± 0.0114 \\
CatBoost & 0.1327 ± 0.0001\dag & 0.5401 ± 0.0064\ddag & 0.2723 ± 0.0365\dag & 4.3052 ± 0.1735\dag & 2.2881 ± 0.0211\dag \\
\arrayrulecolor{lightgray}\hline\arrayrulecolor{black}
FT-Trans & \textbf{0.1406 ± 0.0015\ddag} & \textbf{0.5399 ± 0.0007\dag} & 0.4844 ± 0.0719 & 5.0793 ± 0.1121 & 2.3576 ± 0.0224 \\
FT-Trans+PET & 0.1408 ± 0.0013 (-0.18\%) & 0.5402 ± 0.0007 (-0.05\%) & \textbf{0.3154 ± 0.0132} (\textbf{+34.88\%}) & \textbf{5.0447 ± 0.2000} (\textbf{+0.68\%}) & \textbf{2.3392 ± 0.0273} (\textbf{+0.78\%}) \\
\arrayrulecolor{lightgray}\hline\arrayrulecolor{black}
SAINT & \textbf{0.1454 ± 0.0023} & 0.5750 ± 0.0135 & 0.4909 ± 0.1181 & 5.1688 ± 0.1554 & 2.3389 ± 0.0171 \\
SAINT+PET & 0.1463 ± 0.0017 (-0.60\%) & \textbf{0.5474 ± 0.0029} (\textbf{+4.80\%}) & \textbf{0.3758 ± 0.0450} (\textbf{+23.45\%}) & \textbf{4.9838 ± 0.0845\ddag*} (\textbf{+3.58\%}) & \textbf{2.3324 ± 0.0221\ddag*} (\textbf{+0.28\%}) \\
\arrayrulecolor{lightgray}\hline\arrayrulecolor{black}
TabTrans & — & — & 1.3689 ± 0.0339 & — & — \\
TabTrans+PET & — & — & \textbf{1.3275 ± 0.0377} (\textbf{+3.02\%}) & — & — \\
\midrule
\end{tabular}
\\
\vspace{0.5em}

\begin{tabular}{l!{\vrule}ccccc}
\midrule
Model & CPU & DIA & EN & FR & GR \\
\midrule
XGBoost & 3.1125 ± 0.0121 & 1156.5345 ± 3.0987 & 0.4756 ± 0.0135 & 0.9801 ± 0.0043 & 0.0155 ± 0.0000 \\
CatBoost & 2.7129 ± 0.0198\dag & 529.3698 ± 2.4176\dag & 0.4330 ± 0.0369\dag & 0.9872 ± 0.0833 & 0.0071 ± 0.0001 \\
\arrayrulecolor{lightgray}\hline\arrayrulecolor{black}
FT-Trans & 2.8557 ± 0.0583 & 2467.5773 ± 36.9375 & 0.4842 ± 0.0240 & 0.7387 ± 0.0984\ddag & 0.0057 ± 0.0001 \\
FT-Trans+PET & \textbf{2.8270 ± 0.0513} (\textbf{+1.00\%}) & \textbf{2447.4315 ± 41.0115} (\textbf{+0.82\%}) & \textbf{0.4725 ± 0.0276\ddag*} (\textbf{+2.43\%}) & \textbf{0.7164 ± 0.0808\dag*} (\textbf{+3.02\%}) & \textbf{0.0055 ± 0.0001\dag*} (\textbf{+2.84\%}) \\
\arrayrulecolor{lightgray}\hline\arrayrulecolor{black}
SAINT & \textbf{2.7914 ± 0.0274\ddag} & 537.5104 ± 3.1526 & \textbf{0.5472 ± 0.0245} & \textbf{0.9468 ± 0.0724} & 0.0058 ± 0.0000 \\
SAINT+PET & 2.8071 ± 0.0376 (-0.56\%) & \textbf{535.4909 ± 2.7511\ddag*} (\textbf{+0.38\%}) & 0.5503 ± 0.0150 (-0.56\%) & 0.9500 ± 0.0612 (-0.34\%) & \textbf{0.0056 ± 0.0000\ddag*} (\textbf{+3.68\%}) \\
\arrayrulecolor{lightgray}\hline\arrayrulecolor{black}
TabTrans & — & 1038.8152 ± 42.4760 & — & — & — \\
TabTrans+PET & — & \textbf{1024.9435 ± 30.2794} (\textbf{+1.34\%}) & — & — & — \\
\midrule
\end{tabular}
\\
\vspace{0.5em}

\begin{tabular}{l!{\vrule}ccccc}
\midrule
Model & KI & LI & MU & PL & SE \\
\midrule
XGBoost & 0.1529 ± 0.0010 & 2.8577 ± 0.0135\dag & 30.1484 ± 0.9435 & 238.0414 ± 1.6642 & 0.6460 ± 0.0042\dag \\
CatBoost & 0.0927 ± 0.0005 & 3.0108 ± 0.0964 & 10.8683 ± 0.5617\ddag & 227.5076 ± 6.8662 & 0.6765 ± 0.0188 \\
\arrayrulecolor{lightgray}\hline\arrayrulecolor{black}
FT-Trans & 0.0700 ± 0.0008 & 2.9361 ± 0.0481 & 25.8524 ± 0.4659 & 592.0474 ± 1.0442 & 0.6914 ± 0.0265 \\
FT-Trans+PET & \textbf{0.0665 ± 0.0005\ddag*} (\textbf{+4.95\%}) & \textbf{2.8798 ± 0.0155\ddag*} (\textbf{+1.92\%}) & \textbf{22.9815 ± 0.8778} (\textbf{+11.11\%}) & \textbf{591.3838 ± 0.7633} (\textbf{+0.11\%}) & \textbf{0.6662 ± 0.0095\ddag*} (\textbf{+3.65\%}) \\
\arrayrulecolor{lightgray}\hline\arrayrulecolor{black}
SAINT & 0.0674 ± 0.0006 & \textbf{2.9403 ± 0.0131} & 12.4042 ± 2.7267 & 224.6200 ± 0.7025\ddag & 0.7174 ± 0.0197 \\
SAINT+PET & \textbf{0.0637 ± 0.0003\dag*} (\textbf{+5.49\%}) & 2.9404 ± 0.0147 (-0.00\%) & \textbf{9.7170 ± 0.3454\dag*} (\textbf{+21.66\%}) & \textbf{224.1048 ± 0.8715\dag*} (\textbf{+0.23\%}) & \textbf{0.7150 ± 0.0100} (\textbf{+0.34\%}) \\
\arrayrulecolor{lightgray}\hline\arrayrulecolor{black}
TabTrans & — & — & 153.5263 ± 0.1779 & 227.8648 ± 0.9975 & 0.8013 ± 0.0321 \\
TabTrans+PET & — & — & \textbf{152.9747 ± 0.2845} (\textbf{+0.36\%}) & \textbf{226.8719 ± 0.8694} (\textbf{+0.44\%}) & \textbf{0.7125 ± 0.0219} (\textbf{+11.09\%}) \\
\midrule
\end{tabular}
\\
\vspace{0.5em}

\begin{tabular}{l!{\vrule}ccccc}
\midrule
Model & SO & SP & STO & TE & WIS \\
\midrule
XGBoost & 24.6158 ± 1.4767 & 0.1686 ± 0.0003 & 1.8113 ± 0.0053 & 6.5296 ± 0.0147 & 39.7473 ± 1.8945 \\
CatBoost & 24.2181 ± 1.4402 & 0.1079 ± 0.0009 & 0.6891 ± 0.0043\ddag & 1.5440 ± 0.0396\dag & 36.6933 ± 0.6210 \\
\arrayrulecolor{lightgray}\hline\arrayrulecolor{black}
FT-Trans & 17.8475 ± 0.4801 & 0.1060 ± 0.0012 & 0.6951 ± 0.0327 & \textbf{4.1132 ± 0.1258} & 38.0113 ± 0.2432 \\
FT-Trans+PET & \textbf{17.8315 ± 0.2163} (\textbf{+0.09\%}) & \textbf{0.1039 ± 0.0015} (\textbf{+1.97\%}) & \textbf{0.6763 ± 0.0155\dag*} (\textbf{+2.71\%}) & 4.1229 ± 0.1091 (-0.24\%) & \textbf{37.9194 ± 0.2819} (\textbf{+0.24\%}) \\
\arrayrulecolor{lightgray}\hline\arrayrulecolor{black}
SAINT & 17.5344 ± 1.9513\ddag & \textbf{0.1002 ± 0.0015\dag} & 0.9942 ± 0.0786 & 2.4851 ± 0.4729 & 35.3796 ± 0.1146\ddag \\
SAINT+PET & \textbf{17.5074 ± 2.1392\dag*} (\textbf{+0.15\%}) & 0.1002 ± 0.0024\dag (-0.04\%) & \textbf{0.9887 ± 0.0345} (\textbf{+0.55\%}) & \textbf{1.8181 ± 0.2750\ddag*} (\textbf{+26.84\%}) & \textbf{35.2012 ± 0.0637\dag*} (\textbf{+0.50\%}) \\
\arrayrulecolor{lightgray}\hline\arrayrulecolor{black}
TabTrans & 28.8328 ± 1.6286 & — & — & — & — \\
TabTrans+PET & \textbf{28.0807 ± 1.4750} (\textbf{+2.61\%}) & — & — & — & — \\
\bottomrule
\end{tabular}
\caption{Performance comparison on regression datasets. Each result is averaged over 5 random seeds. Use RMSE (lower is better). +PET = corresponding Tab-PET variant using Spearman for graph estimation. \dag and \ddag indicate best and second-best results. * indicates our PET methods are in the top 2. Results show baseline performance and Tab-PET performance with improvement percentages in parentheses.}
\label{tab:regression_results}
\end{table}

\newpage
\subsection{Learnable PE vs Tab-PET} \label{app:7.4}

\begin{table}[H]
\centering
\tiny
\setlength{\tabcolsep}{6pt}
\renewcommand{\arraystretch}{1.2}
\definecolor{lightgray}{gray}{0.7}
\begin{tabular}{l!{\vrule}ccccc}
\toprule
Method & AU & GE & SA & BL & CHU \\
\midrule
Baseline & 0.8283 ± 0.0093 & 0.6412 ± 0.0081 & 0.7791 ± 0.0266 & 0.7481 ± 0.0078 & 0.9076 ± 0.0043 \\
\arrayrulecolor{lightgray}\hline\arrayrulecolor{black}
PET & \textbf{0.8379 ± 0.0162} (+1.16\%) & \textbf{0.6502 ± 0.0080} (+1.41\%) & \textbf{0.7924 ± 0.0252} (+1.71\%) & \textbf{0.7582 ± 0.0055} (+1.35\%) & \textbf{0.9172 ± 0.0046} (+1.05\%) \\
Learnable & 0.8241 ± 0.0088 (-0.51\%) & 0.6415 ± 0.0057 (+0.05\%) & 0.7846 ± 0.0254 (+0.71\%) & 0.7461 ± 0.0098 (-0.27\%) & 0.9034 ± 0.0083 (-0.46\%) \\
\midrule
\end{tabular}
\\
\vspace{0.5em}

\begin{tabular}{l!{\vrule}ccccc}
\midrule
Method & CM & CR & DI & DN & EY \\
\midrule
Baseline & 0.5449 ± 0.0222 & 0.6443 ± 0.0244 & 0.7691 ± 0.0148 & 0.9528 ± 0.0071 & 0.7480 ± 0.0078 \\
\arrayrulecolor{lightgray}\hline\arrayrulecolor{black}
PET & \textbf{0.5705 ± 0.0060} (+4.70\%) & \textbf{0.6821 ± 0.0201} (+5.88\%) & 0.7712 ± 0.0179 (+0.28\%) & \textbf{0.9617 ± 0.0042} (+0.93\%) & \textbf{0.7499 ± 0.0060} (+0.24\%) \\
Learnable & 0.5703 ± 0.0126 (+4.66\%) & 0.6676 ± 0.0305 (+3.62\%) & \textbf{0.7760 ± 0.0089} (+0.90\%) & 0.9520 ± 0.0045 (-0.08\%) & 0.7282 ± 0.0066 (-2.65\%) \\
\midrule
\end{tabular}
\\
\vspace{0.5em}

\begin{tabular}{l!{\vrule}ccccc}
\midrule
Method & FI & HE & JA & KC & KR \\
\midrule
Baseline & 0.4599 ± 0.0075 & 0.7235 ± 0.0059 & 0.8049 ± 0.0059 & 0.7138 ± 0.0095 & \textbf{0.9960 ± 0.0007} \\
\arrayrulecolor{lightgray}\hline\arrayrulecolor{black}
PET & \textbf{0.4700 ± 0.0121} (+2.20\%) & \textbf{0.7299 ± 0.0018} (+0.89\%) & \textbf{0.8062 ± 0.0095} (+0.16\%) & \textbf{0.7308 ± 0.0062} (+2.39\%) & 0.9954 ± 0.0000 (-0.06\%) \\
Learnable & 0.4663 ± 0.0165 (+1.39\%) & 0.7224 ± 0.0054 (-0.15\%) & 0.7982 ± 0.0090 (-0.83\%) & 0.7108 ± 0.0055 (-0.42\%) & \textbf{0.9960 ± 0.0007} (+0.00\%) \\
\midrule
\end{tabular}
\\
\vspace{0.5em}

\begin{tabular}{l!{\vrule}ccccc}
\midrule
Method & MA & PH & QSA & ST & SY \\
\midrule
Baseline & 0.7184 ± 0.0271 & 0.8607 ± 0.0071 & 0.8313 ± 0.0128 & 0.7949 ± 0.0104 & 0.9469 ± 0.0042 \\
\arrayrulecolor{lightgray}\hline\arrayrulecolor{black}
PET & \textbf{0.7398 ± 0.0172} (+2.98\%) & \textbf{0.8663 ± 0.0036} (+0.65\%) & \textbf{0.8441 ± 0.0145} (+1.54\%) & \textbf{0.8112 ± 0.0104} (+2.05\%) & \textbf{0.9479 ± 0.0044} (+0.10\%) \\
Learnable & 0.7018 ± 0.0365 (-2.31\%) & 0.8546 ± 0.0025 (-0.71\%) & 0.8315 ± 0.0139 (+0.02\%) & 0.7976 ± 0.0044 (+0.34\%) & 0.9462 ± 0.0056 (-0.07\%) \\
\midrule
\end{tabular}
\\
\vspace{0.5em}

\begin{tabular}{l!{\vrule}ccccc}
\midrule
Method & TI & VE & WI & WIN & YE \\
\midrule
Baseline & 0.9719 ± 0.0061 & 0.7856 ± 0.0180 & 0.9570 ± 0.0079 & 0.3629 ± 0.0121 & 0.5449 ± 0.0062 \\
\arrayrulecolor{lightgray}\hline\arrayrulecolor{black}
PET & 0.9772 ± 0.0047 (+0.54\%) & \textbf{0.8080 ± 0.0134} (+2.84\%) & \textbf{0.9704 ± 0.0080} (+1.39\%) & \textbf{0.3755 ± 0.0158} (+3.47\%) & \textbf{0.5618 ± 0.0214} (+3.10\%) \\
Learnable & \textbf{0.9787 ± 0.0041} (+0.70\%) & 0.7943 ± 0.0206 (+1.11\%) & 0.9523 ± 0.0122 (-0.49\%) & 0.3515 ± 0.0235 (-3.14\%) & 0.5424 ± 0.0135 (-0.46\%) \\
\bottomrule
\end{tabular}
\caption{Comparison between Tab-PET and Learnable PE methods on classification datasets. Each result is averaged over 5 random seeds. Use balanced accuracy (higher is better). Tab-PET uses Spearman for graph estimation. \textbf{Bold} indicates best result. Results show performance with improvement percentages in parentheses.}
\label{tab:learnable_comparison_classification_results}
\end{table}

\begin{table}[H]
\centering
\tiny
\setlength{\tabcolsep}{6pt}
\renewcommand{\arraystretch}{1.2}
\definecolor{lightgray}{gray}{0.7}
\begin{tabular}{l!{\vrule}ccccc}
\toprule
Method & QS & AB & AI & BO & BOS \\
\midrule
Baseline & 0.9706 ± 0.0163 & 2.2002 ± 0.0326 & 1.4659 ± 0.0911 & 0.3268 ± 0.0891 & 3.2319 ± 0.1545 \\
\arrayrulecolor{lightgray}\hline\arrayrulecolor{black}
PET & 0.9606 ± 0.0199 (+1.03\%) & \textbf{2.1681 ± 0.0124} (+1.46\%) & \textbf{1.3096 ± 0.0510} (+10.67\%) & \textbf{0.2773 ± 0.0480} (+15.13\%) & \textbf{2.9934 ± 0.2309} (+7.38\%) \\
Learnable & \textbf{0.9426 ± 0.0157} (+2.88\%) & 2.1851 ± 0.0209 (+0.69\%) & 1.3653 ± 0.0676 (+6.86\%) & 0.3248 ± 0.0719 (+0.61\%) & 3.2026 ± 0.2269 (+0.91\%) \\
\midrule
\end{tabular}
\\
\vspace{0.5em}

\begin{tabular}{l!{\vrule}ccccc}
\midrule
Method & CA & CH & CL & CO & CP \\
\midrule
Baseline & \textbf{0.1406 ± 0.0015} & \textbf{0.5399 ± 0.0007} & 0.4844 ± 0.0719 & 5.0793 ± 0.1121 & 2.3576 ± 0.0224 \\
\arrayrulecolor{lightgray}\hline\arrayrulecolor{black}
PET & 0.1408 ± 0.0013 (-0.18\%) & 0.5402 ± 0.0007 (-0.05\%) & \textbf{0.3154 ± 0.0132} (+34.88\%) & \textbf{5.0447 ± 0.2000} (+0.68\%) & \textbf{2.3392 ± 0.0273} (+0.78\%) \\
Learnable & 0.1423 ± 0.0009 (-1.21\%) & 0.5418 ± 0.0027 (-0.35\%) & 0.4128 ± 0.0428 (+14.78\%) & 5.3437 ± 0.1114 (-5.21\%) & 2.3564 ± 0.0144 (+0.05\%) \\
\midrule
\end{tabular}
\\
\vspace{0.5em}

\begin{tabular}{l!{\vrule}ccccc}
\midrule
Method & CPU & DIA & EN & FR & GR \\
\midrule
Baseline & 2.8557 ± 0.0583 & 2467.5773 ± 36.9375 & 0.4842 ± 0.0240 & 0.7387 ± 0.0984 & 0.0057 ± 0.0001 \\
\arrayrulecolor{lightgray}\hline\arrayrulecolor{black}
PET & \textbf{2.8270 ± 0.0513} (+1.00\%) & \textbf{2447.4315 ± 41.0115} (+0.82\%) & 0.4725 ± 0.0276 (+2.43\%) & \textbf{0.7164 ± 0.0808} (+3.02\%) & \textbf{0.0055 ± 0.0001} (+2.84\%) \\
Learnable & 2.8650 ± 0.0178 (-0.33\%) & 2513.3235 ± 44.6829 (-1.85\%) & \textbf{0.4418 ± 0.0059} (+8.76\%) & 0.8021 ± 0.1032 (-8.58\%) & 0.0058 ± 0.0002 (-1.75\%) \\
\midrule
\end{tabular}
\\
\vspace{0.5em}

\begin{tabular}{l!{\vrule}ccccc}
\midrule
Method & KI & LI & MU & PL & SE \\
\midrule
Baseline & 0.0700 ± 0.0008 & 2.9361 ± 0.0481 & 25.8524 ± 0.4659 & 592.0474 ± 1.0442 & 0.6914 ± 0.0265 \\
\arrayrulecolor{lightgray}\hline\arrayrulecolor{black}
PET & \textbf{0.0665 ± 0.0005} (+4.95\%) & \textbf{2.8798 ± 0.0155} (+1.92\%) & \textbf{22.9815 ± 0.8778} (+11.11\%) & \textbf{591.3838 ± 0.7633} (+0.11\%) & \textbf{0.6662 ± 0.0095} (+3.65\%) \\
Learnable & 0.0691 ± 0.0004 (+1.29\%) & 2.9121 ± 0.0497 (+0.82\%) & 25.7153 ± 1.3453 (+0.53\%) & 592.5523 ± 1.0172 (-0.09\%) & 0.6806 ± 0.0258 (+1.56\%) \\
\midrule
\end{tabular}
\\
\vspace{0.5em}

\begin{tabular}{l!{\vrule}ccccc}
\midrule
Method & SO & SP & STO & TE & WIS \\
\midrule
Baseline & 17.8475 ± 0.4801 & 0.1060 ± 0.0012 & 0.6951 ± 0.0327 & \textbf{4.1132 ± 0.1258} & 38.0113 ± 0.2432 \\
\arrayrulecolor{lightgray}\hline\arrayrulecolor{black}
PET & \textbf{17.8315 ± 0.2163} (+0.09\%) & \textbf{0.1039 ± 0.0015} (+1.97\%) & \textbf{0.6763 ± 0.0155} (+2.71\%) & 4.1229 ± 0.1091 (-0.24\%) & \textbf{37.9194 ± 0.2819} (+0.24\%) \\
Learnable & 18.0131 ± 0.3303 (-0.93\%) & 0.1073 ± 0.0022 (-1.23\%) & 0.6802 ± 0.0224 (+2.14\%) & 4.2435 ± 0.1353 (-3.17\%) & 38.6926 ± 0.4269 (-1.79\%) \\
\bottomrule
\end{tabular}
\caption{Comparison between Tab-PET and Learnable PE methods on classification datasets. Each result is averaged over 5 random seeds. Use RMSE (lower is better). Tab-PET uses Spearman for graph estimation. \textbf{Bold} indicates best result. Results show performance with improvement percentages in parentheses.}
\label{tab:learnable_comparison_regression_results}
\end{table}

\newpage
\section{Analysis and Ablation of Tab-PET}\label{app:8}

\subsection{Graph Estimation Approaches Analysis} \label{app:8.1}

This section provides detailed analysis of the different graph estimation approaches used in Tab-PET, examining both their performance characteristics and the structural properties of the graphs they generate.

\subsubsection{Performance Distribution Analysis}

Figure~\ref{fig:performance_distribution} presents a view of performance improvements across all graph estimation methods. We can see that Spearman correlation demonstrates the most robust performance across all datasets. Notably, while other methods sometimes can negatively impact performance in some cases, Spearman consistently maintains positive or near-zero improvements. The box plot shows that Spearman's median improvement is the highest among all methods, with the interquartile range positioned entirely above the baseline performance line.

In addition, across all methods, outliers mainly appear in the high-improvement region (above the upper whisker), indicating that graph-derived PEs can occasionally provide substantial performance gains.

\begin{figure}[H]
    \centering
    \includegraphics[width=0.5\linewidth]{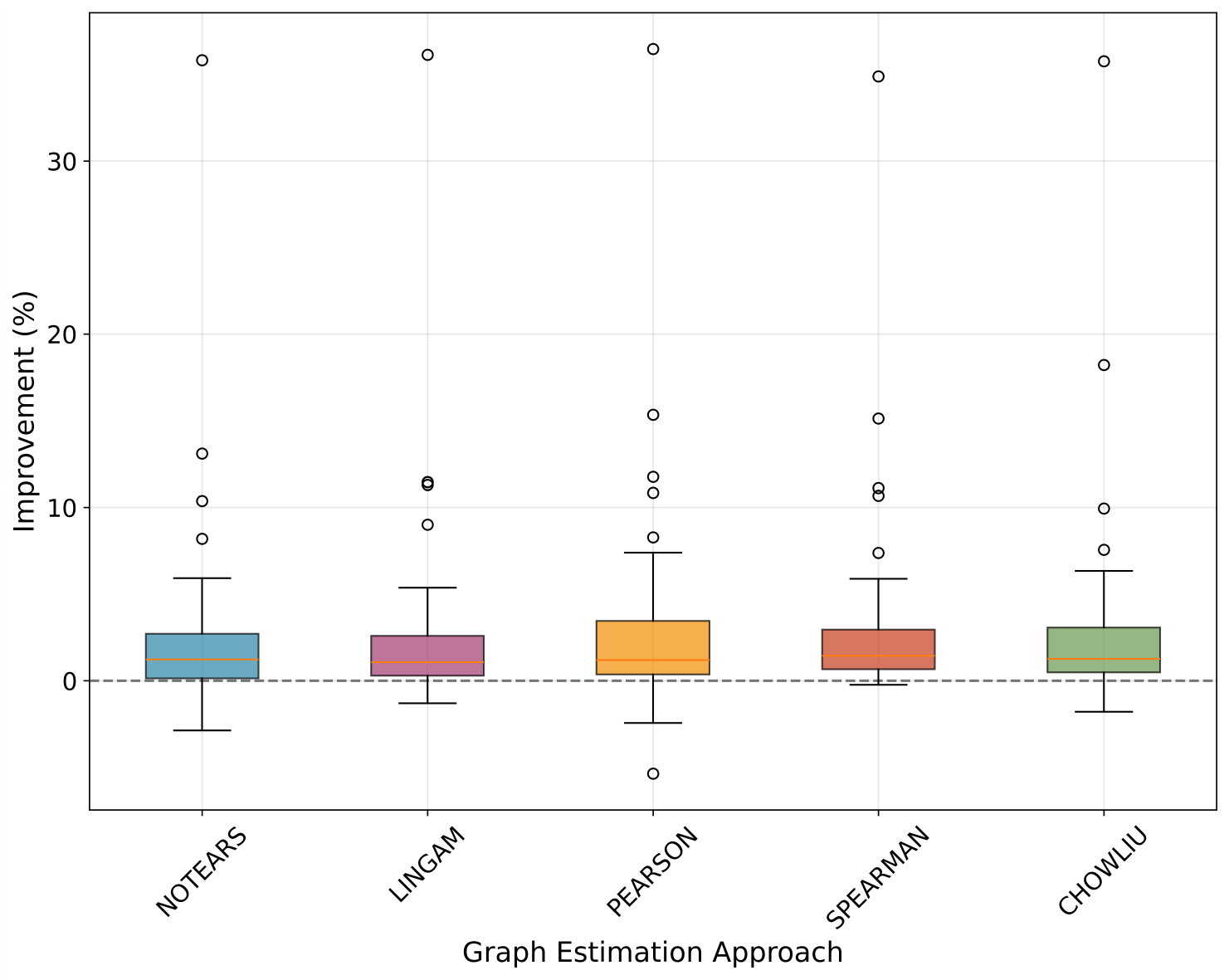}
    \caption{Performance distribution comparison across graph estimation approaches. Box plots show the distribution of improvement percentages for each approach across all datasets and tasks. Each box represents the interquartile range (IQR) with the median indicated by the red line. Whiskers extend to 1.5×IQR. The horizontal dashed line at 0\% indicates the baseline performance.}
    \label{fig:performance_distribution}
\end{figure}

\newpage
\subsubsection{Graph Structure Analysis}

Figure~\ref{fig:graph_metrics_distribution}, \ref{fig:graph_box_plot} and \ref{fig:graph_scatter_plot} reveal the relationship between graph structural properties and performance by analyzing two key metrics: graph entropy and Fiedler value.

\textbf{Graph Entropy:}
For classification tasks (left panels), we observe a pattern between graph entropy and method performance. Causal approaches such as NOTEARS and LiNGAM concentrate in the low graph entropy region, producing highly structured, sparse graphs. This corresponds to the lowest classification improvement (1.36\% and 1.41\% shown in the main paper).
Spearman and Pearson correlations generate graphs with higher entropy values (0.8-1.0), indicating richer, more densely connected structures. These methods also achieve the highest classification performance improvements (1.72\% and 1.61\% respectively).
Regression results mirror the classification findings: higher-entropy graphs from Pearson and Spearman also correspond to the highest performance gains.

\textbf{Fiedler Value:}
The Fiedler value (also known as the algebraic connectivity) is defined as the second-smallest eigenvalue of the graph Laplacian. It reflects how well connected a graph is: a higher value indicates stronger connectivity and robustness of the network structure \cite{wiki_fiedler}.

Overall, the association-based methods have a higher Fiedler value than causality-driven ones. This would result in PEs that can more effectively capture both local and global structural relationships. Additionally, graphs with high Fiedler values are more resistant to disconnection when edges are removed, indicating more stable structural representations that are less sensitive to noise. From a spectral perspective, higher algebraic connectivity leads to higher quality Laplacian matrices, resulting in more informative eigenvectors for PE generation that capture meaningful structural variations across features.

The association-based methods' broader distribution toward higher Fiedler values corresponds to their superior empirical results. This suggests that the structural connectivity captured by these graphs directly translates to more effective PEs. This analysis provides a graph-theoretic explanation for the superiority of association-based methods over causality-driven approaches: while causality-based methods focus on identifying directed relationships that may result in sparse, poorly-connected graphs, association-based methods capture broader statistical dependencies that naturally lead to more connected, higher Fiedler value structures. These well-connected graphs provide richer positional information that better guides the transformer's attention mechanisms in tabular learning.
\begin{figure}[H]
    \centering
    \includegraphics[width=\linewidth]{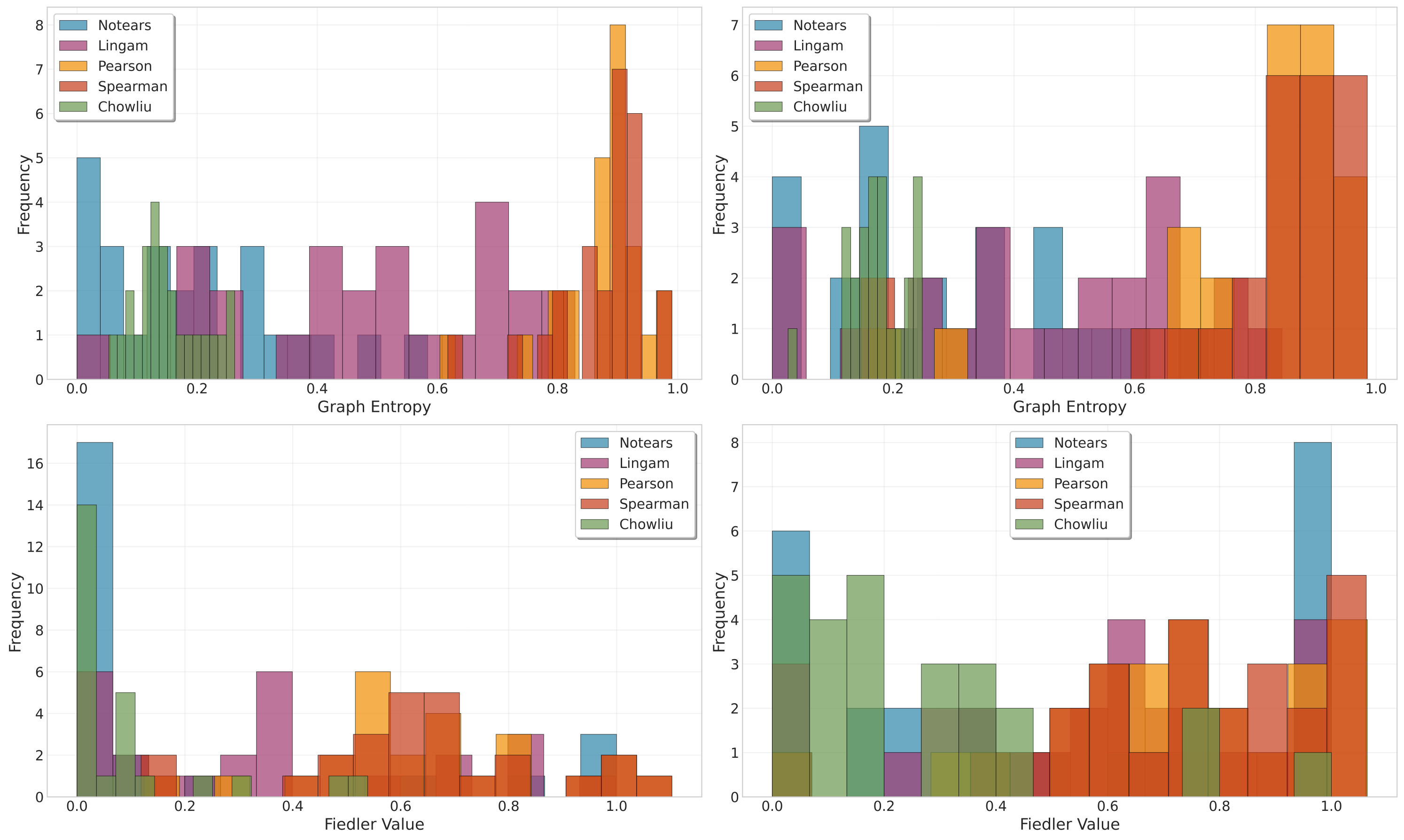}
    \caption{Distribution of graph metrics across different construction methods for classification (left panels) and regression (right panels) tasks. The figure shows frequency histograms for graph entropy (top row) and Fiedler value (bottom row), comparing five graph estimation methods. The varying widths of histogram bars reflect the different value ranges spanned by each method - methods with broader value distributions result in wider bins, while methods with more concentrated value ranges produce narrower bins.}
    \label{fig:graph_metrics_distribution}
\end{figure}

\newpage

\begin{figure}[H]
    \centering
    \includegraphics[width=\linewidth]{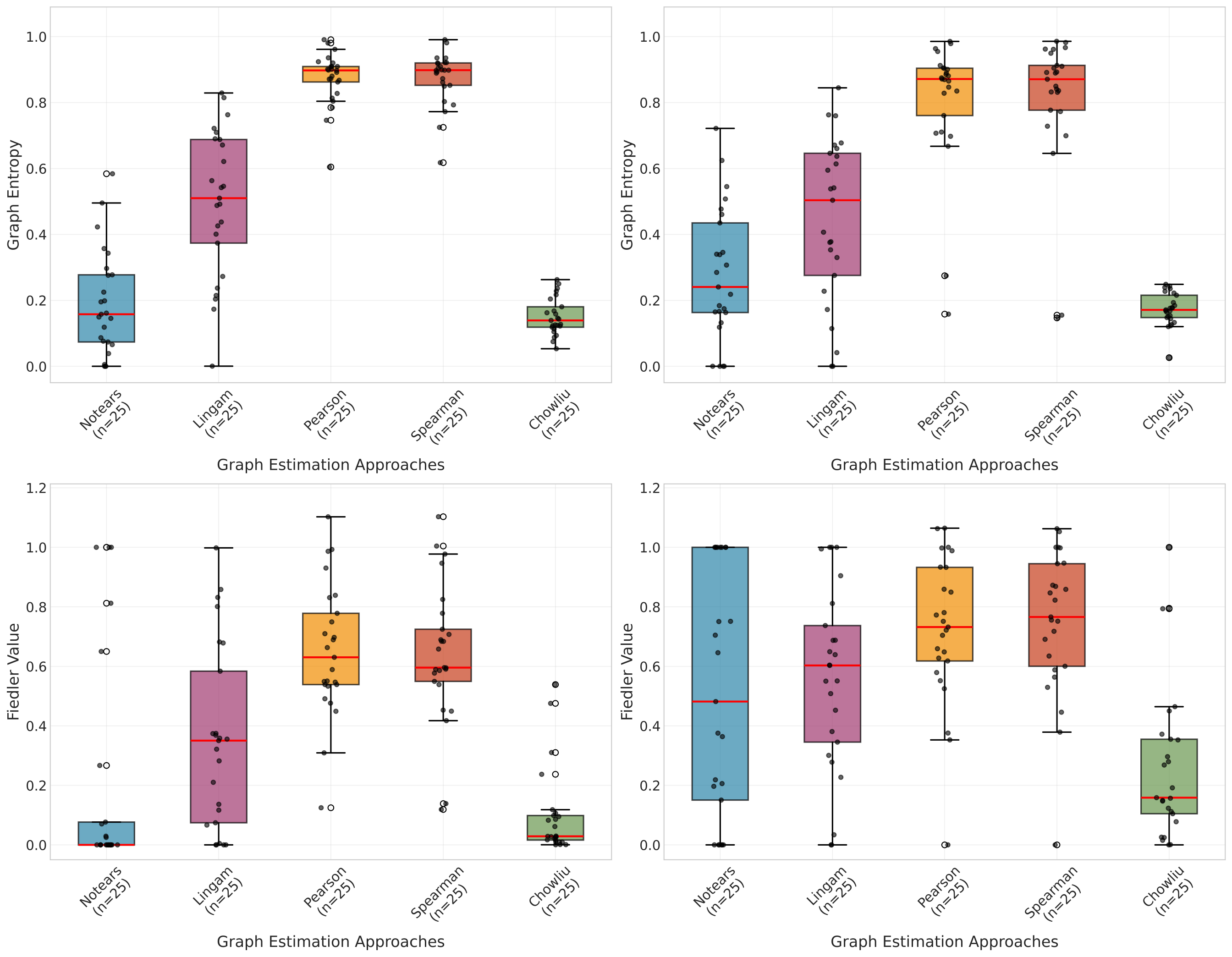}
    \caption{Box plot comparison of graph estimation approaches for classification (left panels) and regression (right panels) tasks. Each box shows the median (red line), quartiles (box boundaries), and whiskers extending to 1.5 times the interquartile range. Individual data points are shown as black dots.}
    \label{fig:graph_box_plot}
\end{figure}

\begin{figure}[H]
    \centering
    \includegraphics[width=\linewidth]{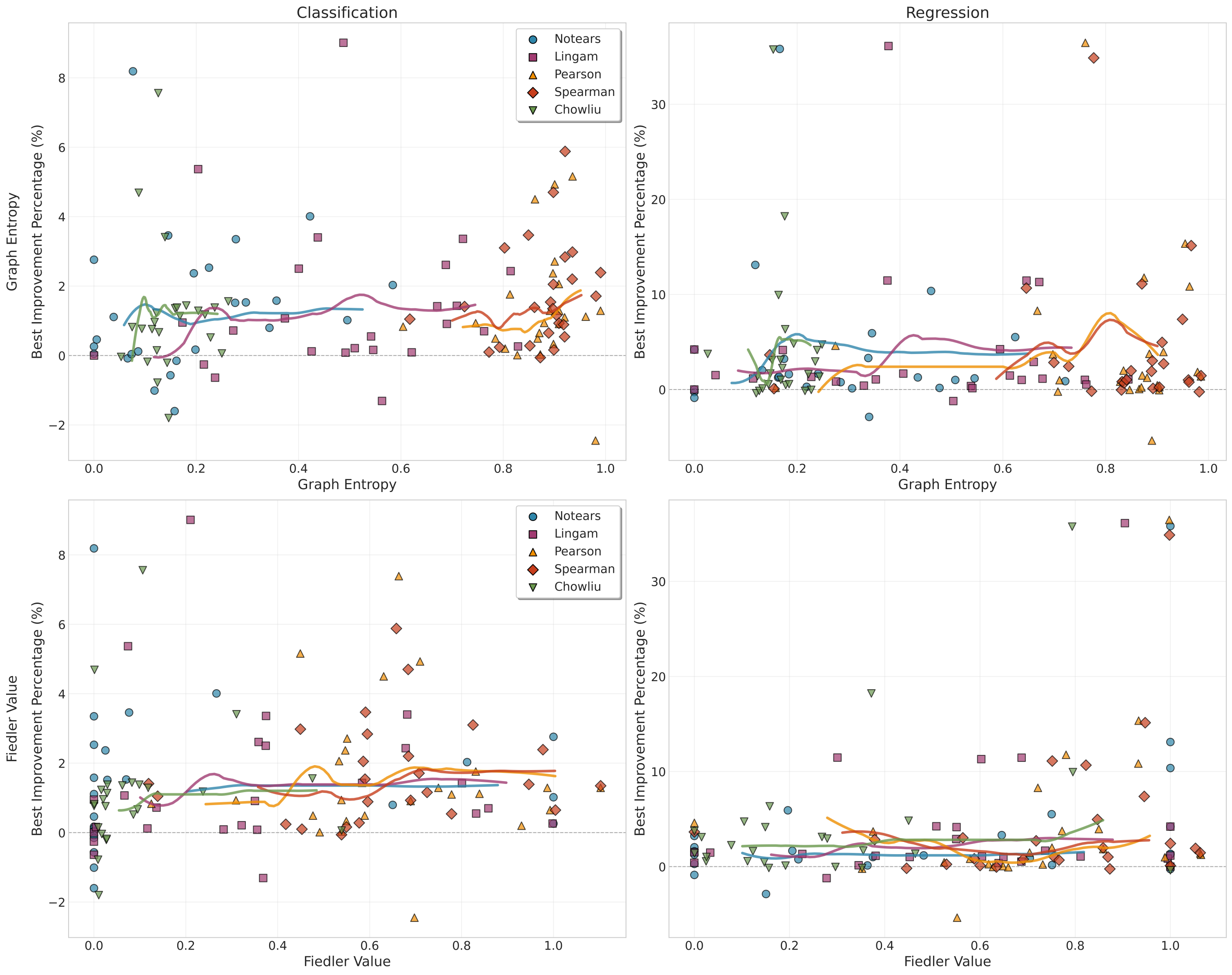}
    \caption{Scatter plot with cumulative threshold trends for classification (left panels) and regression (right panels) tasks. Each subplot contains two types of visualization: (1) \textbf{Scatter points} representing individual datasets, where each point shows the specific graph metric value (x-axis) and Tab-PET performance improvement (y-axis) for that dataset under a particular graph estimation method. (2) \textbf{Cumulative trend lines} for NOTEARS (blue), LiNGAM (purple), Pearson correlation (orange), Spearman correlation (orange-red), and Chow-Liu (green). For each method and each threshold value on the x-axis, we calculate the average performance improvement of all datasets whose graph entropy/Fiedler value is less than or equal to that threshold, creating a cumulative threshold analysis where y-axis represents the mean improvement of datasets below the current x-axis threshold.}
    \label{fig:graph_scatter_plot}
\end{figure}

\newpage
\subsection{Tab-PET vs. Learnable PEs}

Figure~\ref{fig:tabpet_vs_learnable} shows our fixed graph-derived Tab-PET approach consistently outperforming learnable PEs across both classification and regression tasks. The results clearly show that Tab-PET achieves more stable and higher performance improvements, with the majority of datasets exhibiting positive gains compared to the baseline (no PE) condition. In contrast, learnable PEs frequently produce negative improvements that indicate worse performance than the baseline. This pattern strongly supports that incorporating structured priors through graph-derived fixed PEs is more effective than learning positional representations from scratch, especially in the low-data regime typical of tabular learning. The superior performance of Tab-PET can be attributed to its ability to leverage meaningful structural relationships discovered through graph estimation, providing informative inductive biases that guide the transformer's attention mechanism, whereas learnable PEs must discover these relationships during training with limited data, often leading to suboptimal or unstable representations.

\begin{figure}[H]
    \centering
    \includegraphics[width=\linewidth]{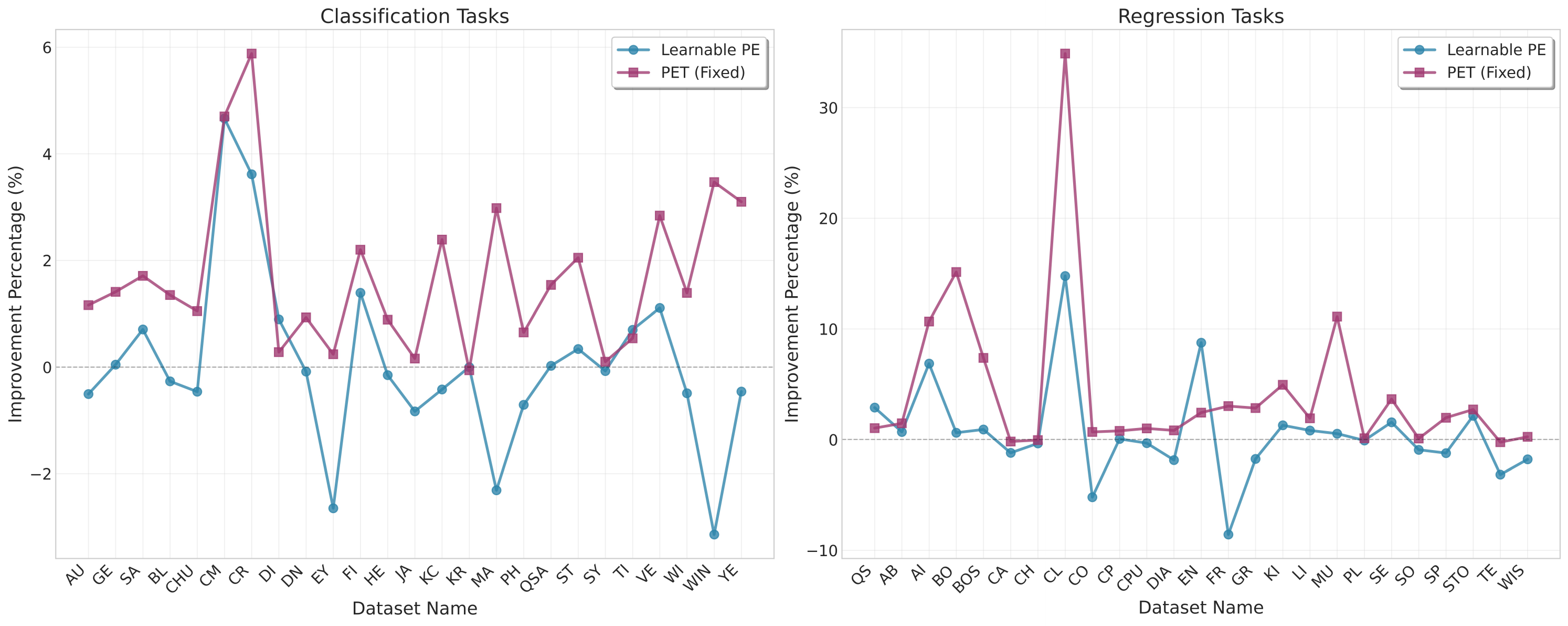}
    \caption{Performance improvement (\%) over the baseline (no PE) across datasets for Learnable PE and our PET on classification (left) and regression (right) tasks.}
    \label{fig:tabpet_vs_learnable}
\end{figure}

\subsection{P-value Comparison between Transformer-based and Tree-based}

Table~\ref{tab:transformer_vs_tree_extended} reveals that while tree-based methods generally maintain competitive performance against traditional transformer-based approaches, Tab-PET consistently enhances the competitiveness of transformer models across all architectures.

\begin{table}[H]
\centering
\setlength{\tabcolsep}{1.5pt}
\renewcommand{\arraystretch}{1.2}
\begin{tabular}{l!{\vrule}cc!{\vrule}cc!{\vrule}cc!{\vrule}cc}
\toprule
\multirow{3}{*}{Method} & \multicolumn{4}{c!{\vrule}}{vs XGBoost} & \multicolumn{4}{c}{vs CatBoost} \\
\cmidrule{2-9}
& \multicolumn{2}{c!{\vrule}}{Win Rate\%} & \multicolumn{2}{c!{\vrule}}{P-value} & \multicolumn{2}{c!{\vrule}}{Win Rate\%} & \multicolumn{2}{c}{P-value} \\
\cmidrule{2-9}
& C & R & C & R & C & R & C & R \\
\midrule
FT-Transformer (No-PE) & 48 & 72 & 0.979 & 0.026 & 48 & 40 & 0.833 & 0.120 \\
FT-Transformer + Tab-PET & 56 & 80 & 0.230 & 0.007 & 72 & 52 & 0.032 & 0.653 \\
\midrule
SAINT (No-PE) & 40 & 80 & 0.312 & 9.12e-04 & 36 & 40 & 0.692 & 0.210 \\
SAINT + Tab-PET & 44 & 84 & 0.653 & 2.17e-04 & 52 & 48 & 0.367 & 0.711 \\
\midrule
TabTransformer (No-PE) & 0 & 29 & 0.004 & 0.688 & 11 & 0 & 0.008 & 0.016 \\
TabTransformer + Tab-PET & 11 & 29 & 0.012 & 0.812 & 33 & 14 & 0.164 & 0.047 \\
\bottomrule
\end{tabular}
\caption{Comparison of transformer methods against tree baselines. Win Rate shows percentage of datasets where transformer method outperforms tree baseline. P-values from Wilcoxon signed-rank tests. C = Classification, R = Regression.}
\label{tab:transformer_vs_tree_extended}
\end{table}
\newpage 
\subsection{Average Performance of All Methods}

Here we give the average performance of all methods, including baselines XGBoost, CatBoost, FT-Transformer (without PEs), SAINT (without PEs), and TabTransformer (without PEs), and our Tab-PET variants (FT-Transformer+Tab-PET, SAINT+Tab-PET, TabTransformer+Tab-PET).

During preliminary analysis, we observed that certain datasets exhibited extreme performance characteristics that could distort overall comparisons. For instance, the diamonds\_v8 dataset in regression tasks showed RMSE values exceeding 2000 for FT-Transformer (both without PEs and +Tab-PET variant), while typical RMSE values across other datasets remained very low (roughly less than $10^2$). Such extreme outliers can substantially bias mean performance metrics.

To address this issue, for each task, we apply the interquartile range (IQR) criterion to identify statistical outliers. For regression, the outliers would be datasets with mean RMSE $>$ Q3 + 1.5×IQR; for classification, the outliers would be datasets with mean balanced accuracy $<$ Q1 - 1.5×IQR. Unlike classification tasks where balanced accuracy ranges between 0 and 1, regression tasks exhibit much larger variations in RMSE values. Therefore, for regression tasks, we supplement the IQR analysis by excluding extreme datasets where the mean RMSE exceeds 10 times the median performance across all datasets. Datasets meeting either criterion are flagged as outliers and excluded from the following analysis to ensure fair comparison across methods. Excluded outlier datasets are wine-quality-white\_v1 for classification, and diamonds\_v8, munich-rent-index-1999\_v1, plasma\_retinol\_v1, socmob\_v1, wisconsin\_v1 for regression.

\begin{table}[H]
\centering
\begin{tabular}{l!{\vrule}cc!{\vrule}cc}
\toprule
\multirow{2}{*}{Method} & \multicolumn{2}{c|}{Classification} & \multicolumn{2}{c}{Regression} \\
\cmidrule{2-5}
& Mean Acc & N & Mean RMSE & N \\
\midrule
XGBoost & 0.782 & 24$^{(1)}$ & 2.091 & 20$^{(5)}$ \\
CatBoost & 0.783 & 24$^{(1)}$ & 1.312 & 20$^{(5)}$ \\
\midrule
FT-Transformer & 0.778 & 24$^{(1)}$ & 1.475 & 20$^{(5)}$ \\
FT-Transformer + Tab-PET & 0.790 & 24$^{(1)}$ & 1.431 & 20$^{(5)}$ \\
\midrule
SAINT & 0.780 & 24$^{(1)}$ & 1.483 & 20$^{(5)}$ \\
SAINT + Tab-PET & 0.790 & 24$^{(1)}$ & 1.428 & 20$^{(5)}$ \\
\midrule
TabTransformer & 0.789 & 9 & 2.458 & 3$^{(4)}$ \\
TabTransformer + Tab-PET & 0.797 & 9 & 2.404 & 3$^{(4)}$ \\
\bottomrule
\end{tabular}
\caption{Mean performance across datasets after systematic outlier exclusion. N shows the number of datasets used, with superscript parentheses N$^{(x)}$ indicating the number of outlier datasets excluded for each method. Mean Acc = Mean Balanced Accuracy.}
\label{tab:systematic_clean_performance}
\end{table}

As shown in Table \ref{tab:systematic_clean_performance}, transformer methods demonstrate competitive performance with tree-based baselines in classification tasks. This competitive parity in classification may explain why some recent tabular deep learning research focuses exclusively on classification tasks (e.g., TransTab~\cite{wang2022transtab}, SCARF~\cite{bahri2021scarf}), as the performance gap between deep learning approaches and tree-based methods is less pronounced in this domain. However, in regression tasks, CatBoost (1.312) maintains a clear advantage over transformer-based approaches, with the best transformer method (SAINT + Tab-PET: 1.428) and second-best (FT-Transformer + Tab-PET: 1.431). Despite this challenge in regression, the Tab-PET enhancement provides consistent improvements across all transformer architectures, which could validate Tab-PET's effectiveness as a general improvement for transformer-based tabular learning.
\newpage 
\subsection{Tab-PET Performance vs. Dataset Size}

Figure~\ref{fig:pe_size_analysis} illustrates the relationship between Tab-PET's performance improvement and dataset size across transformer models. We analyze datasets with more than 100 samples to ensure statistical reliability.

\begin{figure}[H]
    \centering
    \includegraphics[width=0.5\linewidth]{
    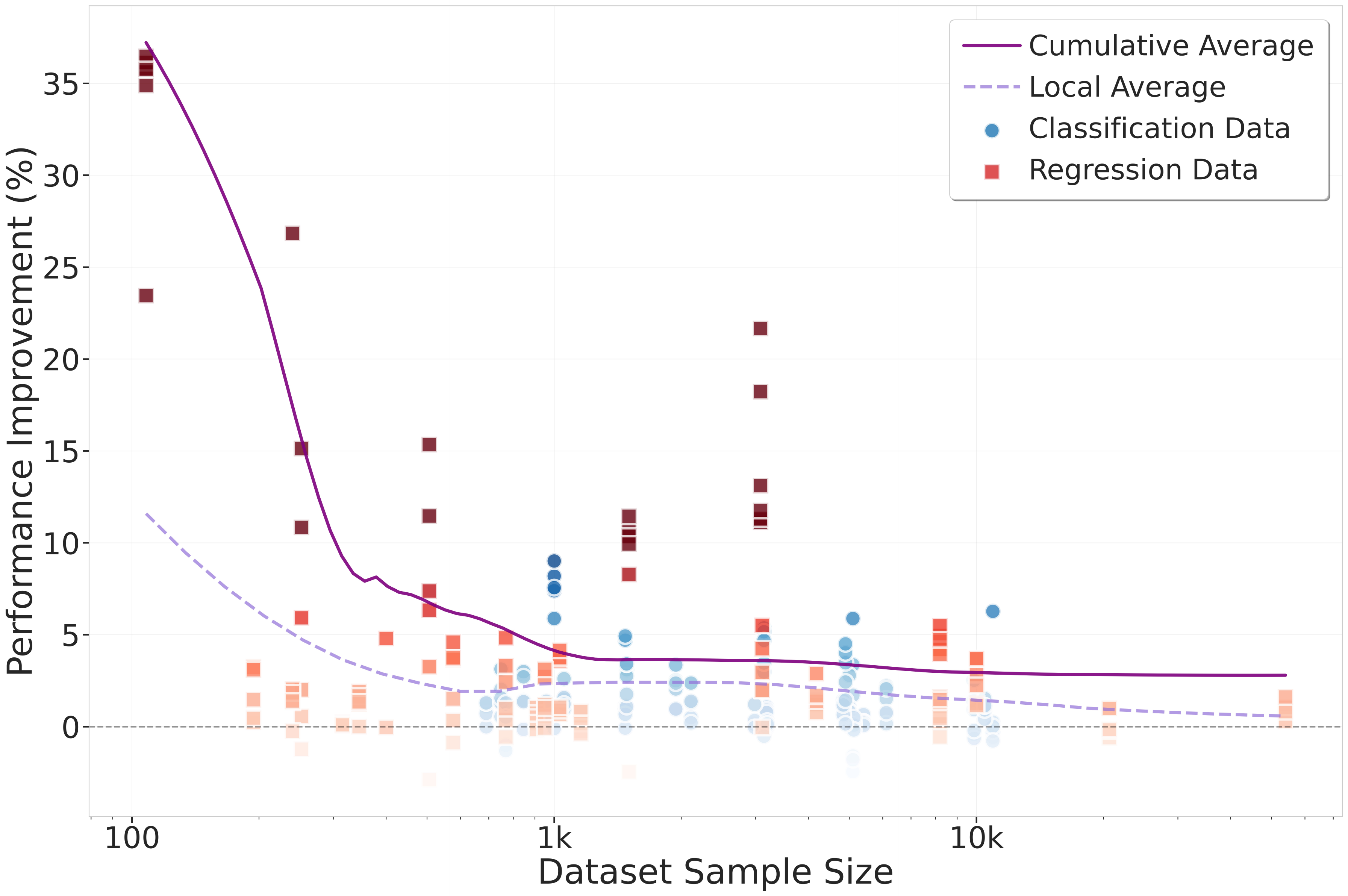}
    \caption{Tab-PET improvement vs. dataset size on transformer models (datasets $>$100 samples).}
    \label{fig:pe_size_analysis}
\end{figure}

The analysis reveals two key observations. First, Tab-PET provides larger performance improvements on smaller datasets, where the inductive bias introduced by PEs is particularly valuable for learning effective representations with limited training examples. This aligns with the intuition that explicit structural information becomes more critical when data is scarce. Second, despite the diminishing relative gains, the average improvement remains consistently positive as dataset size increases, demonstrating that Tab-PET's benefits extend across the full spectrum of data scales.

\subsection{Statistical Significance Analysis} \label{app:8.6}

To rigorously assess the statistical significance of Tab-PET's performance improvements, we conduct Wilcoxon signed-rank tests comparing Tab-PET variants against their corresponding baseline transformer models without PEs.

We report three key metrics for each architecture: (1) Mean Improvement\%, which quantifies the average performance gain achieved by Tab-PET; (2) Positive Rate\%, indicating the percentage of datasets where Tab-PET outperforms the baseline; and (3) P-values from Wilcoxon signed-rank tests, where values below 0.05 indicate statistically significant improvements at the 95\% confidence level.
The results in Table~\ref{tab:pvalue} demonstrate that Tab-PET achieves statistically significant improvements across all transformer architectures and both task types, with all p-values well below the 0.05 threshold.

\begin{table}[H]
\centering
% \scriptsize
% \setlength{\tabcolsep}{4pt}
\renewcommand{\arraystretch}{1}
\begin{tabular}{l!{\vrule}cc!{\vrule}cc!{\vrule}cc}
\toprule
\multirow{2}{*}{Method} & \multicolumn{2}{c}{Mean Improv.\%} & \multicolumn{2}{c}{Positive Rate\%} & \multicolumn{2}{c}{P-value} \\
\cmidrule{2-7}
& C & R & C & R & C & R \\
\midrule
FT-Transformer & 1.72 & 4.34 & 96 & 88 & 1.19e-07 & 3.28e-06 \\
SAINT & 1.24 & 3.75 & 84 & 72 & 5.39e-05 & 6.90e-03 \\
TabTransformer & 1.03 & 2.78 & 100 & 100 & 3.90e-03 & 1.56e-02 \\
\midrule
Overall & 1.41 & 3.89 & 91.5 & 82.5 & 2.03e-10 & 3.50e-08 \\
\bottomrule
\end{tabular}
\caption{Statistical significance of Tab-PET improvements over baseline transformers. Mean Improv.\% shows average performance gains. Positive Rate\% indicates percentage of datasets with positive improvements. P-values from Wilcoxon signed-rank tests assess statistical significance. C = Classification (25, 25, 9 datasets for FT-Transformer/SAINT/TabTransformer), R = Regression (25, 25, 7 datasets for FT-Transformer/SAINT/TabTransformer).}
\label{tab:pvalue}
\end{table}

\newpage
\section{Proofs of Theoretical Results}\label{app:9}

\setcounter{theorem}{0}

\begin{theorem}[Effective Rank under Random Inputs]
    Let \( x \in \mathbb{R}^d \) be an input vector to a single-layer, single-head FT-Transformer with components \( x_i \sim \text{i.i.d.} \) and \( x_i \in (0,1) \). Let \( d_T \) denote the token dimension (inclusive of concatenated position encodings). Let \( q\in \mathbb{R}^{d_T} \) denote the learnable CLS token embedding, and \( p_i \in \mathbb{R}^{d_p} \) be the positional encodings for each input dimension. Assume the scaled positional encodings \( p'_i = \alpha p_i \) are used, where \( \alpha > 0 \).  Given the query, key and value matrices $Q,K,V$, where $K$ can be decomposed as $[K_x;K_p]$, where $K_x \in \mathbb{R}^{d \times d_T}, K_p \in \mathbb{R}^{d_p \times d_T}$, and similarly for $V$. suppose the following conditions hold: $\max_i \langle Q^Tq,K_p^Tp_i \rangle - \max_{j \neq i} \langle  Q^Tq,K_p^Tp_j \rangle = \tau,$ and the norm of the query-key matrices $Q,K$ and CLS embedding $q$ are all bounded by $c_Q,c_K$ and $c_q$ respectively. Lastly, assume that the tokenizer weights $w_i$ have the same norm and the value matrix $V$ is norm preserving and satisfies $V_p = 0$.
Define
\begin{equation}
C_\alpha = \exp\left( \frac{\alpha \tau - 2 c_Kc_Q c_q }{\sqrt{d_T}} \right).
\end{equation}
Then the effective rank \( r_{\mathrm{eff}} \) of the CLS token output after self-attention satisfies
\begin{equation}
r_{\mathrm{eff}} \leq \left( C_\alpha + d \right) \cdot \exp\left( -\frac{C_\alpha}{C_\alpha + d} \cdot \log C_\alpha \right).
\end{equation}
In the regime where \( C_\alpha \gg d \), this simplifies to $r_{\mathrm{eff}} \approx 1 + \frac{d}{C_\alpha}.$
\end{theorem}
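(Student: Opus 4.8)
The plan is to reduce the effective-rank bound to a statement about the Shannon entropy of the CLS token's attention distribution, and then to control that entropy using the assumed positional gap. The guiding observation is that, under the stated value-matrix hypotheses, the pooled CLS-output matrix inherits the attention-weight profile, so that $r_{\mathrm{eff}}(\mathbf{X})=\exp(H(a))$ where $H(a)$ is the entropy of the $(d+1)$-point attention vector (the $d$ feature tokens together with the CLS token), and the whole theorem becomes an entropy-maximization problem over a constrained simplex.

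First I would unpack the attention logits. Writing each feature token as the concatenation of a content part $w_i x_i$ and a scaled positional part $\alpha p_i$, the CLS logit for token $i$ splits as $\tfrac{1}{\sqrt{d_T}}\left(\langle Q^{\top}q, K_x^{\top} w_i x_i\rangle + \alpha\langle Q^{\top}q, K_p^{\top}p_i\rangle\right)$. The content term is bounded in magnitude by $c_Q c_K c_q$ via the operator-norm hypotheses together with $x_i\in(0,1)$ and the common tokenizer norm, while the positional term carries the gap $\tau$ amplified by $\alpha$. Subtracting the worst-case content swing $2c_Qc_Kc_q$ from the amplified positional gap $\alpha\tau$ and dividing by $\sqrt{d_T}$ leaves a logit gap of exactly $\log C_\alpha$ between the dominant token and every competitor; exponentiating through the softmax gives the pointwise ratio bound $a_{\mathrm{dom}}/a_j \ge C_\alpha$ for all $j$.

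Next I would convert these weight ratios into a spectral statement. Because $V_p=0$, $V$ is norm-preserving, and the tokenizer weights are equinorm, each value vector reduces to $x_i^{(s)}u_i$ with $u_1,\dots,u_d$ sharing a common norm, so the sample-$s$ output is $\sum_i a_i^{(s)} x_i^{(s)} u_i$. Stacking over the $n$ i.i.d.\ samples factors the embedding matrix as $\mathbf{X}=MU^{\top}$ with $M_{s,i}=a_i^{(s)}x_i^{(s)}$, and I would argue the singular values of $\mathbf{X}$ are proportional to the weights $a_i$, giving $\tilde\sigma_i=a_i$ and $r_{\mathrm{eff}}(\mathbf{X})=\exp(H(a))$. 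It then remains to maximize $H(a)$ over the simplex subject to $a_{\mathrm{dom}}\ge C_\alpha a_j$: since entropy is concave and the feasible region convex, the maximizer sits on the boundary $a_{\mathrm{dom}}=C_\alpha a_j$, and symmetry forces the other $d$ weights equal, yielding $a_{\mathrm{dom}}=C_\alpha/(C_\alpha+d)$ and $a_j=1/(C_\alpha+d)$. Substituting gives $H=\log(C_\alpha+d)-\tfrac{C_\alpha}{C_\alpha+d}\log C_\alpha$, whose exponential is the stated bound, and a first-order expansion in $d/C_\alpha$ for $C_\alpha\gg d$ recovers $r_{\mathrm{eff}}\approx 1+d/C_\alpha$.

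The hard part will be the middle step: rigorously passing from attention-weight ratios to the singular-value ratios of the pooled matrix $\mathbf{X}$. This is precisely where the structural hypotheses must be used — $V_p=0$ and norm preservation make the value directions equinorm, equal tokenizer norms keep their scales matched, and the i.i.d.\ assumption (after appropriate centering and concentration) must render the feature columns of $M$ asymptotically uncorrelated with a common scale, while the uniform bound on the content logits must be invoked to argue the per-sample weights $a_i^{(s)}$ are stable across $s$. Without these, $\sigma_i$ would track an entangled combination of weights and value-space geometry rather than the clean proportionality $\sigma_i\propto a_i$ that the entropy reduction requires.
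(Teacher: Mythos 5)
Your proposal follows essentially the same route as the paper's proof: decompose the CLS logit into a content part bounded by $c_Q c_K c_q$ and an $\alpha$-amplified positional part carrying the gap $\tau$, obtain the softmax ratio bound $C_\alpha$, identify the effective rank with $\exp(H(a))$ of the attention distribution under the equinorm/orthogonality hypotheses, and evaluate the entropy at the configuration with one dominant weight and $d$ equal competitors. Your concavity-of-entropy argument for why the worst case sits on the boundary $a_{\mathrm{dom}}=C_\alpha a_j$ with the remaining weights equal is in fact a cleaner justification of a step the paper simply assumes, and your flagged "hard part" (passing from weight ratios to singular values of the pooled matrix) is exactly the step the paper also handles only by assumption rather than by a concentration argument.
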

\begin{proof}
We analyze the attention mechanism applied to the CLS token in a Transformer. Let the attention score for token \( i \) be:
\begin{equation}
s_i = \langle Q^\top q, K^\top [x_i; p'_i] \rangle,
\end{equation}
where \( x_i \) is the token embedding, \( p'_i \) is the positional encoding, and \( q \) is the query vector for the CLS token.

We decompose the key matrix \( K \) into two components:
\begin{equation}
K^\top [x_i; p'_i] = K_x^\top x_i w_i + K_p^\top p'_i,
\end{equation}
where \( w_i \) is a learned token weight vector. Assume:
 \( \|x_i\| \leq 1 \),
 \( \|w_i\| = 1 \),
 \( \|K_x\| \leq c_K \),
 \( \|K_p\| \leq c_K \),
 \( \|Q\| \leq c_Q \),
 \( \|q\| \leq c_q \). Let \( i^* = \arg\max_i \langle Q^\top q, K_p^\top p'_i \rangle \), and let \( j \neq i^* \) be the second-largest. Note that
$\langle Q^\top q, K_p^\top p'_{i^*} \rangle - \langle Q^\top q, K_p^\top p'_j \rangle = \alpha \tau.$ Now, the full attention score difference is:
$s_{i^*} - s_j = \langle Q^\top q, K_x^\top x_{i^*} w_{i^*} + K_p^\top p'_{i^*} \rangle - \langle Q^\top q, K_x^\top x_j w_j + K_p^\top p'_j \rangle.$

Bounding the key-query interaction terms, we obtain
$\left| \langle Q^\top q, K_x^\top x_i w_i \rangle \right| \leq \|Q^\top q\| \cdot \|K_x^\top x_i w_i\| \leq c_Q c_q \cdot c_K  c_w.$ So the difference in key-query terms is bounded by:
$\langle Q^\top q, K_x^\top x_{i^*} w_{i^*} \rangle - \langle Q^\top q, K_x^\top x_j w_j \rangle \geq - 2 c_K c_Q c_q.$ Thus, the total score difference satisfies:
$s_{i^*} - s_j \geq \alpha \tau - 2 c_K c_Q c_q.$

Let \( d_T \) be the temperature scaling factor in softmax. Then the ratio of attention weights is:
\begin{equation}
\frac{\alpha_{i^*}}{\alpha_j} = \exp\left( \frac{s_{i^*} - s_j}{\sqrt{d_T}} \right) \geq \exp\left( \frac{\alpha \tau - 2 c_K c_Q c_q c_w}{\sqrt{d_T}} \right) =: C_\alpha.
\end{equation}

Assuming all other attention weights \( \alpha_j \) for \( j \neq i^* \) are equal, we solve for normalized weights:
\begin{equation}
\alpha_{i^*} = \frac{C_\alpha}{C_\alpha + d}, \quad \alpha_j = \frac{1}{C_\alpha + d}, \quad \text{for } j \neq i^*.
\end{equation}

Let \( v_i = V^\top [x_i; p_i] \) be the value vector corresponding to token \( i \). By the assumptions in the theorem, the value matrix \( V \) is norm-preserving, and the learned token weights \( w_i \) have equal norm. Since the variation across tokens is introduced by the embeddings \( x_i \), and all other components are bounded and uniform, we assume:
\begin{equation}
\|v_i\| = c_v \quad \text{for all } i,
\end{equation}
where \( c_v \) is constant across tokens.

The output of the CLS token is:
\begin{equation}
\text{CLS}_{\text{out}} = \sum_{i=1}^d \alpha_i x_i v_i,
\end{equation}
where \( x_i \) introduces directional variation, and \( \alpha_i \) are the attention weights.

To quantify the diversity of directions represented in this output, we use the notion of effective rank. When the vectors \( x_i v_i \) are orthogonal and have equal norm, the effective rank is defined as:
\begin{equation}
r_{\mathrm{eff}} := \exp(H(\alpha)),
\end{equation}
where \( H(\alpha) \) is the Shannon entropy of the attention weights:
\begin{equation}
H(\alpha) = -\sum_{i=1}^d \alpha_i \log \alpha_i.
\end{equation}

This definition captures the number of effectively used directions in the weighted sum, assuming equal energy and orthogonality. Orthogonality maximizes the entropy-based rank because it ensures that each direction contributes independently to the output. Using the earlier solution for the attention weights:
\begin{equation}
\alpha_{i^*} = \frac{C_\alpha}{C_\alpha + d}, \quad \alpha_j = \frac{1}{C_\alpha + d} \quad \text{for } j \neq i^*,
\end{equation}
we compute the entropy:
\begin{equation}
H(\alpha) = -\frac{C_\alpha}{C_\alpha + d} \log \frac{C_\alpha}{C_\alpha + d} - \frac{d}{C_\alpha + d} \log \frac{1}{C_\alpha + d}.
\end{equation}

Subsequently, the effective rank becomes:
\begin{equation}
r_{\mathrm{eff}} = \exp(H(\alpha)) = \left( C_\alpha + d \right) \cdot \exp\left( -\frac{C_\alpha}{C_\alpha + d} \log C_\alpha \right).
\end{equation}

In the regime \( C_\alpha \gg d \), we expand this expression to obtain:
\begin{equation}
r_{\mathrm{eff}} \approx 1 + \frac{d}{C_\alpha}.
\end{equation}

This completes the proof. 

\end{proof}

\begin{theorem}[Effective Rank under Structured Inputs]
Consider the same setting as in Theorem 1, except that the input vector \( x \in \mathbb{R}^d \) is structured as follows: \( d \) is even, and
\begin{equation}
x_i = 
\begin{cases}
\theta & \text{for } i \leq d/2, \\
\theta'  & \text{for } i > d/2,
\end{cases}
\end{equation}
with shared latent variables \( \theta, \theta' \in (0,1) \), and coefficients \( \beta_i, \gamma_i \in \mathbb{R} \). Then the effective rank \( r_{\mathrm{eff}} \) of the CLS token output after self-attention satisfies:

\begin{itemize}
    \item \textbf{(a) Random positional encodings:}
\begin{equation}
    r_{\mathrm{eff}} \leq \left( 2C_\alpha + d \right) \cdot \exp\left( -\frac{2C_\alpha \log(2C_\alpha) + d \log d}{2C_\alpha + d} \right),
\end{equation}
    $\text{which simplifies to } r_{\mathrm{eff}} \approx 1 + \frac{d}{2C_\alpha} \text{ when } C_\alpha \gg d.$

    \item \textbf{(b) Shared positional encodings within groups:}
    If \( p_i \) is fixed for all \( i \leq d/2 \), and is a different fixed vector for \( i > d/2 \), then
\begin{equation}
    r_{\mathrm{eff}} \leq (C_\alpha + 1) \cdot \exp\left( -\frac{C_\alpha}{C_\alpha + 1} \cdot \log C_\alpha \right),
\end{equation}
    
 $\text{which simplifies to } r_{\mathrm{eff}} \approx 1 + \frac{1}{C_\alpha} \text{ for large } C_\alpha.$

\end{itemize}
\end{theorem}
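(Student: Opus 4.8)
The plan is to mirror the argument of Theorem 1, reusing the attention-score decomposition and the lower bound on the dominant attention ratio, and then re-derive the entropy of the attention distribution under the two-group input structure. First I would write the CLS attention score for token $i$ as $s_i = \langle Q^\top q, K_x^\top x_i w_i + K_p^\top p_i'\rangle$ and, exactly as before, bound the gap between the top PE-aligned score and the runner-up by $\alpha\tau - 2c_Kc_Qc_q$, so that the ratio of the largest to any sub-dominant softmax weight is again at least $C_\alpha = \exp((\alpha\tau - 2c_Kc_Qc_q)/\sqrt{d_T})$. The novelty relative to Theorem 1 is that the value contributions $x_i v_i$ now inherit the block structure $x_i \in \{\theta, \theta'\}$, so I would track how many distinct output directions survive and how the attention mass distributes across them before invoking $r_{\mathrm{eff}} = \exp(H(\alpha))$.

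For part (b) I would exploit that shared within-group PEs make every token in a block have an identical score (same $x_i$, same $p_i$, equal-norm $w_i$) and an identical value direction, so each of the two blocks collapses to a single effective token. The $d/2$ multiplicities cancel in the softmax normalization, leaving a two-point distribution in which the better-aligned block carries weight proportional to $C_\alpha$ and the other to $1$; substituting $\alpha_1 = C_\alpha/(C_\alpha+1)$ and $\alpha_2 = 1/(C_\alpha+1)$ into $H(\alpha) = -\sum_i \alpha_i\log\alpha_i$ and exponentiating yields the claimed $(C_\alpha+1)\exp(-\tfrac{C_\alpha}{C_\alpha+1}\log C_\alpha)$, with the $1+1/C_\alpha$ asymptotics following from a first-order expansion in $1/C_\alpha$.

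For part (a) the random PEs break the within-block score ties, so I would instead bound the entropy from above by the most-spread attention profile consistent with the ratio bound: a dominant mass scaling like $2C_\alpha$ set against the aggregated background of the remaining $d$ tokens carrying total mass $d$, which reproduces $H = \log(2C_\alpha + d) - (2C_\alpha\log(2C_\alpha) + d\log d)/(2C_\alpha + d)$ and hence the stated bound together with its $1 + d/(2C_\alpha)$ limit. Here I would carefully justify the factor of two in the leading mass as arising from both blocks admitting a well-aligned representative under random encodings, and use a concavity/monotonicity argument to certify that this profile upper-bounds the true entropy of the attention weights.

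The hard part will be making the ``effective rank $=\exp(H)$'' step rigorous when the output directions are no longer orthogonal with equal norm, as was assumed cleanly in Theorem 1: under the block structure the within-group directions repeat or become near-collinear, so I would need to argue that grouping the attention weights by output direction and then upper-bounding the resulting entropy gives a valid upper bound on $r_{\mathrm{eff}}$, and to control the cross terms introduced by the coefficients $\beta_i, \gamma_i$ so the equal-energy assumption used to identify $r_{\mathrm{eff}}$ with $\exp(H)$ is not violated. Pinning down the exact combinatorial constants (the $2C_\alpha$ and $d$ split in (a), and the cancellation of the $d/2$ multiplicities in (b)) is the delicate bookkeeping I expect to occupy most of the argument.
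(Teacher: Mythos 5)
Your proposal follows essentially the same route as the paper: reuse the $C_\alpha$ attention-ratio bound from Theorem 1, observe that the structured inputs confine the CLS output to the span of the two latent directions, aggregate the attention mass by direction, and exponentiate the resulting two-point entropy (with the $d/2$ block multiplicities cancelling in part (b) exactly as you describe). The only cosmetic difference is your justification of the factor of $2$ in part (a) --- the paper obtains it simply by normalizing a single dominant token against the $d/2$ tokens of the opposite block (so $\alpha_{i^*}=2C_\alpha/(2C_\alpha+d)$, $\alpha_j=2/(2C_\alpha+d)$), rather than by positing a well-aligned representative in each block --- but the resulting aggregate distribution, entropy, and bound are identical, and the paper likewise leaves the ``$r_{\mathrm{eff}}=\exp(H)$ under non-orthogonal, repeated directions'' step at the level of an assumption rather than resolving the concern you flag.
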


\begin{proof}

\textbf{[Proof of Theorem 2, Part (a)]}

We build on the analysis from Theorem 1, where the attention score for token \( i \) is given by:
\begin{equation}
s_i = \langle Q^\top q, K^\top [x_i; p'_i] \rangle,
\end{equation}
and the attention weight ratio is controlled by the score difference:
\begin{equation}
\frac{\alpha_{i^*}}{\alpha_j} \geq \exp\left( \frac{\alpha \tau - 2 c_K c_Q c_q}{\sqrt{d_T}} \right) =: C_\alpha.
\end{equation}

This term \( C_\alpha \) captures the exponential separation between the top attention score and the others.

Now consider the structured input setting where the input vector \( x \in \mathbb{R}^d \) is defined as:
\begin{equation}
x_i = 
\begin{cases}
\theta & \text{for } i \leq d/2, \\
\theta' & \text{for } i > d/2,
\end{cases}
\end{equation}
with shared latent variables \( \theta, \theta' \in (0,1) \). This induces a rank-2 structure in the token embeddings, meaning that the directions \( x_i v_i \) span at most two distinct subspaces.

Since the value matrix \( V \) extracts only from the token embeddings and is norm-preserving, the output of the CLS token is:
\begin{equation}
\text{CLS}_{\text{out}} = \sum_{i=1}^d \alpha_i x_i v_i,
\end{equation}
where \( x_i v_i \) lies in the span of \( \theta \) and \( \theta' \). Thus, the output lies in a 2D subspace, and the entropy-based effective rank is upper-bounded by the number of distinct directions, which is at most two.

To maximize entropy under this constraint, we assign the maximum attention weight to a single token in the first half (say, token \( i^* \leq d/2 \)), and distribute the remaining attention equally among all tokens in the second half. This ensures that both latent directions \( \theta \) and \( \theta' \) contribute to the output.

Let the maximum attention weight be \( \alpha_{i^*} \), and let the remaining \( d/2 \) tokens in the second half each receive attention weight \( \alpha_j \). Then:
\begin{equation}
\alpha_{i^*} = \frac{2C_\alpha}{2C_\alpha + d}, \quad \alpha_j = \frac{2}{2C_\alpha + d} \quad \text{for } j > d/2.
\end{equation}

The entropy of the attention distribution over the two dominant directions is:
\begin{equation}
H(\alpha) = -\frac{2C_\alpha}{2C_\alpha + d} \log \left( \frac{2C_\alpha}{2C_\alpha + d} \right)
- \frac{d}{2C_\alpha + d} \log \left( \frac{d}{2C_\alpha + d} \right).
\end{equation}

Thus, the effective rank becomes:
\begin{equation}
r_{\mathrm{eff}} = \exp(H(\alpha)) = \left( 2C_\alpha + d \right) \cdot \exp\left( 
-\frac{2C_\alpha}{2C_\alpha + d} \log(2C_\alpha) 
- \frac{d}{2C_\alpha + d} \log d 
\right).
\end{equation}

In the regime \( C_\alpha \gg d \), we expand this expression to obtain:
\begin{equation}
r_{\mathrm{eff}} \approx 1 + \frac{d}{2C_\alpha}.
\end{equation}

\noindent\textbf{[Proof of Theorem 2, Part (b)]}
In this setting, the input vector \( x \in \mathbb{R}^d \) is structured as:
\begin{equation}
x_i = 
\begin{cases}
\theta & \text{for } i \leq d/2, \\
\theta' & \text{for } i > d/2,
\end{cases}
\end{equation}
and the positional encodings are shared within each half:
\begin{equation}
p_i = 
\begin{cases}
p & \text{for } i \leq d/2, \\
p' & \text{for } i > d/2.
\end{cases}
\end{equation}

As a result, all tokens in the same half have identical attention scores. Let \( s_L \) and \( s_R \) denote the scores for the left and right halves:
$s_L = \langle Q^\top q, K^\top [\theta; \alpha p] \rangle, \quad s_R = \langle Q^\top q, K^\top [\theta'; \alpha p'] \rangle.$

Then the total attention mass assigned to each half is:
\begin{equation}
\alpha_L = \sum_{i \leq d/2} \alpha_i, \quad \alpha_R = \sum_{i > d/2} \alpha_i.
\end{equation}

By the same argument as in Theorem 1, the ratio of these total weights satisfies:
\begin{equation}
\frac{\alpha_L}{\alpha_R} \geq \exp\left( \frac{\alpha \tau - 2 c_K c_Q c_q}{\sqrt{d_T}} \right) = C_\alpha.
\end{equation}

Assuming normalization over all \( d \) tokens, we solve:
\begin{equation}
\alpha_L = \frac{C_\alpha}{C_\alpha + 1}, \quad \alpha_R = \frac{1}{C_\alpha + 1}.
\end{equation}

Since the output lies in the span of \( \theta \) and \( \theta' \), the effective rank is determined by the entropy over the two aggregated directions. Thus, we compute:
$H(\alpha) = -\alpha_L \log \alpha_L - \alpha_R \log \alpha_R.$

Substituting the expressions:
\begin{equation}
H(\alpha) = -\frac{C_\alpha}{C_\alpha + 1} \log \left( \frac{C_\alpha}{C_\alpha + 1} \right)
- \frac{1}{C_\alpha + 1} \log \left( \frac{1}{C_\alpha + 1} \right).
\end{equation}

Then the effective rank is:
\begin{equation}
r_{\mathrm{eff}} = \exp(H(\alpha)) = (C_\alpha + 1) \cdot \exp\left( -\frac{C_\alpha}{C_\alpha + 1} \cdot \log C_\alpha \right).
\end{equation}

In the regime \( C_\alpha \gg 1 \), this simplifies to:
\begin{equation}
r_{\mathrm{eff}} \approx 1 + \frac{1}{C_\alpha}.
\end{equation}

\end{proof}

\end{document}